%% file: iclr2025_conference.tex
\documentclass{article} 
\usepackage{iclr2026_conference,times}

\usepackage{booktabs}       
\usepackage{amsfonts}       
\usepackage{nicefrac}       
\usepackage{microtype}      
\usepackage{enumitem}
\usepackage{makecell}

\usepackage{algorithm,algorithmicx,algpseudocode}
\usepackage{algcompatible}

\usepackage{color,soul}
\usepackage{etoc}
\usepackage{verbatim}
\usepackage{fancyvrb}
\usepackage[T1]{fontenc}

\usepackage{nicematrix}

\usepackage{tabularx}

\usepackage{hyperref}   
\usepackage{cleveref}

\input{math_commands.tex}

\usepackage{url}
\usepackage{amsmath, amssymb, amsthm}
\usepackage{geometry}
\usepackage{algorithm, algpseudocode}
\usepackage{cancel}
\usepackage{color}
\usepackage{caption} 
\usepackage{booktabs}       
\usepackage{multirow}
\usepackage{array}
\usepackage{wrapfig}
\allowdisplaybreaks

\newcommand{\red}[1]{\textcolor{red}{#1}}

\newcommand{\blue}[1]{\textcolor{blue}{#1}}

\newcommand{\MRDLM}{MRDLM}
\newcommand{\mask}{\texttt{<MASK>}}
\theoremstyle{definition}

\newtheorem{lemma}{Lemma}
\newtheorem{proposition}{Proposition}

\usepackage{listings}
\usepackage{xcolor} 
\usepackage{comment}
\usepackage{enumitem}
\usepackage{subcaption}
\definecolor{codegreen}{rgb}{0,0.6,0}
\definecolor{codegray}{rgb}{0.5,0.5,0.5}
\definecolor{codepurple}{rgb}{0.58,0,0.82}
\definecolor{backcolour}{rgb}{0.95,0.95,0.92}

\lstdefinestyle{mystyle}{
    backgroundcolor=\color{backcolour},
    commentstyle=\color{codegreen},
    keywordstyle=\color{magenta},
    numberstyle=\tiny\color{codegray},
    stringstyle=\color{codepurple},
    basicstyle=\ttfamily\footnotesize,
    breakatwhitespace=false,
    breaklines=true,
    captionpos=b,
    keepspaces=true,
    numbers=left,
    numbersep=5pt,
    showspaces=false,
    showstringspaces=false,
    showtabs=false,
    tabsize=2
}

\newcommand{\EE}{\mathbb{E}}
\newcommand{\PP}{\mathbb{P}}

\newcommand{\limDt}{\underset{\Delta t \rightarrow 0}{\text{lim}}}
\newcommand{\Dt}{\Delta t}
\newcommand{\bsig}{\bar\sigma}

\newcommand{\ours}{DID}

\title{Beyond Masks: Efficient, Flexible Diffusion Language Models via Deletion-Insertion Processes}

\author{
Fangyu Ding$^{1,2,}$\thanks{Equal contribution.}
\quad
Ding Ding$^{1,2,*}$
\quad
Sijin Chen$^{3,*}$
\quad
Kaibo Wang$^{1,2}$
\quad
Peng Xu$^{2}$
\quad
Zijin Feng$^{2}$
 \\
\textbf{ Haoli Bai}$^{2}$
\quad
\textbf{Kai Han}$^{2}$
\quad
\textbf{Youliang Yan}$^{2}$
\quad
\textbf{Binhang Yuan}$^{1,\dag}$
\quad
\textbf{Jiacheng Sun}$^{2,}$\thanks{Correspondence to 
\{biyuan@ust.hk, sunjiacheng1@huawei.com\}.} \\ 
\,$^1$HKUST\quad 
$^2$Huawei Foundation Model Dept\quad 
$^3$CUHK
}

\newcommand{\sj}[1]{\textcolor{blue}{#1}}

\newtheorem{theorem}{Theorem}

\iclrfinalcopy 
\begin{document}

\maketitle

\begin{abstract}
While Masked Diffusion Language Models (MDLMs) relying on token masking and unmasking have shown promise in language modeling, their computational efficiency and generation flexibility remain constrained by the masking paradigm.
In this paper, 
we propose Deletion-Insertion Diffusion language models ({\ours{}})
that rigorously formulate token deletion and insertion as discrete diffusion processes, replacing the masking and unmasking processes in current MDLMs. 
\ours{} improves training and inference {efficiency} by 
eliminating two major sources of computational overhead 
in MDLMs: the computations on non-informative {1)} \texttt{<MASK>} tokens inherent to the paradigm, and {2)} \texttt{<PAD>} tokens introduced in variable-length settings. 
Furthermore, \ours{} offers greater {flexibility} by: 
{1)} natively supporting variable-length sequences 
without requiring fixed-length padding,
and {2)} an intrinsic self-correction mechanism during generation due to insertion that dynamically adjusts token positions.
To train \ours{}, we design a score-based approach that assigns scores to token insertion operations and derive appropriate training objectives.
The objectives involve subsequence counting problems, which we efficiently solve via a parallelized dynamic programming algorithm.
Our experiments across fixed and variable-length settings demonstrate the advantage of \ours{} over baselines of MDLMs and existing insertion-based LMs, in terms of modeling performance, sampling quality, and training/inference speed, without any hyperparameter tuning.

\end{abstract}

\input{introduction}

\input{background}

\input{method}

\input{method-part2}
\input{experiments}
\input{conclusion}





\newpage

\appendix
\tableofcontents
\newpage
\input{app/limitations}
\input{app/llm}
\input{app/notations}

\input{app/proofs}
\input{app/impl_details}

\input{app/exp_details}
\input{app/more_results}
\input{app/more_results2}
\input{app/examples}

\end{document}

%% file: math_commands.tex
\usepackage{amsmath,amsfonts,bm}

\def\eqref#1{equation~\ref{#1}}

\def\1{\bm{1}}

\def\vx{{\bm{x}}}
\def\vy{{\bm{y}}}
\def\vz{{\bm{z}}}

\DeclareMathAlphabet{\mathsfit}{\encodingdefault}{\sfdefault}{m}{sl}
\SetMathAlphabet{\mathsfit}{bold}{\encodingdefault}{\sfdefault}{bx}{n}

\newcommand{\pdata}{p_{\rm{data}}}

\newcommand{\E}{\mathbb{E}}



%% file: introduction.tex
\section{Introduction}

Diffusion language models (DLMs)~\citep{austin2021structured, campbell2022continuous, lou2024discrete} have rapidly emerged as a powerful paradigm for language modeling, offering a compelling alternative to the dominant autoregressive (AR) approach. They offer distinct advantages, including bidirectional context modeling and the potential for parallel decoding. 
Within this domain, a body of work on Masked Diffusion Language Models (MDLMs)~\citep{nie2024scaling, nie2025large,ou2025your, sahoo2024simple, shi2024simplified} is the most widely studied. These models operate through a forward process that progressively corrupts each token into an absorbing state \texttt{<MASK>} and a backward process that reconstructs the original sequence by iteratively unmasking tokens from a fully masked sequence,  with a fixed sequence length during the diffusion process.



Despite their success, MDLMs are fundamentally limited by their fixed sequence length. The first issue lies in their restricted generation flexibility, which leads to challenges in modeling variable lengths and performing self-correction: once a token is unmasked, its content and position become fixed, thereby risking error accumulation in a similar sense to autoregressive models.
The second issue is their substantial computational inefficiency, as the model must repeatedly process full-length sequences. Under the typical log-linear noise schedule, about half of the FLOPs are allocated to the non-informative \texttt{<MASK>} tokens during both training and inference. Further, if MDLMs are applied to variable-length sequences, their fixed-length nature demands padding to the same length~\citep{nie2024scaling,nie2025large,Dreamon2025,gong2024scaling} (Fig.~\ref{fig:mdlm}), allocating extra FLOPs to the non-informative \texttt{<PAD>} tokens. This means generating a shorter sequence is not faster.

\begin{figure}[t]
    \centering
    \begin{subfigure}[t]{0.54\textwidth}
        \includegraphics[width=\textwidth]{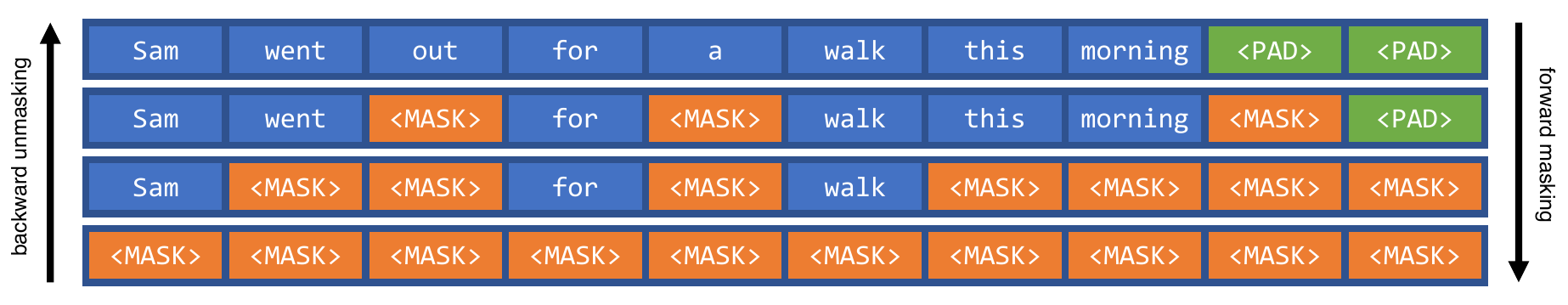}
        \caption{MDLMs, sequences padded to length 10.}
        \label{fig:mdlm}
    \end{subfigure}
    \hfill
    \begin{subfigure}[t]{0.45\textwidth}
        \includegraphics[width=\textwidth]{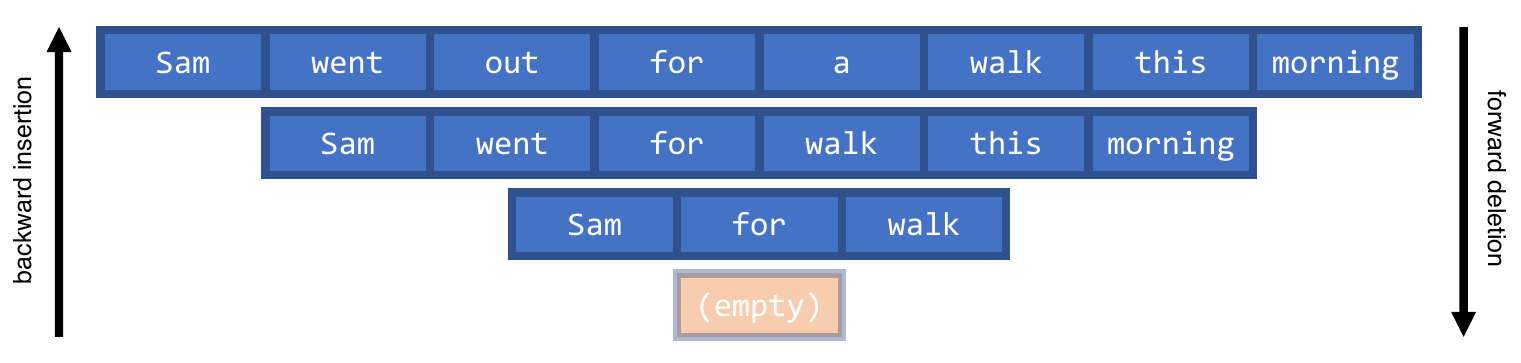}
        \caption{DID.}
        \label{fig:did}
    \end{subfigure}
    \caption{Conceptual diagram of MDLMs compared to  Deletion-Insertion Diffusion language models (DID).
    }
    \label{fig:didmdlm}
\end{figure}
To address these issues, we propose Deletion-Insertion Diffusion language models (\ours{}), a novel discrete diffusion paradigm that fundamentally differs from MDLMs.
\ours{} replaces the masking-unmasking processes in MDLMs with deletion-insertion processes (Fig.~\ref{fig:did}). Specifically, tokens are progressively deleted in the forward process until the sequence is empty; in the backward process, generation starts from an empty sequence and iteratively inserts tokens until a complete sequence is reconstructed.
\ours{} eliminates the \texttt{<MASK>} and \texttt{<PAD>} tokens used in MDLMs, saving FLOPs and improving computational efficiency.
Regarding generation flexibility, \ours{} natively supports variable-length data, and as an insertion-based language model, features an intrinsic self-correction mechanism that dynamically adjusts token positions during generation.

We implement \ours{} by addressing several non-trivial design and training challenges. First, we rigorously formulate the deletion and insertion processes within the discrete diffusion framework, and develop a score-based approach built upon the Denoising Score Entropy (DSE)~\citep{lou2024discrete} objective to train \ours{}. Concretely, we define an insertion score that models the probability of inserting any token at any position of a sequence at a given time interval, and derive a corresponding Denoising Insertion Score Entropy (DISE) training objective.
The DISE objective is based on a ratio of 
subsequence counts in the clean data after and before an insertion, 
which serves as the training {target} for the insertion score. 
To efficiently compute the ratio, we develop a parallelized dynamic programming algorithm that exploits data parallelism. Moreover, we demonstrate that under the fixed-length setting of MDLMs, the DISE objective can be further simplified to a form resembling cross-entropy, further improving parameterization and learning of the insertion score.


Comprehensive experiments demonstrate the effectiveness of \ours{} in enhancing efficiency and flexibility. In fixed-length language modeling benchmarks, \ours{} achieves superior performance compared to strong MDLM baselines (e.g., RADD~\citep{ou2025your}) when aligned by computational budget (FLOPs) (Tab.~\ref{tbl:zero_shot_perplexity}). Compared to MDLM baselines, \ours{} could accelerate training by up to 1.99$\times$ and 3.42$\times$ (Tab.~\ref{tab:performance_comparison},~\ref{tab:training_speed_varlen}) and inference by up to 1.58$\times$ and 3.79$\times$ (Tab.~\ref{tab:genppl},~\ref{tab:gen_varlen}), for models trained on fixed-length and variable-length datasets, respectively. Moreover, \ours{} shows strong performance in variable-length settings, outperforming MDLMs and existing insertion-based LMs in sampling quality and consistency with data length distribution (Tab.~\ref{tab:gen_varlen}, Fig.~\ref{fig:cdf}).


In summary, our main contributions are as follows:
\begin{itemize}[topsep=0pt,itemsep=0pt,leftmargin=*]
\item We propose \ours{}, a novel diffusion LM based on deletion-insertion processes that eliminates the use of \texttt{<MASK>} and  \texttt{<PAD>} tokens, improving the computational efficiency and generation flexibility of DLMs. 
\item We develop DISE, a score-based training objective, and an efficient parallel dynamic programming implementation that enables effective learning of \ours{}’s insertion process for language generation.
\item Our experiments demonstrate the superior efficiency and flexibility of \ours{} over baselines of MDLMs and other insertion-based LMs on language/length modeling, generation quality, and training/inference speed.
\end{itemize}

%% file: background.tex
\section{Preliminaries and Related Works}
\subsection{Continuous-Time Discrete Diffusion}\label{sec:CTMC}

A continuous-time discrete diffusion model consists of a forward noising process and a backward denoising process, both continuous-time Markov chains. In the forward process, the training samples are progressively corrupted into pure noise. The model aims to learn its backward process that inverts this corruption, and generate new samples by sampling through the backward process from noise.

\textbf{Continuous-time Markov chain.} We consider a discrete state space $\mathcal{X}$. A continuous-time Markov chain (CTMC) is a process $\vx_t$ on $\mathcal X$ with ${t \in [0, 1]}$, starting from an initial data distribution $p_0(\vx_0)$. A CTMC is characterized by a time-dependent transition rate matrix $Q_t \in \mathbb{R}^{|\mathcal{X}| \times |\mathcal{X}|}$. For distinct states $\vx_t, \vy \in \mathcal{X}$, $Q_t(\vx_t, \vy) \ge 0$ defines the instantaneous transition rate from $\vx_t$ to $\vy$. This means, at an infinitesimal time interval $[t,t+\Delta t]$, the transition probability is given by:
\begin{equation}
\label{eq:infinitesimal_transition}
    p_{t+\Delta t|t}(\vy|\vx_t) = \delta(\vx_t, \vy) + Q_t(\vx_t, \vy) \Delta t,
\end{equation}
where $\delta$ is the Kronecker delta.
In other words, 
the evolution of the marginal distribution $p_t\in\Delta_{|\mathcal X|}$ follows the Kolmogorov forward equation $\frac{d p_t}{dt} = p_t Q_t$. Note that the diagonal entries of $Q_t$ should satisfy $Q_t(\vx_t, \vx_t) = - \sum_{\vx_t \neq \vy} Q_t(\vx_t, \vy)$ to ensure the weights of $p_t$ add up to $1$. 

\textbf{Determining the forward process.} In the forward process, a common parameterization~\citep{campbell2022continuous} of the transition rate matrix is $Q_t = \sigma(t)Q$, where $\sigma(t)$ is a scalar noise schedule and $Q$ is a constant rate matrix determined by the model design. Taking this to the Kolmogorov forward equation, one can analytically solve the marginal distributions by $p_t=p_sP_{t|s}$, where $P_{t|s} = \exp((\bar{\sigma}(t) - \bar{\sigma}(s))Q)$ is the transition probability matrix from time $s$ to time $t$, and $\bar{\sigma}(t) = \int_0^t \sigma(\tau) d\tau$.

\textbf{Learning the backward process.} It is known that the time reversal of this process is also a CTMC, with its infinitesimal transition probability similar to Eq.\ref{eq:infinitesimal_transition}:
\begin{equation}
\label{eq:infinitesimal_transition_rev}
    p_{t-\Delta t|t}(\vy|\vx_t)  =  \delta(\vx_t, \vy) + \tilde{Q}_t(\vx_t, \vy) \Delta t.
\end{equation}
The reverse transition rate matrix $\tilde Q_t$ is associated to its forward counterpart $Q_t$ by the identity $
\tilde{Q}_t(\vx_t, \vy) = Q_t(\vy, \vx_t) s(\vx_t, t)_{\vy}$ for $\vx_t \ne \vy$, and $
\tilde{Q}_t(\vx_t, \vx_t) = -\sum_{\vx_t \ne \vy}\tilde{Q}_t(\vx_t, \vy)$~\citep{lou2024discrete}. Here, $s(\vx_t, t)_{\vy} = {p_t(\vy)}/{p_t(\vx_t)}$ is the \textbf{concrete score},  
which is generally intractable and commonly approximated by a parameterized network $s_\theta(\vx_t, t)_{\vy}$ trained with the Denoising Score Entropy (DSE) objective~\citep{lou2024discrete}, a {negative} evidence lower bound (ELBO) for discrete diffusion models (details in Appendix~\ref{app:dse_loss_derivation}):
\begin{align}\label{eq:dse_background}
\mathcal{L}^\text{DSE}_\theta(\vx_0)=
\underset{t, \vx_t}
    {\mathbb{E}}
    {\sum_{\vy \ne \vx_t} Q_t(\vy, \vx_t) \Bigg[s_\theta(\vx_t, t)_{\vy} - \frac{p_{t|0}(\vy | \vx_0)}{p_{t|0}(\vx_t| \vx_0)}\log s_\theta(\vx_t, t)_{\vy} + K\left(\frac{p_{t|0}(\vy | \vx_0)}{p_{t|0}(\vx_t| \vx_0)}\right)\Bigg]},
\end{align}
where the expectation is taken over $t\sim \text{Unif}([0,1])$ and $\vx_t\sim p_{t|0}(\vx_t|\vx_0)$, and $K(a) = a(\log a - 1)$.

\subsection{Insertion-Based Language Models}
\label{sec:related-insertion}

{
Classical insertion-based models, such as the Insertion Transformer~\citep{stern2019insertion}, Levenshtein Transformer~\citep{gu2019levenshtein}, and InsNet~\citep{lu2022insnet}, pioneered sequence generation via iterative insertion or edit operations.
These models demonstrate the potential of flexible decoding orders and parallel decoding.
However, their training objectives are defined at the level of local edit policies, 
rather than arising
from a probabilistic modeling of the global data distribution.

Recently, insertion operations have been integrated into discrete diffusion and flow matching.
Flexible Masked Diffusion Models (FlexMDMs)~\citep{kim2025anyorderflexiblelengthmasked} 
augment MDMs with a learned insertion expectation to insert additional \texttt{<MASK>} tokens during generation; this enables variable-length generation but still relies on the masking-unmasking paradigm.
Edit Flows~\citep{havasi2025edit} instead define a CTMC directly over variable-length sequences via insertion, deletion, and substitution edits.
To make sequence-level flow matching tractable, Edit Flows augment the state space with auxiliary edit-path variables and estimate the training objective by Monte Carlo sampling of these paths. While this preserves theoretical correctness, it adds implementation complexity and an additional source of stochasticity beyond the randomness of the CTMC forward process.
Insertion Language Models (ILMs)~\citep{patel2025insertionlanguagemodelssequence} draw inspiration from diffusion approaches and formulate a CTMC forward process that deletes tokens to learn a backward insertion process. However, ILMs are not diffusion models and cannot learn the true backward insertion process since their training objective is more heuristic rather than likelihood-bounded. They also suffer from several practical limitations; for instance, they can only insert one token per step and require another network to determine when to stop the generation.

DID bridges the gap between flexible insertion-based generation and rigorous discrete diffusion.
Unlike classical insertion models and ILMs, DID is well-grounded in a continuous-time diffusion framework: deletion and insertion form the forward and backward CTMCs over the full variable-length sequence space, and the DISE objective is inherited from the DSE objective to enable likelihood-bounded training.
Unlike Edit Flows, however, DID does not introduce auxiliary edit-path variables.
Thanks to the independence structure of the deletion process, the forward transition probabilities admit a closed-form expression in terms of subsequence counts, and the subsequence count ratios that appear in DISE can be computed \textbf{exactly}  via parallel dynamic programming.
Therefore, DID avoids the additional variance and engineering overhead associated with sampling edit paths, while maintaining a principled, likelihood-bounded training objective and completely eliminating the use of computationally wasteful \texttt{<MASK>} and \texttt{<PAD>} tokens.
}

%% file: method.tex
\section{\ours{}: Deletion-Insertion Diffusion Language Models}
\label{sec:\ours{}}

We propose \ours{} to improve the efficiency and flexibility of diffusion language models. Instead of masking and unmasking in MDLMs, \ours{} reconstructs the diffusion processes with deletion and insertion. 
In this section, we rigorously formulate the forward deletion process in Sec.~\ref{sec:fwd}, backward insertion process and sampling algorithm in Sec.~\ref{sec:bwd}, develop a score-based approach to train \ours{} in Sec.~\ref{sec:dise}, discuss efficient implementation supporting parallelism for \ours{} training objectives in Sec.~\ref{sec:dp}, and analyze the additional optimization for the fixed-length data setting considered by MDLMs in Sec.~\ref{sec:dice}.

\subsection{Forward Process: Deletion}\label{sec:fwd}
The forward process of \ours{} is a CTMC on the state space $\cup_{d=0}^\infty\mathcal V^d$ that gradually shortens the sequence length by deleting tokens, thus equipping the model with variable-length ability. Similar to MDLMs, we define this forward process through independent token-level deletions with rate $\sigma(t)$. Specifically, at the token level, a token $v \in \mathcal{V}$ can be deleted (denoted by transition to an empty state $\varnothing$) with an infinitesimal rate $\sigma(t)$, or remain unchanged otherwise. Thus, the transition probability within infinitesimal time $\Delta t$ is:
\begin{align}
    \label{eq:deletion}
    p_{t+\Delta t | t}(v' | v) &=
    \begin{cases}
    \sigma(t)\Delta t, & v'=\varnothing, \\
    1-\sigma(t)\Delta t, & v'=v.
    \end{cases}
\end{align}
Based on this independent token-level process, the sequence-level transition probability between timesteps $s$ and $t$ with $0<s<t<1$ can be derived as:
\begin{align}
    \label{eq:p_t|s}
    p_{t|s}(\vx_t|\vx_s)=(1-e^{-(\bar\sigma(t)-\bar\sigma(s))})^{|\vx_s|-|\vx_t|}e^{-(\bar\sigma(t)-\bar\sigma(s))|\vx_t|}N(\vx_t,\vx_s).
\end{align}
Here, $|\vx|$ denotes the length of the sequence $\vx$, $\bar\sigma(t) = \int_0^t \sigma(\tau) d\tau$ is the integrated noise rate, and $N(\vx_t, \vx_s)$ is the number of occurrences of $\vx_t$ as distinct subsequences in $\vx_s$.
\footnote{For example, $N(\texttt{<BOS>} \text{ b a g}, \texttt{<BOS>} \text{ b a b g b a g}) = 5$. The distinct subsequences are (highlighted in bold): 
{\textbf{\texttt{<BOS>}} \textbf{b} \textbf{a} b \textbf{g} b a g}, 
{\textbf{\texttt{<BOS>}} \textbf{b} \textbf{a} b g b a \textbf{g}}, 
{\textbf{\texttt{<BOS>}} \textbf{b} a b g b \textbf{a} \textbf{g}},
{\textbf{\texttt{<BOS>}} b a \textbf{b} g b \textbf{a} \textbf{g}}, 
{\textbf{\texttt{<BOS>}} b a b g \textbf{b} \textbf{a} \textbf{g}}. 
}
This number accounts for the multiplicity of all the possible independent deletion paths from $\vx_s$ to $\vx_t$; see Appendix~\ref{app:sentence_transition_derivation} for the proof of Eq.~\ref{eq:p_t|s}.

The infinitesimal transition of the forward process is captured by the sequence-level transition rate matrix $Q_t$. Due to token-wise independence, at most one deletion can occur within an infinitesimal time interval with non-negligible $(\Omega(\Delta t))$ probability. Thus, the rate $Q_t(\bm y,\bm x_t)$ is non-zero only when $\bm y=\bm x_t$ or $\vy \succ_1 \vx_t$, i.e. $\vx_{t}$ is the result of deleting exactly one token from $\vy$, and the rate for $\vy \succ_1 \vx_t$ is (details in Appendix~\ref{app:transition_rate_derivation}):
\begin{align}
    \label{eq:transition_rate}
    Q_t(\vy, \vx_t) &= \limDt \frac{p_{t+\Delta t | t}(\vx_t|\vy)}{\Delta t} = \sigma(t) N(\vx_t, \vy).
\end{align}
Note that, in our implementation, we prepend an undeletable special token \texttt{<BOS>} at the beginning of each sequence, so the above derivations should exclude \texttt{<BOS>}. Therefore, the fully noised sequence is a single \texttt{<BOS>}, which also serves as the initial input token at the first generation step to represent an empty sequence.

\subsection{Backward Process: Insertion}\label{sec:bwd}

As in Sec.~\ref{sec:CTMC}, we aim to learn the time reversal of the forward process, a CTMC with rate matrix $\tilde{Q}_t(\vx_t, \vy) = Q_t(\vy, \vx_t)s(\vx_t, t)_{\vy}$, where $s$ is the \textbf{concrete score}. Since $Q_t$ involves single-token deletions, the backward process $\tilde{Q}_t$ only considers single-token insertions (i.e., $\vy \succ_1 \vx_t$).

Directly applying the concrete score learning approach used in MDLMs is impractical here. The reason is that given $\vx_t$, the number of possible resulting states $\vy$ is variable, because different insertions can lead to the same result.\footnote{
For example, inserting `a' after the 1st or 2nd index of `\texttt{<BOS>} b a g' both yield `\texttt{<BOS>} b a a g'.
\label{footnote:multi}} 
Consequently, targeting the concrete score requires calculating the number of possible $\vy$ values and enabling the model to produce a variable-shaped output representing the concrete score for each $\vy$. These requirements collectively introduce significant complexity and implementation challenges.

As an alternative, we target the \textbf{insertion score} $\bar{s}$. We learn a score for every insertion, regardless of whether the resulting $\vy$ are identical. We define an insertion action $(i,v)$ as inserting token $v$ \textbf{after}\footnote{We use a prepended, non-deletable \texttt{<BOS>} token (index 0) for insertions at the start.}  the $i$-th position of $\vx_t$, resulting in $\text{Ins}(\vx_t, i, v)=(\vx_{\leq i},v,\vx_{>i})$. 
The definition of insertion score is:
\begin{align}
\bar{s}(\vx_t, t)[i, v]
\overset{\text{def}}{=}
\frac{\mathbb{E}_{\vx_{0}}[ (1-e^{-\bsig(t)})^{|\vx_0|}N(\text{Ins}(\vx_t, i, v),\vx_{0})]}{\mathbb{E}_{\vx_{0}}[ (1-e^{-\bsig(t)})^{|\vx_0|}N(\vx_{t},\vx_{0})]}, \quad \forall (i,v)\in  [0, |\vx_t|)_\mathbb{Z}\times \mathcal{V} ,\label{eq:insertion-score}
\end{align}
whose shape is $|\vx_t| \times |\mathcal{V}|$, tractable for transformer-based models.

Since the CTMC dynamics are based on the concrete score, to sample with the insertion score, we show that the concrete score is a scaled average of insertion scores, and the reverse transition rate is a summation of the rate of insertion actions (details in Appendix~\ref{app:concrete_score_recasting}):
\begin{align}
&s(\vx_t, t)_\vy =\frac{e^{-\bsig(t)}}{1-e^{-\bsig(t)}} \frac{1}{N(\vx_t, \vy)}\sum_{i\in I(\vx_t, \vy)}\bar{s}(\vx_t, t)[i, v(\vx_t, \vy)],\label{eq:concrete_score_recasting}\\
&\tilde{Q}_t(\vx_t, \vy) = \sum_{i\in I(\vx_t, \vy)} \underbrace{\left(\frac{\sigma(t)e^{-\bsig(t)}}{1-e^{-\bsig(t)}} \bar{s}(\vx_t, t)[i, v(\vx_t, \vy)]\right)}_{\text{Rate of action }(i, v(\vx_t, \vy))},
\label{eq:rate_equivalence}
\end{align}
where $I(\vx_t, \vy)$ is the set of viable insertion positions and $v(\vx_t, \vy)$ is the inserted token from $\vx_t$ to $\vy$.
\footnote{ For example, 
if $\vx_t=\texttt{<BOS>} \text{ b a g}$ and  $\vy=\texttt{<BOS>} \text{ b a a g}$, then $\text{Ins}(\vx_t, 1, \text{a}) = \text{Ins}(\vx_t, 2, \text{a}) = \vy$,  $v(\vx_t, \vy)=\text{a}$, $I(\vx_t, \vy) = \{1, 2\}$, and $N(\vx_t, \vy) = |I(\vx_t, \vy)| = 2$.
}

Based on Eq.~\ref{eq:rate_equivalence}, we can transform the concrete score-based sampling {in Eq.~\ref{eq:infinitesimal_transition_rev}} into an equivalent insertion score-based  sampling without computing $N(\vx_t, \vy)$, $I(\vx_t, \vy)$, or $v(\vx_t, \vy)$ (details in Appendix~\ref{app:sampling_derivation}):
\begin{align}
\label{eq:samp}
&p_{t-\Delta t|t}^\theta( (i,v)|\vx_t) =
\begin{cases}
\frac{\sigma(t)e^{-\bsig(t)}}{1-e^{-\bsig(t)}} \bar{s}_{\theta}(\vx_t, t)[i, v] \Delta t, & v \ne \varnothing, \\
1-\sum_{w \ne \varnothing} p_{t-\Delta t|t}^\theta((i,w)|\vx_t), & v = \varnothing,
\end{cases}
\end{align}
where $\varnothing$ indicates no insertion, and 
Tau-leaping~\citep{gillespie2001approximate}, a very popular approximate simulation method,  could be adopted to sample all insertions simultaneously for parallel decoding.

\subsection{Training Objective: Denoising Insertion Score Entropy }\label{sec:dise}
We aim to train the insertion score $\bar{s}_{\theta}$ using the DSE objective (Eq.~\ref{eq:dse_background}) {with the derived transition rate $Q(\vy, \vx_t)$ (Eq.~\ref{eq:transition_rate}), conditional distribution $p_{t|s}(\vx_t|\vx_s)$ (Eq.~\ref{eq:p_t|s}), and parameterized concrete score $s_\theta(\vx_t, t)_\vy$ (Eq.~\ref{eq:concrete_score_recasting}) for \ours{}. Here,} directly substituting $s_\theta$ (Eq.~\ref{eq:concrete_score_recasting}) into DSE is challenging. Since $s_\theta$ involves a summation of insertion scores $\bar{s}_\theta$, it results in an intractable log-sum structure within the DSE objective (Appendix~\ref{app:dise_derivation}), we apply Jensen's inequality to derive a tractable variational upper bound, the Denoising Insertion Score Entropy (DISE).

\begin{proposition}[Denoising Insertion Score Entropy (DISE)]
\label{prop:dise}
The DSE objective for the deletion-insertion process is upper bounded by the DISE objective, $\mathcal{L}^\text{DSE}_\theta(\vx_0) \le \mathcal L_\theta^\text{DISE}(\vx_0)$, which is defined as:
\begin{align}
\mathcal L_\theta^\text{DISE}(\vx_0)&=\underset{t, \vx_t}
{\mathbb{E}}
\Bigg\{{ \frac{\sigma(t) e^{-\bsig(t)}}{1-e^{-\bsig(t)}}\sum_{i, v}
    \bigg[\bar{s}_{\theta}(\vx_t, t)[i, v]-\frac{N(\text{Ins}(\vx_t, i, v), \vx_0)}{N(\vx_t, \vx_0)} \log\bar{s}_{\theta}(\vx_t, t)[i, v]
+ C\bigg]\Bigg\}},\label{eq:dise}
\end{align}
where $C = K\big(\frac{N(\text{Ins}(\vx_t, i, v), \vx_0)}{N(\vx_t, \vx_0)}\big)$ is a $\theta$-free constant, 
$t\sim\text{Unif}([0,1])$, and $\bm x_t\sim p_{t|0}(\vx_t|\vx_0)$.
\end{proposition}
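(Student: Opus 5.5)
Starting from the DSE objective (Eq.~\ref{eq:dse_background}), I would substitute the deletion-insertion ingredients term by term. Only $\vy\succ_1\vx_t$ contributes, with $Q_t(\vy,\vx_t)=\sigma(t)N(\vx_t,\vy)$ from Eq.~\ref{eq:transition_rate}. By Eq.~\ref{eq:p_t|s}, if $|\vy|=|\vx_t|+1$, then
\[
\frac{p_{t|0}(\vy|\vx_0)}{p_{t|0}(\vx_t|\vx_0)}=\frac{e^{-\bsig(t)}}{1-e^{-\bsig(t)}}\,\frac{N(\vy,\vx_0)}{N(\vx_t,\vx_0)}.
\]
Writing $\lambda_t=\frac{e^{-\bsig(t)}}{1-e^{-\bsig(t)}}$, I would also substitute $s_\theta(\vx_t,t)_\vy=\lambda_t\frac{1}{N(\vx_t,\vy)}\sum_{i\in I(\vx_t,\vy)}\bar s_\theta[i,v(\vx_t,\vy)]$ from Eq.~\ref{eq:concrete_score_recasting}, using $|I(\vx_t,\vy)|=N(\vx_t,\vy)$.

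\textbf{Regrouping over insertion actions.} The linear term $\sum_\vy \sigma N(\vx_t,\vy)s_\theta(\cdot)_\vy=\sigma\lambda_t\sum_\vy\sum_{i\in I}\bar s_\theta[i,v]$ becomes $\sigma\lambda_t\sum_{(i,v)}\bar s_\theta[i,v]$. This holds because the map $(i,v)\mapsto\mathrm{Ins}(\vx_t,i,v)$ partitions the action set $[0,|\vx_t|)\times\mathcal V$ into the fibres $I(\vx_t,\vy)\times\{v(\vx_t,\vy)\}$. Similarly, the target ratio carried by each $\vy$ equals the common value $\lambda_t N(\mathrm{Ins}(\vx_t,i,v),\vx_0)/N(\vx_t,\vx_0)$ for every $i$ in its fibre. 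So the weight $\sigma N(\vx_t,\vy)\cdot(\text{ratio})$ equals $\sigma\lambda_t\sum_{i\in I}r_{i,v}$, where $r_{i,v}=N(\mathrm{Ins}(\vx_t,i,v),\vx_0)/N(\vx_t,\vx_0)$.

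\textbf{Key step: Jensen on the log-sum.} The problematic term is $-\sigma\lambda_t\, r\, N(\vx_t,\vy)\log\big(\lambda_t \tfrac1{N}\sum_{i\in I}\bar s_\theta[i,v]\big)$. By concavity of $\log$,
\[
\log\Big(\tfrac1N\sum_{i\in I}\bar s_\theta[i,v]\Big)\ge\tfrac1N\sum_{i\in I}\log\bar s_\theta[i,v],
\]
and the coefficient is nonpositive, so replacing the log of the average by the average of the logs can only increase the loss. This gives $-\sigma\lambda_t\sum_{i\in I}r_{i,v}\log\bar s_\theta[i,v]$ plus a $\theta$-free remainder $-\sigma\lambda_t\sum_{i} r_{i,v}\log\lambda_t$.

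\textbf{Constants.} Finally I would check that the $\theta$-free leftovers (the $\log\lambda_t$ piece plus the $K(\lambda_t r)$ term of DSE) recombine exactly into $\sigma\lambda_t\sum_{(i,v)}K(r_{i,v})$. Expanding,
\[
N\,K(\lambda_t r)=N\lambda_t r(\log\lambda_t+\log r-1),
\]
and the $\log\lambda_t$ part cancels the leftover. What remains is $\sigma\lambda_t\sum_{i\in I}r(\log r-1)=\sigma\lambda_t\sum_{i} K(r_{i,v})$.

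Summing over $\vy$ and taking expectations over $t$ and $\vx_t$ yields $\mathcal L^{\rm DSE}_\theta\le\mathcal L^{\rm DISE}_\theta$. Actions whose result $\mathrm{Ins}(\vx_t,i,v)$ is not a subsequence of $\vx_0$ have $r=0$ and contribute $\bar s_\theta[i,v]$ to both sides, so they need no special treatment.

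The main obstacle is the bookkeeping in the regrouping: the multiplicity factors $N(\vx_t,\vy)$ coming from $Q_t$ must cancel precisely against the $1/N$ averaging in Eq.~\ref{eq:concrete_score_recasting}, and the constant term must reassemble cleanly into $K$. I would verify both carefully on the footnote example $\vx_t=\texttt{<BOS>}$ b a g, $\vy=\texttt{<BOS>}$ b a a g.
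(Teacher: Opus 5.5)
Your proposal is correct and follows essentially the same route as the paper. You substitute $Q_t(\vy,\vx_t)=\sigma(t)N(\vx_t,\vy)$, the forward ratio and the recast concrete score, apply Jensen to the log-of-average term, and pass from the sum over $\vy\succ_1\vx_t$ to a sum over actions $(i,v)$ via the correspondence between pairs $(\vy,i)$, $i\in I(\vx_t,\vy)$, and actions; your fibre partition is the paper's Lemma~\ref{lemma:summation_identity}, and the $\log\lambda_t$ pieces cancel against the expanded $K$ term as you describe. The only difference is ordering: the paper cancels the $\log\frac{e^{-\bsig(t)}}{1-e^{-\bsig(t)}}$ terms and factors out the prefactor before invoking Jensen, whereas you do so afterwards, which is immaterial.
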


The proof (Appendix~\ref{app:dise_derivation}) utilizes Jensen's inequality and a summation identity (Lemma~\ref{lemma:summation_identity}) to transform the objective from state-based ($\vy$) to action-based ($(i,v)$).


\subsection{Efficient Parallel Dynamic Programming for Subsequence Counting Problems}
\label{sec:dp}

A fundamental challenge for the DISE objective (Eq.~\ref{eq:dise}) is to efficiently solve the ratios of $N(\text{Ins}(\vx_t, i, v), \vx_0)$ and $N(\vx_t, \vx_0)$ for all possible insertions of $(i, v)$ on $\vx_t$. Suppose the lengths of $\vx_0$ and $\vx_t$ are $m$ and $n$, solving a single subsequence counting problem of $N(\vx_t, \vx_0)$ has a well-known time complexity of $O(mn)$ through dynamic programming, performing it naively for $n \times V$ times would be prohibitive. 
Here, we show that $N(\text{Ins}(\vx_t, i, v), \vx_0)$ for all $(i,v)$ pairs could be efficiently solved based on the intermediate results of solving $N(\vx_t, \vx_0)$ just twice (via a prefix DP in Eq.~\ref{eq:pr_dp} and a suffix DP in Eq.~\ref{eq:su_dp}), reducing the time complexity to compute all ratios from $O(mn^2V)$ to $O(mn)$, and making the training of \ours{} practical.

\textbf{Counting $N(\vx_t, \vx_0)$.} Here we briefly introduce the classic prefix DP and suffix DP solutions to this problem.
 Using Python's slicing syntax, the base cases for empty sequences are initialized as $1$ (other entries are $0$):
\begin{align}
N(\vx_t{[:0]}, \vx_0{[:j]})=N(\vx_t{[n:]}, \vx_0{[j:]}) = 1, \quad \forall {j} \in \{{0,...,m}\}.
\end{align}
The \textbf{prefix DP}
iteratively computes $N(\vx_t{[:i]}, \vx_0{[:j]})$ from the solved subproblems $N(\vx_t{[:i]}, \vx_0{[:j-1]})$ and $N(\vx_t{[:i-1]}, \vx_0{[:j-1]})$ to get the final result $N(\vx_t{[:n]}, \vx_0{[:m]})$, and the state transition is:
\begin{align}\label{eq:pr_dp}
    N(\vx_t{[:i]}, \vx_0{[:j]}) = N(\vx_t{[:i]}, \vx_0{[:j-1]})+\delta(\vx_t{[i-1]},\vx_0{[j-1]})\cdot N(\vx_t{[:i-1]}, \vx_0{[:j-1]}).
\end{align}
The \textbf{suffix DP} iteratively computes $N(\vx_t{[i:]}, \vx_0{[j:]})$ from the solved subproblems
$N(\vx_t{[i:]}, \vx_0{[j+1:]})$ and $N(\vx_t{[i+1:]}, \vx_0{[j+1:]})$ to get the final result $N(\vx_t[0:], \vx_0[0:])$, and the state transition is:
\begin{align}\label{eq:su_dp}
    N(\vx_t{[i:]}, \vx_0{[j:]}) = N(\vx_t{[i:]}, \vx_0{[j+1:]})+\delta(\vx_t{[i]},\vx_0{[j]})\cdot N(\vx_t{[i+1:]}, \vx_0{[j+1:]}).
\end{align}
The time complexities are $O(mn)$ for both the prefix and suffix DPs. Notably, they could be batched and parallelized along the $i$-dimension, thus supporting vectorization and parallel execution, and it only needs to sequentially loop over the $j$-dimension for $m$ times.

\textbf{Counting $N(\text{Ins}(\vx_t, i, v), \vx_0)$.} It could be efficiently solved based on the results of prefix and suffix DP:
\begin{equation}\label{eq:dp2}
    N(\text{Ins}(\vx_t, i, v), \vx_0) = \underbrace{\sum\nolimits_{j=1}^{m}  \Big[\delta(\vx_0{[j]}, v)}_{\text{index addition}} \cdot\underbrace{N(\vx_t{[:i]}, \vx_0{[:j-1]})}_{\text{prefix DP result}} \cdot \underbrace{N(\vx_t{[i:]}, \vx_0{[j:]})}_{\text{suffix DP result}}\Big],
\end{equation} 
due to the form of Eq.~\ref{eq:dp2}, results for all $(i,v)$ pairs can be solved in parallel with an elementwise multiplication of the prefix and suffix DP result matrices, followed by an index addition that could be efficiently implemented with a sparse tensor coalescence. 
Algorithm details and a PyTorch implementation are in Appendix~\ref{app:alg}.

%% file: method-part2.tex

\subsection{Simplified Model for Fixed-Length Setting}\label{sec:dice}

To facilitate a fair comparison with MDLMs on the widely-adopted fixed-length language modeling benchmarks (Tab.~\ref{tbl:zero_shot_perplexity}), and clearly isolate the superior FLOPs efficiency of \ours{}, 
we develop a set of optimizations to enhance \ours{} in the fixed-length setting. 
We show that when $|\vx_0|$ is a constant, 1) the insertion score becomes time-independent as the time-dependent terms of $(1-e^{-\bsig(t)})^{|\vx_0|}$  in Eq.~\ref{eq:insertion-score} could be canceled out, 
which leads to 2) a sequence-level normalization property (details in Appendix~\ref{app:seqnorm}):
\begin{align}\label{eq:seqnorm}
    \sum_{i,v} \bar{s}(\vx_t, t)[i, v] = |\vx_0| - |\vx_t|.
\end{align}
This benefits the parameterization and training of \ours{} from two aspects. 
First, the time-independence means that
the network does not require time $t$ as input in the fixed-length setting,
thus the parameterization reduces to $\bar{s}_\theta(\vx_t)$, saves the parameters for time embedding, and enables a cache mechanism similar to~\citep{ou2025your} if the sequence is not changed between steps. Second, the output of the insertion score network could be explicitly normalized with a summation of $|\vx_0| - |\vx_t|$ as in Eq.~\ref{eq:seqnorm}. Therefore, the summation term of outputs from the insertion score network in the DISE objective (Eq.~\ref{eq:dise}) turns into a constant, giving rise to a simplified Denoising Insertion Cross Entropy (DICE) objective (details in Appendix~\ref{app:dice_derivation}):
\begin{align}
    \label{eq:dice}
    \mathcal{L}^\text{DICE}_\theta(\vx_0) = 
    \underset{t, \vx_t}
    {\mathbb{E}}
    \Bigg\{\frac{\sigma(t) e^{-\bsig(t)}}{1-e^{-\bsig(t)}}{\sum_{i, v} 
    \frac{N(\text{Ins}(\vx_t, i, v), \vx_0)}{N(\vx_t, \vx_0)} \bigg[- \log\bar{s}_{\theta}(\vx_t)[i, v]
    + C\bigg]\Bigg\}},
\end{align}
where $C = \log\frac{N(\text{Ins}(\vx_t, i, v), \vx_0)}{N(\vx_t, \vx_0)}$ is a $\theta$-free constant. DICE can be interpreted as a weighted cross-entropy loss between the predicted insertion scores and the ground truth of subsequence count ratios, hence the name.

%% file: experiments.tex
\section{Experiments}\label{sec:exp}
We evaluate language modeling performance, sampling quality, and training/inference efficiency of \ours{} in the fixed-length setting in Sec.~\ref{sec:fix_exp} and the variable-length setting in Sec.~\ref{sec:var_exp}.
For more details, results, discussions, generation examples, and intermediate generation process, please refer to Appendix~\ref{app:exp_details},~\ref{app:more_res},~\ref{app:discussion},~\ref{app:examples}.
\subsection{\ours{} for Fixed-Length Language Modeling}\label{sec:fix_exp}
\textbf{Settings.} 
Following RADD~\citep{ou2025your}, which serves as our baseline method of MDLM, we train \ours{} of both small and medium sizes on the OpenWebText (OWT) dataset \citep{Gokaslan2019OpenWeb} with the DICE objective (Eq.~\ref{eq:dice}). We adopt the GPT2 tokenizer~\citep{radford2019language}, concatenate all sequences, and split them into fixed-length chunks of 1024 tokens, and the training batch size is 512.
RADD-small and -medium are reproduced with their open-sourced model checkpoints (trained for 400K steps on OWT). 


\textbf{Zero-shot language modeling perplexity. } We evaluate the zero-shot modeling perplexity on seven datasets in Tab.~\ref{tbl:zero_shot_perplexity}.  
Since \ours{} eliminates the computational FLOPs for \texttt{<MASK>} and hence reduces FLOPs by approximately half compared to RADD for the same training steps, we compare \ours{} under two training configurations: \textbf{\ours{}-S} (Steps-aligned), trained for the same steps as RADD (400K) and \textbf{\ours{}-F} (FLOPs-aligned), trained for double the steps (800K) to match the total computational budget. 
We observe that when aligned by training steps (\ours{}-S), our model achieves performance comparable to RADD, despite utilizing only about half the computational FLOPs. When aligned by the total computational budget (\ours{}-F), \ours{} consistently outperforms RADD across the majority of datasets for both small and medium sizes. This demonstrates that the computational savings achieved by eliminating \texttt{<MASK>} tokens are effectively turned into improved modeling performance. 
We also provide an ablation study in Appendix~\ref{app:abl} for \ours{} trained with DISE objective (Eq.~\ref{eq:dise}), i.e., without the additional optimizations in DICE for fixed-length data introduced in Sec.~\ref{sec:dice}. This results in a reduced performance than DICE-trained \ours{} in Tab.~\ref{tbl:zero_shot_perplexity}, yet comparable to RADD.

\begin{table}[t]
  \centering
    \caption{Zero-shot language modeling perplexity. Results for diffusion models are perplexity upper bounds.
 }
  \begin{tabular}{llccccccc}
    \toprule
        Size & Method & WikiText &  Lambada& Pubmed & AG News & LM1B & Arxiv & PTB \\ 
    \midrule
     
    Small &{RADD} & \underline{38.27} &   51.82 & 56.99 & \underline{73.18} & \underline{72.99} & 85.95 & \textbf{108.79}\\ 
     & {\ours{}-S} & 38.72 & \underline{49.10} & \underline{55.02} & 76.02 & 74.04 & \underline{82.41} & 115.37\\  
     & {\ours{}-F} &  \textbf{36.91} &  \textbf{48.00} & \textbf{52.89} & \textbf{71.48} & \textbf{72.04} & \textbf{78.38} & \underline{111.60} \\ 
      
\midrule
     
   Medium & {RADD} & \underline{28.44}&  44.10 & 41.06 & \underline{48.96} & 60.32 &  66.28 & \textbf{81.05}\\
     & {\ours{}-S} &  29.19 & \underline{41.94} & \underline{40.84} & 52.53 & \underline{59.88} & \underline{63.95} & 91.87\\
     & {\ours{}-F} &  \textbf{28.35} &   \textbf{41.00} & \textbf{38.71} & \textbf{48.84} &  \textbf{58.05} & \textbf{61.77} & \underline{87.09}\\
\bottomrule 
  \end{tabular}
  \label{tbl:zero_shot_perplexity}
\end{table}

\begin{table}[t]
    \centering
    \caption{Generative perplexity (PPL, evaluated by GPT2 Large), unigram entropy, inference time (in seconds), speedup, and average generation length for fixed-length models under different total denoising steps. }
    \begin{tabular}{llccccccc}
\toprule
Method &Steps & 16 & 32 & 64 & 128 & 256 & 512 & 1024 \\
\midrule
RADD
&PPL & 284.78 & 155.01 & 111.56 & 95.10 & 87.56 & \textbf{84.00} & \textbf{84.05} \\
&Entropy & 8.35 & 8.26 & 8.20 & 8.15 & 8.11 & 8.10 & 8.09 \\
&Time (s) & 0.220 &  0.317 & 0.499 & 0.879 & 1.644 & 2.882 & 4.512 \\
\midrule
\ours{}
&PPL & \textbf{158.93} & \textbf{110.06} & \textbf{97.32} & \textbf{91.25} & \textbf{86.98} & 86.04 & 85.35 \\
&Entropy & 8.15 & 8.13 & 8.13 & 8.12 & 8.09 & 8.08 & 8.09 \\
&Time (s) & \textbf{0.169}& \textbf{0.246} & \textbf{0.353} & \textbf{0.573} & \textbf{1.047} & \textbf{1.826} & \textbf{3.006} \\
&Speedup & 1.30$\times$ & 1.29$\times$ & 1.41$\times$ & 1.53$\times$ & 1.57$\times$ & 1.58$\times$ & 1.50$\times$ \\
&Length & 1023.29 & 1024.01 & 1024.18 & 1024.07 & 1023.91 & 1024.03 & 1024.10 \\
\bottomrule
\end{tabular}
    
    \label{tab:genppl}
\end{table}


\textbf{Generative performance.} 
We report generative perplexity (PPL), unigram entropy (diversity), and inference speed of direct sampling across different numbers of total denoising steps in Tab.~\ref{tab:genppl}. Compared with RADD, \ours{} achieves significantly better generation quality (lower PPL) with fewer denoising steps. When more total denoising steps are used, RADD slightly outperforms \ours{}, which is reasonable since RADD is naturally designed for the fixed-length setting. Nonetheless, \ours{} could consistently outperform RADD in the variable-length setting, which is deferred to Tab.~\ref{tab:gen_varlen}.
Furthermore, \ours{} consistently provides $\sim 1.5\times$ inference speedup. This improvement stems from the fact that the average sequence length during the iterative insertion process is shorter than the fixed-length process of RADD. 
We also provide nucleus sampling results in Appendix~\ref{sec:nucleus}, where \ours{} achieves lower PPL and entropy compared with RADD, demonstrating a stronger annealing effect.




\textbf{Training speed.}
\footnote{Large models are not fully trained; only their training speeds are measured.} 
We report the training speeds for different model sizes in Tab.~\ref{tab:performance_comparison} to verify the efficiency gains of \ours{}.  \ours{} demonstrates substantial speedups, increasing from 1.89$\times$ to 1.99$\times$ as we scale up the model. 
Empirically, removing the computations for \texttt{<MASK>} tokens significantly boosts training efficiency. 

\begin{wraptable}{r}{0.4\textwidth} 
    \centering
    \vspace{-13pt} 
    \captionof{table}{Average training time (in seconds) per 50 steps (i.e. batches) on OpenWebText.} 
    \vspace{-5pt}
    \begin{tabular}{lccc}
    \toprule
    & Small   & Medium   & Large    \\ 
    \midrule
    RADD   & 26.46 & 53.17 & 92.90   \\
    
    \ours{}  & \textbf{14.03} & \textbf{27.77} & \textbf{46.60}   \\
    Speedup & 1.89$\times$ & 1.91$\times$  &  1.99$\times$  \\
    \bottomrule
    \end{tabular}
    \vspace{-13pt}
    \label{tab:performance_comparison}
\end{wraptable}
While the reduction in FLOPs suggests a theoretical 2$\times$ speedup, the actual gains in Tab.~\ref{tab:genppl} are more modest. This discrepancy, as discussed in \citep{zheng2025masked}, arises because inference is not a purely FLOPs-bound task. For training, the observed gains in Tab.~\ref{tab:performance_comparison} are also slightly lower due to additional overheads such as the DP algorithm for loss implementation (Sec.~\ref{sec:dp}), which is a constant overhead independent of the model size,  thus the speedup could be scaled up with model size. Besides, a less developed system-level support for variable-length data also constrains the speedups. 

\subsection{\ours{} for Variable-Length Language Modeling} \label{sec:var_exp}

\begin{wraptable}{r}{0.50\textwidth}

\setlength{\tabcolsep}{3pt}
    \centering
    \vspace{-13pt}
    \caption{Generative PPL, unigram entropy, inference time (in seconds), and average generation length for variable-length models under different denoising steps.   *: as outliers significantly affect PPL, only samples with PPL $<$ 300 are counted, $^\dagger$: speedup over RADD.}
    \label{tab:gen_varlen}
    \vspace{-5pt}
    \begin{tabular}{llcccc}
\toprule
Method &Steps & 64 & 128 & 256 & 512  \\
\midrule
ILM
&PPL$^*$ & 161.80  & 137.64 & 42.29 & 31.14 \\
&Entropy& 5.20& 5.65& 5.97& 6.01 \\
&Time (s) &\textbf{0.016}&\textbf{0.034}& \textbf{0.087}& \textbf{0.271}\\
&Length &63.34& 120.77& 206.44& 234.44 \\
\midrule
RADD
&PPL$^*$ & \underline{81.92}& \underline{50.89}&  \underline{34.47}& \underline{26.78}\\
&Entropy& 5.22& 5.58& 5.79&  5.85\\
&Time (s) & 0.246& 0.441& 0.827& 1.461\\
&Length &110.66& 200.73& 349.54& 353.47\\
\midrule
\ours{}
&PPL &\textbf{22.78}  & \textbf{21.07} & \textbf{21.90} & \textbf{23.88}\\
&Entropy &5.90 & 5.94& 5.94& 5.94\\
&Time (s) & \underline{0.090} & \underline{0.132} & \underline{0.218} & \underline{0.388}\\
&Speedup$^\dagger$ &2.73$\times$&3.34$\times$&3.79$\times$& 3.76$\times$\\
&Length & 182.31 & 193.77 & 202.97 & 204.96 \\
\bottomrule
\end{tabular}
    \vspace{-10pt}
\end{wraptable}
\textbf{Settings.   }
Following ILM~\citep{patel2025insertionlanguagemodelssequence}, an insertion-based approach, 
we train \ours{}-small on the Stories dataset~\citep{eldan2023tinystoriessmalllanguagemodels,mostafazadeh-etal-2016-corpus} for 60K steps with batch size 512, 
utilizing its variable-length sequences (average length 213.43 under the Bert-base-uncased tokenizer~\citep{DBLP:journals/corr/abs-1810-04805})  truncated to a maximum length of 1024 and without padding. We also train RADD-small on Stories for the same steps ($>2\times$ FLOPs of \ours{}), details in Appendix~\ref{app:exp_details}. 
Since RADD requires fixed-length inputs, we pad all sequences to a length of 1024, which is in line with the setting for MDLM in ILM experiments~\citep{patel2025insertionlanguagemodelssequence}. As a result, RADD generates fixed-length outputs containing \texttt{<PAD>} tokens, which we subsequently remove to obtain the final variable-length outputs. ILM is reproduced with its open-sourced checkpoints.

\textbf{Generative performance. }
We report the generation quality and speed of direct sampling for variable-length models in Tab.~\ref{tab:gen_varlen}. Compared with both baselines, \ours{} maintains a significantly lower generative PPL (evaluated by GPT2 Large) with relatively high diversity and is more stable across different numbers of steps. This is important because strong sensitivity and non-convergence to the manually predefined number of generation steps are undesirable.

Regarding inference speed, \ours{} achieves a speedup of up to 3.79$\times$ compared to RADD, due to the savings of \texttt{<MASK>} and \texttt{<PAD>} tokens for both model calling and distribution sampling. On the other hand, ILM achieves the fastest speed despite its lowest quality. We credit this to its over-simplified generation mechanism, which samples a token at exactly one \textit{designated} position per step. In contrast, \ours{} and RADD sample from all possible token positions to enable parallel decoding. ILM benefits considerably from this simplification, since categorical sampling is a major bottleneck in small models \citep{zheng2025masked}. Moreover, ILM is limited to generating text shorter than the total steps (see Tab.~\ref{tab:gen_varlen}), which also contributes to its faster sampling.

\begin{figure}[t]
    \centering
    \includegraphics[width=\linewidth]{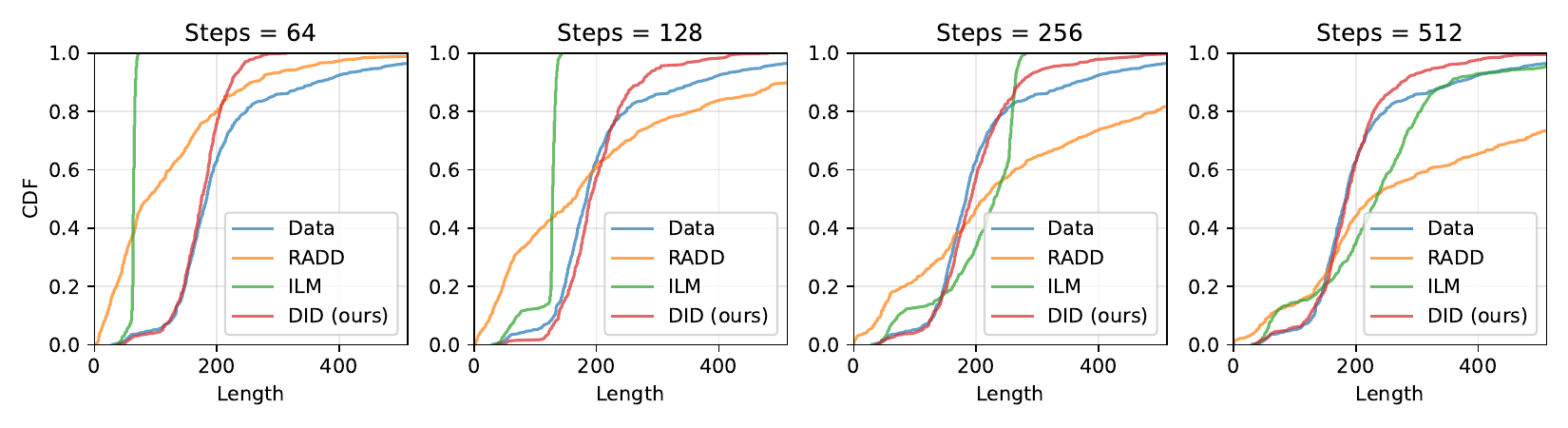}
    \vspace{-18pt}
    \caption{Cumulative distribution functions (CDFs) of generation length under different total denoising steps.}
    \vspace{-8pt}
    \label{fig:cdf}
\end{figure}


We also provide nucleus sampling results for variable-length models in Appendix~\ref{sec:nucleus}, and ablation studies of different padding lengths for RADD in Appendix~\ref{app:abl_pad}, addressing potential concerns that the 1024 padding length for RADD might be too long or unfair. When trained at a length of 512, RADD exhibits observable degradation and remains $\sim1.59\times$ slower than \ours{}, further confirming the original setting of ILM.

\begin{wraptable}{r}{0.4\textwidth}
\vspace{-13pt}
    \centering
    \caption{Average training time (in seconds) per 50 steps on Stories. }
    \vspace{-5pt}
    \begin{tabular}{lccc}
    \toprule
    & Small  & Medium  & Large \\ 
    \midrule
    RADD   & 19.93  & 37.87  &  67.75   \\
    \ours{}  & \textbf{7.71}  & \textbf{12.30} &  \textbf{19.83}   \\
    Speedup & 2.58$\times$ & 3.08$\times$  &   3.42$\times$ \\
    \bottomrule
    \end{tabular}
    \vspace{-8pt}
    \label{tab:training_speed_varlen}
\end{wraptable}

\textbf{Length modeling.}
Besides, \ours{} demonstrates superior length modeling capabilities, exhibiting consistency between the generation length distribution and the training data length distribution. This is demonstrated in Fig.~\ref{fig:cdf}, where the CDF of the length distribution of \ours{} is closely aligned with the dataset compared to the baselines. \ours{}'s average generation length reported in Tab.~\ref{tab:gen_varlen} is also stable and approximating the ground truth distribution.

    


\textbf{Training speed.}
\footnote{Medium and large models are not fully trained; only their training speeds are measured.} 
We also compare the training speed on the Stories dataset (Tab.~\ref{tab:training_speed_varlen}). The efficiency gains of \ours{} are even more pronounced in the variable-length setting, reaching up to 3.42$\times$ speedup for large models. This is because RADD suffers significant overhead from processing \texttt{<PAD>} tokens (since the average length 213.43 is much shorter than the padded length 1024), while \ours{} benefits from the shorter length.

Regarding more efficiency analyses, ablation studies, inference strategies, and performance analyses, please see Appendix~\ref{app:more_res}.





%

%% file: conclusion.tex
\section{Conclusion}
In this paper, we introduce \ours{} to improve the computational efficiency and generation flexibility of diffusion language models by eliminating the use of \texttt{<MASK>} and \texttt{<PAD>} tokens in our paradigm. Theoretically, we formulate the diffusion processes for deletion and insertion, define an insertion score, derive and implement the corresponding training objectives and sampling algorithm for \ours{}.
We evaluated \ours{} on modeling performance, generation quality, and training/inference speed, demonstrating the superiority of \ours{} over the baselines of MDLM and existing insertion-based LMs in both fixed-length and variable-length settings. A discussion of the limitations and future works of this paper is provided in Appendix~\ref{app:limitaions}.

%% file: app/limitations.tex
\section{Limitations and Future Works} 
\label{app:limitaions}
This work presents the core framework of \ours{} and, unlike the more established MDLMs, has not yet integrated many optimizations, such as advanced inference algorithms~\citep{wu2025fastdllmtrainingfreeaccelerationdiffusion,wang2025remasking}, or hybrid models combining autoregressive approaches~\citep{arriola2025block,sahoo2025esotericlanguagemodels}. Since these optimizations are not inherently tied to a specific diffusion process, adapting them to \ours{} represents a promising future direction. 
Second, although we have demonstrated the effectiveness, efficiency, and flexibility of \ours{}, our models were trained at a relatively small scale due to resource constraints. As a result, their performance on larger and more complex tasks remains unexplored, and we leave scaling up \ours{} to future work.

%% file: app/llm.tex
\section{The Use of Large Language Models (LLMs)}
\label{app:llm_disclosure}

Following ICLR guidelines, we wish to clarify our use of Large Language Models (LLMs) during the preparation of this work.

The research ideas, methodology, experimental design, and analysis presented in this paper were developed entirely by the human authors. LLMs were not involved in the ideation process.

We utilized LLMs as tools for editing and polishing the text, helping to improve the clarity and phrasing in various sections of the main paper and the appendix. The authors have reviewed the manuscript thoroughly and take full responsibility for its content.

%% file: app/notations.tex
\section{Notation Summary}
\label{app:notation_summary}

\begin{table}[h]
\centering
\caption{Summary of Key Notations}
\begin{tabular}{@{}ll@{}}
\toprule
Notation & Description \\
\midrule
$\mathcal{V}$ & Vocabulary. \\ 
$\mathcal{X}=\bigcup_{d=0}^{\infty}\mathcal{V}^d$ & Sequence state space (variable length). \\
$\vx_0,\vx_t,\vy$ & Clean sequence; sequence at time $t$; another sequence state. \\
$|\vx|$ & Length of $\vx$. \\
\texttt{<BOS>}, \texttt{<MASK>}, \texttt{<PAD>} & Begin-of-sequence (non-deletable), absorbing mask, padding. \\
$\varnothing$ & Null token representing deletion or no insertion (not in $\mathcal{V}$). \\
$Q_t,\,\tilde{Q}_t$ & Forward and reverse sequence-level CTMC rate matrices. \\
$p_{t|s}(\cdot|\cdot)$ & Forward transition probability from $s$ to $t$. \\
$p_t(\cdot)$ & Marginal distribution at time $t$. \\
$\sigma(t),\,\bar\sigma(t)$ & Noise rate and its integral $\int_0^t \sigma(\tau)\,d\tau$. \\
$s(\vx_t,t)_{\vy}$ & Concrete score $p_t(\vy)/p_t(\vx_t)$. \\
$N(\vx,\vy)$ & The number of occurrences of $\vx$ as a distinct subsequence of $\vy$. \\
$\vy\succ_1\vx$ & $\vy$ is obtained by inserting exactly one token into $\vx$. \\
$v(\vx,\vy)$ & The unique inserted token when $\vy\succ_1\vx$. \\
$\text{Ins}(\vx,i,v)$ & The result of inserting token $v\!\in\!\mathcal V$ \emph{after} position $i$ of $\vx$. \\
$I(\vx,\vy)$ & Valid insertion indices s.t.\ $\text{Ins}(\vx,i,v(\vx,\vy))=\vy$. \\
$\bar s(\vx_t,t)[i,v]$ & Insertion score for insertion operation $(i,v)$ at time $t$. \\
$\bar s(\vx_t)[i,v]$ & Time-independent insertion score (fixed-length setting). \\
$K(a)$ & Convex function $a(\log a - 1)$. \\
\bottomrule
\end{tabular}
\end{table}

%% file: app/proofs.tex
\section{Detailed Proofs and Derivations}

\subsection{Derivation of the Denoising Score Entropy Loss (Eq.\ref{eq:dse_background})}
\label{app:dse_loss_derivation}

The training objective for discrete diffusion models is derived by minimizing a variational upper bound on the negative log-likelihood (NLL) of the data, $-\log p_0^\theta(\vx_0)$.

Let $\mathbb{P}_{\vx_0}$ denote the path measure (the probability distribution over entire trajectories) of the true posterior reverse process conditioned on the data $\vx_0$. Let $\mathbb{P}^\theta$ denote the path measure of the learned reverse process parameterized by $\theta$. By the data processing inequality, we have:
\begin{align}
-\log p_0^\theta(\vx_0) = D_{\mathrm{KL}}\left(\delta_{\vx_0} \| p_0^\theta\right)
\leq D_{\mathrm{KL}}(\mathbb{P}_{\vx_0} \| \mathbb{P}^\theta).
\end{align}
We define the training objective as this variational upper bound: $\mathcal{L}(\theta) = D_{\mathrm{KL}}(\mathbb{P}_{\vx_0} \| \mathbb{P}^\theta)$, assuming both processes share the same prior distribution at $t=1$.

Both processes are Continuous-Time Markov Chains (CTMCs). Let $\widetilde{Q}_t^0(\vx_t, \vy)$ denote the true conditional reverse transition rate (for $\mathbb{P}_{\vx_0}$) and $\widetilde{Q}_t^\theta(\vx_t, \vy)$ denote the parameterized reverse transition rate (for $\mathbb{P}^\theta$).

The KL divergence between path measures can be decomposed using the chain rule. If we consider a discrete-time approximation with infinitesimal step $\Delta t$, the total KL divergence is the sum of the expected KL divergences at each step. We analyze the KL divergence between the infinitesimal transition probabilities $p^0(\vy|\vx_t) = \delta(\vx_t,\vy) + \widetilde{Q}_t^0(\vx_t,\vy)\Delta t + o(\Delta t)$ and $p^\theta(\vy|\vx_t) = \delta(\vx_t,\vy) + \widetilde{Q}_t^\theta(\vx_t,\vy)\Delta t + o(\Delta t)$. The instantaneous KL divergence is~\citep{NIPS2007_735b90b4}:
\begin{align}
    D_{\mathrm{KL}}(p^0(\cdot|\vx_t) \| p^\theta(\cdot|\vx_t)) = \Delta t \sum_{\vy \neq \vx_t} \left( \widetilde{Q}_t^0(\vx_t, \vy) \log \frac{\widetilde{Q}_t^0(\vx_t, \vy)}{\widetilde{Q}_t^\theta(\vx_t, \vy)} + \widetilde{Q}_t^\theta(\vx_t, \vy) - \widetilde{Q}_t^0(\vx_t, \vy) \right) + o(\Delta t).
\end{align}
Summing these contributions and taking the continuous limit ($\Delta t \to 0$) yields the integral form for the path KL divergence:
\begin{align}
\label{eq:path_kl_simplified}
\mathcal{L}(\theta) = \int_0^1 \underset{\vx_t \sim p_{t|0}}{\mathbb{E}} \left[ \sum_{\vy \neq \vx_t} \left( \widetilde{Q}_t^0(\vx_t, \vy) \log \frac{\widetilde{Q}_t^0(\vx_t, \vy)}{\widetilde{Q}_t^\theta(\vx_t, \vy)} + \widetilde{Q}_t^\theta(\vx_t, \vy) - \widetilde{Q}_t^0(\vx_t, \vy) \right) \right] dt.
\end{align}

We now substitute the specific definitions of these transition rates. The true conditional reverse rate $\widetilde{Q}_t^0$ is related to the forward rate $Q_t(\vy, \vx_t)$ by 
\begin{align}
\label{eq:app_true_rate}
\widetilde{Q}_t^0(\vx_t, \vy) = Q_t(\vy, \vx_t) \frac{p_{t|0}(\vy|\vx_0)}{p_{t|0}(\vx_t|\vx_0)}.
\end{align}
The parameterized reverse rate $\widetilde{Q}_t^\theta$ is defined using the score network $s_\theta(\vx_t, t)_{\vy}$:
\begin{align}
\label{eq:app_param_rate}
\widetilde{Q}_t^\theta(\vx_t, \vy) = Q_t(\vy, \vx_t) s_\theta(\vx_t, t)_{\vy}.
\end{align}

We substitute these rates (Eq.\ref{eq:app_true_rate} and Eq.\ref{eq:app_param_rate}) into the expression inside the summation in Eq.\ref{eq:path_kl_simplified}. The expression becomes:
\begin{align}
\left(Q_t(\vy, \vx_t) \frac{p_{t|0}(\vy|\vx_0)}{p_{t|0}(\vx_t|\vx_0)}\right) \log \frac{Q_t(\vy, \vx_t) \frac{p_{t|0}(\vy|\vx_0)}{p_{t|0}(\vx_t|\vx_0)}}{Q_t(\vy, \vx_t) s_\theta(\vx_t, t)_{\vy}} 
+ Q_t(\vy, \vx_t) s_\theta(\vx_t, t)_{\vy} - Q_t(\vy, \vx_t) \frac{p_{t|0}(\vy|\vx_0)}{p_{t|0}(\vx_t|\vx_0)}.
\end{align}
We simplify by canceling $Q_t(\vy, \vx_t)$ inside the logarithm and factoring it out from the entire expression:
\begin{align}
= Q_t(\vy, \vx_t) \left[ \frac{p_{t|0}(\vy|\vx_0)}{p_{t|0}(\vx_t|\vx_0)} \log \frac{\frac{p_{t|0}(\vy|\vx_0)}{p_{t|0}(\vx_t|\vx_0)}}{s_\theta(\vx_t, t)_{\vy}} + s_\theta(\vx_t, t)_{\vy} - \frac{p_{t|0}(\vy|\vx_0)}{p_{t|0}(\vx_t|\vx_0)} \right].
\end{align}
We expand the logarithm ($\log(A/B) = \log A - \log B$) and rearrange the terms:
\begin{align}
&= Q_t(\vy, \vx_t) \Bigg[ \frac{p_{t|0}(\vy|\vx_0)}{p_{t|0}(\vx_t|\vx_0)} \left(\log \frac{p_{t|0}(\vy|\vx_0)}{p_{t|0}(\vx_t|\vx_0)} - \log s_\theta(\vx_t, t)_{\vy}\right) + s_\theta(\vx_t, t)_{\vy} - \frac{p_{t|0}(\vy|\vx_0)}{p_{t|0}(\vx_t|\vx_0)} \Bigg] \\
&= Q_t(\vy, \vx_t) \Bigg[ s_\theta(\vx_t, t)_{\vy} - \frac{p_{t|0}(\vy|\vx_0)}{p_{t|0}(\vx_t|\vx_0)} \log s_\theta(\vx_t, t)_{\vy} 
 + \frac{p_{t|0}(\vy|\vx_0)}{p_{t|0}(\vx_t|\vx_0)}\left(\log \frac{p_{t|0}(\vy|\vx_0)}{p_{t|0}(\vx_t|\vx_0)} - 1\right) \Bigg].
\end{align}
Letting $K(a) = a(\log a - 1)$. Recognizing that the time integral $\int_0^1 dt$ combined with the expectation over $\vx_t \sim p_{t|0}$ is equivalent to the expectation over uniform time $t\sim U(0,1)$ and the corresponding conditional $\vx_t$, the objective $\mathcal{L}(\theta)$ is exactly the DSE loss (Eq.\ref{eq:dse_background}):
\begin{align}
\mathcal{L}^\text{DSE}_\theta(\vx_0) = \underset{t, \vx_t}{\mathbb{E}} \sum_{\vy \neq \vx_t} Q_t(\vy, \vx_t) \Bigg[ s_\theta(\vx_t, t)_{\vy} - \frac{p_{t|0}(\vy|\vx_0)}{p_{t|0}(\vx_t|\vx_0)} \log s_\theta(\vx_t, t)_{\vy} + K\left(\frac{p_{t|0}(\vy|\vx_0)}{p_{t|0}(\vx_t|\vx_0)}\right) \Bigg].
\end{align}

\subsection{Derivation of the Sequence-Level Transition Probability (Eq.\ref{eq:p_t|s})} 
\label{app:sentence_transition_derivation} 
We derive the sequence-level transition probability $p_{t|s}(\vx_t|\vx_s)$ based on the definition of the \ours{} forward process as an independent token-level deletion process. We begin by analyzing the dynamics of a single token.

The token-level process (Eq.\ref{eq:deletion}) is a Continuous-Time Markov Chain (CTMC) on the state space $\{v, \varnothing\}$, where $v \in \mathcal{V}$ denotes the presence of a token and $\varnothing$ denotes the deleted state. The transition rate matrix $Q_t^\text{tok}$ at time $t$, indexed by $(v, \varnothing)$, is defined by the deletion rate $\sigma(t)$:
\begin{align}
    Q_t^\text{tok} = \begin{pmatrix} -\sigma(t) & \sigma(t) \\ 0 & 0 \end{pmatrix}.
\end{align}
Let $P_v(\tau)$ be the probability that the token is in state $v$ at time $\tau \in [s, t]$, given it started in state $v$ at time $s$ ($P_v(s)=1$). The evolution of this probability follows the Kolmogorov forward equation:
\begin{align}
    \frac{d P_v(\tau)}{d\tau} &= P_v(\tau) Q_\tau^\text{tok}(v, v) + P_\varnothing(\tau) Q_\tau^\text{tok}(\varnothing, v) = -\sigma(\tau) P_v(\tau).
\end{align}
Solving this first-order ordinary differential equation by integrating from $s$ to $t$:
\begin{align}
    \int_{s}^{t} \frac{d P_v(\tau)}{P_v(\tau)} &= \int_{s}^{t} -\sigma(\tau) d\tau \implies \ln(P_v(t)) - \ln(P_v(s)) = -(\bar\sigma(t) - \bar\sigma(s)).
\end{align}
Thus, the probability that a single token survives during the interval $[s, t]$ is $P_v(t) = e^{-(\bar\sigma(t)-\bar\sigma(s))}$. Conversely, the probability of deletion is $1 - e^{-(\bar\sigma(t)-\bar\sigma(s))}$.

We now consider the sequence-level transition from $\vx_s$ to $\vx_t$. Let $\Delta\bar\sigma = \bar\sigma(t)-\bar\sigma(s)$. A transition occurs if the tokens forming $\vx_t$ survive and the remaining $|\vx_s| - |\vx_t|$ tokens are deleted. Since token deletions are independent, the probability of a specific path (a specific occurrence of $\vx_t$ in $\vx_s$) is $(e^{-\Delta\bar\sigma})^{|\vx_t|} \times (1 - e^{-\Delta\bar\sigma})^{|\vx_s| - |\vx_t|}$. 

The total transition probability $p_{t|s}(\vx_t|\vx_s)$ is the sum over all distinct paths, counted by the subsequence count $N(\vx_t, \vx_s)$. Therefore, we obtain: 
\begin{align} 
    p_{t|s}(\vx_t|\vx_s) = N(\vx_t, \vx_s) (1-e^{-(\bar\sigma(t)-\bar\sigma(s))})^{|\vx_s|-|\vx_t|}e^{-(\bar\sigma(t)-\bar\sigma(s))|\vx_t|}. 
\end{align}

\subsection{Derivation of the Transition Rate (Eq.\ref{eq:transition_rate})} 
\label{app:transition_rate_derivation} 
We derive the transition rate $Q_t(\vy, \vx_t)$ from a sequence $\vy$ to a sequence $\vx_t$, where $\vx_t$ is obtained from $\vy$ by deleting a single token (denoted as $\vy \succ_1 \vx_t$). This implies $|\vy| = |\vx_t| + 1$. 
\begin{align} 
Q_t(\vy, \vx_t) &\triangleq \lim_{\Delta t \to 0} \frac{p_{t+\Delta t | t}(\vx_t|\vy)}{\Delta t} \\ 
&= \lim_{\Delta t \to 0} \frac{ (1-e^{-(\bar\sigma(t+\Delta t)-\bar\sigma(t))})^{|\vy|-|\vx_t|} e^{-(\bar\sigma(t+\Delta t)-\bar\sigma(t))|\vx_t|} N(\vx_t, \vy) }{\Delta t} \\ 
&= \lim_{\Delta t \to 0} \frac{ (1-e^{-\sigma(t)\Delta t + o(\Delta t)})^{1} \cdot e^{-(\sigma(t)\Delta t + o(\Delta t))|\vx_t|} \cdot N(\vx_t, \vy) }{\Delta t} \\ 
&= \lim_{\Delta t \to 0} \frac{ (\sigma(t)\Delta t + o(\Delta t)) \cdot (1 - |\vx_t|\sigma(t)\Delta t + o(\Delta t)) \cdot N(\vx_t, \vy) }{\Delta t} \\ 
&= \lim_{\Delta t \to 0} \frac{ \sigma(t)\Delta t + o(\Delta t) }{\Delta t} \cdot N(\vx_t, \vy) \\ 
&= \sigma(t) N(\vx_t, \vy). 
\end{align}

\subsection{Derivation of the Relationship between Concrete Score and Insertion Score (Eq.~\ref{eq:concrete_score_recasting})} 
\label{app:concrete_score_recasting} 

We aim to prove the identity stated in Eq.~\ref{eq:concrete_score_recasting}. This identity concerns the backward process where transitions occur such that $\vy \succ_1 \vx_t$ . Note that this condition implies $|\vy| = |\vx_t| + 1$.

First, we derive the explicit form of the concrete score $s(\vx_t, t)_\vy = p_t(\vy)/p_t(\vx_t)$ by expanding the marginal distributions using the forward transition probability (Eq.~\ref{eq:p_t|s}): 
\begin{align} 
s(\vx_t, t)_\vy &= \frac{p_t(\vy)}{p_t(\vx_t)} = \frac{\mathbb{E}_{\vx_0}[p_{t|0}(\vy|\vx_0)]}{\mathbb{E}_{\vx_0}[p_{t|0}(\vx_t|\vx_0)]} \nonumber \\ 
&= \frac{\mathbb{E}_{\vx_0}\left[(1-e^{-\bsig(t)})^{|\vx_0|-|\vy|}e^{-\bsig(t)|\vy|}N(\vy,\vx_0)\right]}{\mathbb{E}_{\vx_0}\left[(1-e^{-\bsig(t)})^{|\vx_0|-|\vx_t|}e^{-\bsig(t)|\vx_t|}N(\vx_t,\vx_0)\right]} \nonumber \\ 
&= \frac{e^{-\bsig(t)|\vy|} (1-e^{-\bsig(t)})^{-|\vy|}}{e^{-\bsig(t)|\vx_t|} (1-e^{-\bsig(t)})^{-|\vx_t|}} \frac{\mathbb{E}_{\vx_0}\left[(1-e^{-\bsig(t)})^{|\vx_0|}N(\vy,\vx_0)\right]}{\mathbb{E}_{\vx_0}\left[(1-e^{-\bsig(t)})^{|\vx_0|}N(\vx_t,\vx_0)\right]} \nonumber \\ 
&\overset{|\vy|=|\vx_t|+1}{=} \frac{e^{-\bsig(t)}}{1-e^{-\bsig(t)}} \frac{\mathbb{E}_{\vx_0}\left[(1-e^{-\bsig(t)})^{|\vx_0|}N(\vy,\vx_0)\right]}{\mathbb{E}_{\vx_0}\left[(1-e^{-\bsig(t)})^{|\vx_0|}N(\vx_t,\vx_0)\right]}. \label{eq:app_derived_concrete_score_final} 
\end{align} 

Next, we examine the right-hand side (RHS) of Eq.~\ref{eq:concrete_score_recasting} and substitute the definition of the insertion score $\bar{s}$ (Eq.~\ref{eq:insertion-score}): 
\begin{align} 
\text{RHS} &= \frac{e^{-\bsig(t)}}{1-e^{-\bsig(t)}} \frac{1}{N(\vx_t, \vy)}\sum_{i\in I(\vx_t, \vy)}\bar{s}(\vx_t, t)[i, v(\vx_t, \vy)] \\ 
&= \frac{e^{-\bsig(t)}}{1-e^{-\bsig(t)}} \frac{1}{N(\vx_t, \vy)}\sum_{i\in I(\vx_t, \vy)} \left( \frac{\mathbb{E}_{\vx_{0}}[ (1-e^{-\bsig(t)})^{|\vx_0|}N(\text{Ins}(\vx_t, i, v(\vx_t, \vy)),\vx_{0})]}{\mathbb{E}_{\vx_{0}}[ (1-e^{-\bsig(t)})^{|\vx_0|}N(\vx_{t},\vx_{0})]} \right). 
\end{align} 
By definition of the index set $I(\vx_t, \vy)$, for any $i \in I(\vx_t, \vy)$, we have $\text{Ins}(\vx_t, i, v(\vx_t, \vy)) = \vy$. Therefore: 
\begin{align} 
\text{RHS} &= \frac{e^{-\bsig(t)}}{1-e^{-\bsig(t)}} \frac{1}{N(\vx_t, \vy)}\sum_{i\in I(\vx_t, \vy)} \left( \frac{\mathbb{E}_{\vx_{0}}[ (1-e^{-\bsig(t)})^{|\vx_0|}N(\vy,\vx_{0})]}{\mathbb{E}_{\vx_{0}}[ (1-e^{-\bsig(t)})^{|\vx_0|}N(\vx_{t},\vx_{0})]} \right). 
\end{align} 
The term inside the summation is independent of the index $i$. We factor it out and utilize the property that $\sum_{i\in I(\vx_t, \vy)} 1 = |I(\vx_t, \vy)| = N(\vx_t, \vy)$: 
\begin{align} 
\text{RHS} &= \frac{e^{-\bsig(t)}}{1-e^{-\bsig(t)}} \frac{1}{N(\vx_t, \vy)} \left( \frac{\mathbb{E}_{\vx_{0}}[ (1-e^{-\bsig(t)})^{|\vx_0|}N(\vy,\vx_{0})]}{\mathbb{E}_{\vx_{0}}[ (1-e^{-\bsig(t)})^{|\vx_0|}N(\vx_{t},\vx_{0})]} \right) N(\vx_t, \vy) \\ 
&= \frac{e^{-\bsig(t)}}{1-e^{-\bsig(t)}} \frac{\mathbb{E}_{\vx_0}\left[(1-e^{-\bsig(t)})^{|\vx_0|}N(\vy,\vx_0)\right]}{\mathbb{E}_{\vx_0}\left[(1-e^{-\bsig(t)})^{|\vx_0|}N(\vx_t,\vx_0)\right]}. 
\end{align} 
This matches the derived concrete score in Eq.~\ref{eq:app_derived_concrete_score_final}, completing the proof.

\subsection{Derivation of the Sampling Probability (Eq.\ref{eq:samp})}
\label{app:sampling_derivation}

We aim to derive the probability of executing a specific insertion operation—inserting token $v$ after position $i$—within an infinitesimal time interval $[t-\Delta t, t]$ during the backward process.

The backward process is a CTMC characterized by the parameterized reverse transition rate matrix $\tilde{Q}_t^\theta$. The rate of transition from state $\vx_t$ to a different state $\vy$ is defined as:
\begin{align}
    \tilde{Q}_t^\theta(\vx_t, \vy) = Q_t(\vy, \vx_t) s_\theta(\vx_t, t)_{\vy}.
\end{align}
In the \ours{} framework, backward transitions occur only when $\vy \succ_1 \vx_t$. We substitute the forward transition rate (Eq.\ref{eq:transition_rate}), $Q_t(\vy, \vx_t) = \sigma(t) N(\vx_t, \vy)$. We also substitute the parameterized concrete score $s_\theta$, which is derived by parameterizing the relationship between the concrete score and the insertion score (Eq.\ref{eq:concrete_score_recasting}):
\begin{align}
s_\theta(\vx_t, t)_\vy = \frac{e^{-\bsig(t)}}{1-e^{-\bsig(t)}} \frac{1}{N(\vx_t, \vy)}\sum_{j\in I(\vx_t, \vy)}\bar{s}_\theta(\vx_t, t)[j, v(\vx_t, \vy)].
\end{align}
Substituting these expressions into the definition of $\tilde{Q}_t^\theta(\vx_t, \vy)$:
\begin{align}
\tilde{Q}_t^\theta(\vx_t, \vy) &= \left(\sigma(t) N(\vx_t, \vy)\right) \cdot \left(\frac{e^{-\bsig(t)}}{1-e^{-\bsig(t)}} \frac{1}{N(\vx_t, \vy)}\sum_{j\in I(\vx_t, \vy)}\bar{s}_\theta(\vx_t, t)[j, v(\vx_t, \vy)]\right) \\
\label{eq:app_Qtilde_decomposition}
&= \sigma(t) \frac{e^{-\bsig(t)}}{1-e^{-\bsig(t)}} \sum_{j\in I(\vx_t, \vy)}\bar{s}_\theta(\vx_t, t)[j, v(\vx_t, \vy)].
\end{align}
This result demonstrates that the total transition rate from $\vx_t$ to $\vy$ is decomposed into a summation of individual components. Each term in the summation, $\sigma(t) \frac{e^{-\bsig(t)}}{1-e^{-\bsig(t)}} \bar{s}_\theta(\vx_t, t)[j, v(\vx_t, \vy)]$, corresponds to the instantaneous rate of the specific insertion operation $(j, v(\vx_t, \vy))$ that transforms $\vx_t$ into $\vy$.

By the definition of a CTMC, the probability of a specific operation $(i, v)$ occurring within the infinitesimal interval $\Delta t$ is given by its corresponding rate multiplied by $\Delta t$. For $v \ne \varnothing$, we identify this probability directly from the decomposition above:
\begin{align}
p_{t-\Delta t|t}^\theta( (i,v)|\vx_t) &= \left( \sigma(t) \frac{e^{-\bsig(t)}}{1-e^{-\bsig(t)}} \bar{s}_\theta(\vx_t, t)[i, v] \right) \Delta t + o(\Delta t) \\
&= \frac{\sigma(t)e^{-\bsig(t)}}{1-e^{-\bsig(t)}} \bar{s}_{\theta}(\vx_t, t)[i, v] \Delta t + o(\Delta t).
\end{align}
This confirms the first case of Eq.\ref{eq:samp}. The probability of no insertion occurring at position $i$ (i.e., $v=\varnothing$) is determined by the normalization constraint, ensuring the sum of probabilities for all possible events at that position equals 1.

To confirm the equivalence between this action-based sampling and the required state-based transition, we verify that the action probabilities correctly recover the state transition probability $p_{t-\Delta t|t}^\theta(\vy|\vx_t)$. A transition to $\vy$ occurs if any of the actions indexed by $I(\vx_t, \vy)$ occurs. In the infinitesimal limit $\Delta t \to 0$, these actions are mutually exclusive events:
\begin{align}
p_{t-\Delta t|t}^\theta(\vy|\vx_t) &= \sum_{j \in I(\vx_t, \vy)} p_{t-\Delta t|t}^\theta( (j, v(\vx_t, \vy))|\vx_t) + o(\Delta t) \\
&= \left( \sigma(t) \frac{e^{-\bsig(t)}}{1-e^{-\bsig(t)}} \sum_{j\in I(\vx_t, \vy)}\bar{s}_\theta(\vx_t, t)[j, v(\vx_t, \vy)] \right) \Delta t + o(\Delta t).
\end{align}
By Eq.~\ref{eq:app_Qtilde_decomposition}, the term in the parenthesis is exactly $\tilde{Q}_t^\theta(\vx_t, \vy)$. Thus, $p_{t-\Delta t|t}^\theta(\vy|\vx_t) = \tilde{Q}_t^\theta(\vx_t, \vy)\Delta t + o(\Delta t)$, confirming that the action-based sampling correctly implements the dynamics of the backward CTMC.

\subsection{Derivation of the DISE Objective (Eq.\ref{eq:dise})}
\label{app:dise_derivation}

We derive the Denoising Insertion Score Entropy (DISE) objective (Eq.\ref{eq:dise}) starting from the general DSE objective (Eq.\ref{eq:dse_background}), demonstrating that DISE is a variational upper bound on DSE, $\mathcal L_\theta^\text{DISE}(\vx_0)\ge \mathcal{L}^\text{DSE}_\theta(\vx_0)$.

We begin with the DSE objective:
\begin{align}
\mathcal{L}^\text{DSE}_\theta(\vx_0)=
\underset{t, \vx_t}
{\mathbb{E}}
{\sum_{\vy \ne \vx_t} Q_t(\vy, \vx_t) \Bigg[s_\theta(\vx_t, t)_{\vy} - \frac{p_{t|0}(\vy | \vx_0)}{p_{t|0}(\vx_t| \vx_0)}\log s_\theta(\vx_t, t)_{\vy} + K\left(\frac{p_{t|0}(\vy | \vx_0)}{p_{t|0}(\vx_t| \vx_0)}\right)\Bigg]},
\end{align}
where $K(a)=a(\log a-1)$. In the deletion–insertion process, the transition rate $Q_t(\vy,\vx_t)$ is non-zero only when $\vy \succ_1 \vx_t$.

We now substitute the specific definitions for the \ours{} process.
The transition rate is $Q_t(\vy,\vx_t)=\sigma(t)N(\vx_t,\vy)$.
The conditional probability ratio is:
\begin{align}
\frac{p_{t|0}(\vy \mid \vx_0)}{p_{t|0}(\vx_t \mid \vx_0)}
= \frac{e^{-\bar\sigma(t)}}{1-e^{-\bar\sigma(t)}} \frac{N(\vy, \vx_0)}{N(\vx_t, \vx_0)}.
\end{align}
The parameterized concrete score (from parameterized Eq.\ref{eq:concrete_score_recasting}) is:
\begin{align}
s_\theta(\vx_t,t)_{\vy}
= \frac{e^{-\bar\sigma(t)}}{1-e^{-\bar\sigma(t)}} \frac{1}{N(\vx_t,\vy)} \sum_{i\in I(\vx_t, \vy)}\bar{s}_{\theta}(\vx_t, t)[i, v(\vx_t, \vy)].
\end{align}

We examine the expression inside the bracket of the DSE objective by substituting these definitions. Let $v=v(\vx_t, \vy)$ for brevity in the following block:
\begin{equation}
\begin{aligned}
&s_\theta(\vx_t,t)_{\vy}
- \frac{p_{t|0}(\vy \mid \vx_0)}{p_{t|0}(\vx_t \mid \vx_0)} \log s_\theta(\vx_t,t)_{\vy}
+ K\!\left(\frac{p_{t|0}(\vy \mid \vx_0)}{p_{t|0}(\vx_t \mid \vx_0)}\right) \\
&= \frac{e^{-\bar\sigma(t)}}{1-e^{-\bar\sigma(t)}} \frac{1}{N(\vx_t,\vy)} \left(\sum_{i\in I(\vx_t, \vy)}\bar{s}_{\theta}(\vx_t, t)[i, v]\right) \\
&\quad - \left(\frac{e^{-\bar\sigma(t)}}{1-e^{-\bar\sigma(t)}} \frac{N(\vy, \vx_0)}{N(\vx_t, \vx_0)}\right)
\log\!\left(\frac{e^{-\bar\sigma(t)}}{1-e^{-\bar\sigma(t)}} \frac{1}{N(\vx_t,\vy)} \left(\sum_{i\in I(\vx_t, \vy)}\bar{s}_{\theta}(\vx_t, t)[i, v]\right)\right) \\
&\quad + K\!\left(\frac{e^{-\bar\sigma(t)}}{1-e^{-\bar\sigma(t)}} \frac{N(\vy, \vx_0)}{N(\vx_t, \vx_0)}\right).
\end{aligned}
\end{equation}

We expand the $K(\cdot)$ term using $K(a)=a(\log a - 1)$. By expanding the logarithms ($\log(AB) = \log A + \log B$), we observe that the terms involving the time factor $\log(\frac{e^{-\bar\sigma(t)}}{1-e^{-\bar\sigma(t)}})$ cancel out exactly.

The expression inside the bracket simplifies significantly by factoring out the time prefactor:
\begin{equation}
\label{eq:app_dse_simplified_bracket}
\begin{aligned}
&= \frac{e^{-\bar\sigma(t)}}{1-e^{-\bar\sigma(t)}} \Bigg[
\frac{1}{N(\vx_t,\vy)} \left(\sum_{i\in I(\vx_t, \vy)}\bar{s}_{\theta}(\vx_t, t)[i, v]\right) \\
&\qquad - \frac{N(\vy, \vx_0)}{N(\vx_t, \vx_0)}\log\left(\frac{1}{N(\vx_t,\vy)} \left(\sum_{i\in I(\vx_t, \vy)}\bar{s}_{\theta}(\vx_t, t)[i, v]\right)\right) \\
&\qquad + \frac{N(\vy, \vx_0)}{N(\vx_t, \vx_0)}\left(\log \frac{N(\vy, \vx_0)}{N(\vx_t, \vx_0)} - 1\right)
\Bigg].
\end{aligned}
\end{equation}

Substituting this simplified bracket back into the main objective equation and multiplying by $Q_t(\vy, \vx_t) = \sigma(t) N(\vx_t, \vy)$. The DSE objective can be decomposed into three terms (T1, T2, T3):
\begin{align}
\mathcal{L}^\text{DSE}_\theta(\vx_0) = \text{T1} + \text{T2} + \text{T3}.
\end{align}

We define these terms based on the components derived above:
\begin{align}
    \text{T1} &= \underset{t, \vx_t}{\mathbb{E}} \frac{\sigma(t)e^{-\bar\sigma(t)}}{1-e^{-\bar\sigma(t)}} \sum_{\vy \succ_1 \vx_t} N(\vx_t, \vy) \left[ \frac{1}{N(\vx_t, \vy)} \sum_{i\in I(\vx_t, \vy)}\bar{s}_{\theta}(\vx_t, t)[i, v] \right]. \\
    \text{T2} &= \underset{t, \vx_t}{\mathbb{E}} \frac{\sigma(t)e^{-\bar\sigma(t)}}{1-e^{-\bar\sigma(t)}} \sum_{\vy \succ_1 \vx_t} N(\vx_t, \vy) \left[ - \frac{N(\vy,\vx_0)}{N(\vx_t,\vx_0)} \log\left(\frac{1}{N(\vx_t,\vy)} \sum_{i\in I(\vx_t, \vy)}\bar{s}_{\theta}(\vx_t, t)[i, v]\right) \right]. \\
    \text{T3} &= \underset{t, \vx_t}{\mathbb{E}} \frac{\sigma(t)e^{-\bar\sigma(t)}}{1-e^{-\bar\sigma(t)}} \sum_{\vy \succ_1 \vx_t} N(\vx_t, \vy) K\left(\frac{N(\vy,\vx_0)}{N(\vx_t,\vx_0)}\right).
\end{align}

We now apply Jensen's inequality to T2. The term inside the logarithm is an average of insertion scores. Because the logarithm function is concave, $\log(\frac{1}{N}\sum a_i) \ge \frac{1}{N}\sum \log a_i$. Since the logarithm is negated, this leads to an upper bound on T2:
\begin{align}
    - \log\left( \frac{1}{N(\vx_t, \vy)} \sum_{i\in I(\vx_t, \vy)}\bar{s}_{\theta}(\vx_t, t)[i, v] \right) \le - \frac{1}{N(\vx_t, \vy)} \sum_{i\in I(\vx_t, \vy)} \log \bar{s}_{\theta}(\vx_t, t)[i, v].
\end{align}
We define T2$_\text{Bound}$ as the upper bound for T2:
\begin{align}
    \text{T2} \le \text{T2}_\text{Bound} = \underset{t, \vx_t}{\mathbb{E}} \frac{\sigma(t)e^{-\bar\sigma(t)}}{1-e^{-\bar\sigma(t)}} \sum_{\vy \succ_1 \vx_t} N(\vx_t, \vy) \frac{N(\vy, \vx_0)}{N(\vx_t, \vx_0)} \left[ - \frac{1}{N(\vx_t, \vy)} \sum_{i\in I(\vx_t, \vy)} \log \bar{s}_{\theta}(\vx_t, t)[i, v] \right].
\end{align}

We define the DISE objective as the upper bound obtained by replacing T2 with T2$_\text{Bound}$, ensuring $\mathcal{L}^\text{DISE}_\theta(\vx_0) \ge \mathcal{L}^\text{DSE}_\theta(\vx_0)$:
\begin{align}
    \mathcal{L}^\text{DISE}_\theta(\vx_0) = \text{T1} + \text{T2}_\text{Bound} + \text{T3}.
\end{align}

To simplify the nested summations and transform the objective from state-level ($\vy$) to operation-level ($(i,v)$), we rely on the following identity.

\begin{lemma}[Summation Change of Variables]\label{lemma:summation_identity}
Let $v(\vx_t,\vy)$ denote the unique inserted token and $I(\vx_t,\vy)$ the set of valid insertion positions that yield $\vy$ from $\vx_t$. For any function $G$,
\begin{align}
\sum_{\vy \succ_1 \vx_t}\ \sum_{i\in I(\vx_t,\vy)}
G\!\big(i,\, v(\vx_t,\vy),\, \vy\big)
\;=\;
\sum_{i,v}\ G\!\big(i,\, v,\, \mathrm{Ins}(\vx_t,i,v)\big).
\end{align}
\end{lemma}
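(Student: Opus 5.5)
The identity is a reindexing of a finite sum. I will prove it by exhibiting a bijection between the index sets on the two sides and checking that the summands agree under it.

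\textbf{Index sets.} The left-hand side is a sum over the set of pairs
\[
A=\{(\vy,i):\vy\succ_1\vx_t,\ i\in I(\vx_t,\vy)\}.
\]
The right-hand side is a sum over the set of actions
\[
B=[0,|\vx_t|)_\mathbb{Z}\times\mathcal V,
\]
the same index range used in the definition of the insertion score (Eq.~\ref{eq:insertion-score}). Both sets are finite, so every rearrangement below is legitimate.

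\textbf{The maps.} Define $\Phi:B\to A$ by $\Phi(i,v)=(\mathrm{Ins}(\vx_t,i,v),\,i)$. Define $\Psi:A\to B$ by $\Psi(\vy,i)=(i,\,v(\vx_t,\vy))$.

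\textbf{Well-definedness of $\Phi$.}
\begin{itemize}
\item The sequence $\vy=\mathrm{Ins}(\vx_t,i,v)$ has exactly one more token than $\vx_t$, and deleting the inserted token recovers $\vx_t$. Hence $\vy\succ_1\vx_t$.
\item The inserted token is unique, i.e.\ $v(\vx_t,\vy)=v$. To see this, note that the multisets of tokens of $\vy$ and $\vx_t$ differ by exactly one copy of $v$. So every single-token insertion turning $\vx_t$ into $\vy$ must insert $v$.
\item Therefore $\mathrm{Ins}(\vx_t,i,v(\vx_t,\vy))=\vy$, which gives $i\in I(\vx_t,\vy)$ by the definition of $I$.
\end{itemize}

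\textbf{Well-definedness of $\Psi$.} This is immediate, since $v(\vx_t,\vy)\in\mathcal V$ and $I(\vx_t,\vy)$ consists of admissible positions.

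\textbf{Inverse identities.}
\begin{itemize}
\item $\Psi\circ\Phi=\mathrm{id}_B$: this uses $v(\vx_t,\mathrm{Ins}(\vx_t,i,v))=v$, established above.
\item $\Phi\circ\Psi=\mathrm{id}_A$: for $(\vy,i)\in A$, the definition of $I(\vx_t,\vy)$ gives $\mathrm{Ins}(\vx_t,i,v(\vx_t,\vy))=\vy$.
\end{itemize}
So $\Phi$ is a bijection.

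\textbf{Matching the summands.} Under $\Phi$, the left summand $G(i,v(\vx_t,\vy),\vy)$ evaluated at $(\vy,i)=\Phi(i,v)$ equals $G(i,v,\mathrm{Ins}(\vx_t,i,v))$. This is exactly the right summand, and the identity follows.

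\textbf{Main obstacle.} The only step needing care is the uniqueness of the inserted token $v(\vx_t,\vy)$; the multiset-counting argument settles it.

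A useful consequence: taking $G\equiv 1$ recovers $N(\vx_t,\vy)=|I(\vx_t,\vy)|$ summed over $\vy$. This is consistent with the counting used in Appendix~\ref{app:concrete_score_recasting}.
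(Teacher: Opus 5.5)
Your proof is correct and is essentially the paper's argument: the paper uses the same bijection $(i,v)\mapsto(\mathrm{Ins}(\vx_t,i,v),\,i)$ with inverse $(\vy,i)\mapsto(i,\,v(\vx_t,\vy))$ between the action set and the pairs $(\vy,i)$. Your version adds explicit well-definedness checks, notably the multiset argument that the inserted token is unique, which the paper leaves implicit.
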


\begin{proof}
Define the index sets
\begin{align}
\mathcal A=\{(\vy,i):\ \vy\succ_1\vx_t,\ i\in I(\vx_t,\vy)\},\qquad
\mathcal B=\{(i,v):\ i \in \{0, \dots, |\vx_t|\}, v \in \mathcal{V}\}.
\end{align}
The map $g:\mathcal B\to\mathcal A$ defined by $g(i,v)=(\mathrm{Ins}(\vx_t,i,v),\,i)$ is a bijection, with its inverse $f:\mathcal A\to\mathcal B$ defined by $f(\vy,i)=(i,\,v(\vx_t,\vy))$. Changing variables over this bijection gives the claimed identity.
\end{proof}

We apply Lemma~\ref{lemma:summation_identity} to simplify T1, T2$_\text{Bound}$, and T3.

For T1, the $N(\vx_t, \vy)$ terms cancel before the summation.
\begin{align}
\text{T1} &= \underset{t, \vx_t}{\mathbb{E}} \frac{\sigma(t)e^{-\bar\sigma(t)}}{1-e^{-\bar\sigma(t)}} \sum_{\vy \succ_1 \vx_t} \sum_{i\in I(\vx_t, \vy)}\bar{s}_{\theta}(\vx_t, t)[i, v(\vx_t, \vy)] \\
&= \underset{t, \vx_t}{\mathbb{E}} \frac{\sigma(t)e^{-\bar\sigma(t)}}{1-e^{-\bar\sigma(t)}} \sum_{i, v} \bar{s}_{\theta}(\vx_t, t)[i, v]. \quad \text{(Using Lemma~\ref{lemma:summation_identity})}
\end{align}

For T2$_\text{Bound}$, cancellation of $N(\vx_t, \vy)$ yields:
\begin{align}
\text{T2}_\text{Bound} &= \underset{t, \vx_t}{\mathbb{E}} \frac{\sigma(t)e^{-\bar\sigma(t)}}{1-e^{-\bar\sigma(t)}} \sum_{\vy \succ_1 \vx_t} \sum_{i\in I(\vx_t, \vy)} - \frac{N(\vy, \vx_0)}{N(\vx_t, \vx_0)} \log \bar{s}_{\theta}(\vx_t, t)[i, v(\vx_t, \vy)].
\end{align}
Using Lemma~\ref{lemma:summation_identity}, noting that $\vy = \text{Ins}(\vx_t, i, v)$:
\begin{align}
\text{T2}_\text{Bound} = \underset{t, \vx_t}
{\mathbb{E}} \frac{\sigma(t)e^{-\bar\sigma(t)}}{1-e^{-\bar\sigma(t)}}
{\sum_{i, v} - \frac{N(\text{Ins}(\vx_t, i, v), \vx_0)}{N(\vx_t, \vx_0)} \log\bar{s}_{\theta}(\vx_t, t)[i, v] }.
\end{align}

For T3, we first utilize the fact that $N(\vx_t, \vy) = \sum_{i \in I(\vx_t, \vy)} 1$.
\begin{align}
\text{T3} &= \underset{t, \vx_t}{\mathbb{E}} \frac{\sigma(t)e^{-\bar\sigma(t)}}{1-e^{-\bar\sigma(t)}} \sum_{\vy \succ_1 \vx_t} \sum_{i\in I(\vx_t, \vy)} K\left(\frac{N(\vy, \vx_0)}{N(\vx_t, \vx_0)}\right).
\end{align}
Applying Lemma~\ref{lemma:summation_identity} to T3:
\begin{align}
\text{T3} = \underset{t, \vx_t}
{\mathbb{E}} \frac{\sigma(t)e^{-\bar\sigma(t)}}{1-e^{-\bar\sigma(t)}}
{\sum_{i, v} K\left(\frac{N(\text{Ins}(\vx_t, i, v), \vx_0)}{N(\vx_t, \vx_0)}\right)}.
\end{align}

We combine T1, T2$_\text{Bound}$, and T3 to obtain the final DISE objective. Let $C$ denote the constant term T3's summand.
\begin{align}
\mathcal L_\theta^\text{DISE}(\vx_0) &= \text{T1} + \text{T2}_\text{Bound} + \text{T3} \\
&= \underset{t, \vx_t}
{\mathbb{E}}
\Bigg\{{ \frac{\sigma(t)e^{-\bar\sigma(t)}}{1-e^{-\bar\sigma(t)}} \sum_{i, v}
\bigg[\bar{s}_{\theta}(\vx_t, t)[i, v]-\frac{N(\text{Ins}(\vx_t, i, v), \vx_0)}{N(\vx_t, \vx_0)} \log\bar{s}_{\theta}(\vx_t, t)[i, v]
+ C\bigg]\Bigg\}}.
\end{align}

\subsection{Derivation of the Sequence-Level Normalization Property~(Eq.~\ref{eq:seqnorm})}
\label{app:seqnorm}

In the fixed-length setting (Sec. \ref{sec:dice}), we assume $|\vx_0| = K$ (a constant) for all data samples. Under this assumption, the insertion score becomes time-independent. We aim to prove the normalization property (Eq.~\ref{eq:seqnorm}), restated here for the fixed-length context:
\begin{align}
    \sum_{i,v} \bar{s}(\vx_t)[i, v] = K - |\vx_t|.
\end{align}
The proof relies on a fundamental combinatorial identity regarding subsequence counts.

\begin{lemma}[Subsequence Count Identity]
\label{lemma:combinatorial_identity}
For any sequences $\vx_t$ and $\vx_0$ such that $\vx_t$ is a subsequence of $\vx_0$, the following identity holds:
\begin{equation}
\label{eq:combinatorial_identity}
\sum_{i,v} N(\text{Ins}(\vx_t, i, v), \vx_0) = N(\vx_t, \vx_0)(|\vx_0| - |\vx_t|).
\end{equation}
\end{lemma}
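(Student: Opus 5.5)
We prove Lemma~\ref{lemma:combinatorial_identity} by double counting: both sides count the same finite set of pairs. Throughout, let $n=|\vx_t|$ and $m=|\vx_0|$, and exclude the undeletable \texttt{<BOS>} token as in the main text. An \emph{occurrence} of a sequence $\vz$ in $\vx_0$ is a strictly increasing index tuple $(j_1<\dots<j_{|\vz|})$ with $\vx_0[j_k]=\vz[k]$. By definition, $N(\vz,\vx_0)$ is the number of occurrences of $\vz$ in $\vx_0$.

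\emph{Right-hand side.} Consider the set
\[
\mathcal S=\{(J,j): J \text{ an occurrence of } \vx_t \text{ in } \vx_0,\ j\notin J\}.
\]
For each occurrence $J$ there are exactly $m-n$ positions $j$ of $\vx_0$ not used by $J$. Hence $|\mathcal S| = N(\vx_t,\vx_0)(m-n)$.

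\emph{Left-hand side.} Consider the set
\[
\mathcal T=\{(i,v,J'): J' \text{ an occurrence of } \mathrm{Ins}(\vx_t,i,v) \text{ in } \vx_0\},
\]
so that $|\mathcal T|=\sum_{i,v}N(\mathrm{Ins}(\vx_t,i,v),\vx_0)$. Here $(i,v)$ ranges over the same action set as in the DISE sum, i.e.\ insertion after position $i\in\{0,\dots,n\}$, with index $0$ being \texttt{<BOS>}.

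\emph{Bijection $\Phi:\mathcal T\to\mathcal S$.} Given $(i,v,J')$, the inserted symbol $v$ sits at slot $i+1$ of $\mathrm{Ins}(\vx_t,i,v)$ and is matched to some index $j=J'_{i+1}$. Set $\Phi(i,v,J')=(J'\setminus\{j\},\,j)$. Deleting slot $i+1$ from the occurrence $J'$ leaves an occurrence of $\vx_t$, and $j$ lies outside it, so $\Phi$ maps into $\mathcal S$.

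For the inverse, take $(J,j)\in\mathcal S$. Define $i=\#\{k: J_k<j\}$, let $v=\vx_0[j]$, and set $J'=J\cup\{j\}$. Since $j$ falls strictly between $J_i$ and $J_{i+1}$, the sorted tuple $J'$ is an occurrence of $\mathrm{Ins}(\vx_t,i,v)$ in which $v$ is matched to $j$. The two maps are mutually inverse:
\begin{itemize}
\item The value of $i$ is recovered because slot $i+1$ of $J'$ is preceded by exactly $i$ indices of $J$.
\item The token $v$ is forced to equal $\vx_0[j]$.
\end{itemize}
Therefore $|\mathcal T|=|\mathcal S|$, which proves the identity.

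\emph{Main obstacle.} The one delicate point is that distinct actions $(i,v)$ can yield the same sequence. For example, inserting ``a'' after position 1 or after position 2 of ``b a g'' gives the same string, as in footnote~\ref{footnote:multi}. One might therefore worry about overcounting. This is not a problem, because $\mathcal T$ is indexed by actions $(i,v)$ rather than by resulting strings, and the inverse map assigns each $(J,j)$ exactly one $i$, namely the count of indices of $J$ below $j$. For instance, when $J'$ matches ``b a a g'' to indices, deleting the first ``a'' and deleting the second ``a'' give two distinct elements of $\mathcal S$, and they correspond to the two distinct actions $i=1$ and $i=2$.

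The remaining boundary cases are routine:
\begin{itemize}
\item Insertion at the start ($i=0$, after \texttt{<BOS>}) and at the end ($i=n$) are covered because $i$ ranges over $0,\dots,n$.
\item The hypothesis that $\vx_t$ is a subsequence of $\vx_0$ is not needed: if it fails, both sides are $0$.
\end{itemize}
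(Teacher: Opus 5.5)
Your proof is correct and is essentially the paper's argument. The paper double-counts with the same bijection between pairs (occurrence of $\vx_t$ in $\vx_0$, unused index $j$) and pairs (action $(i,v)$, occurrence of $\mathrm{Ins}(\vx_t,i,v)$), recovering $i$ as the number of occurrence indices below $j$, setting $v=\vx_0[j]$, and inverting by removing the $(i+1)$-th index.
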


\begin{proof}[Proof of Lemma~\ref{lemma:combinatorial_identity}]
We prove the statement $\sum_{i, v} N(\text{Ins}(\vx_t, i, v), \vx_0) = N(\vx_t, \vx_0)(|\vx_0|-|\vx_t|)$ via a bijective proof, by constructing two sets of equal cardinality. Let $S(\vx,\vz)$ denote the set of index tuples corresponding to all occurrences of a subsequence $\vx$ in $\vz$, such that $N(\vx, \vz) = |S(\vx, \vz)|$.

First, consider the set A, defined as the set of pairs $(I, j)$, where $I$ is the index tuple of an occurrence of $\vx_t$ in $\vx_0$, and $j$ is an index in $\vx_0$ that is not part of that occurrence:
\begin{equation}
A = \{(I, j) : I \in S(\vx_t, \vx_0), j \in \{1,\ldots,|\vx_0|\} \setminus I\}.
\end{equation}
The cardinality of A is $|A| = N(\vx_t, \vx_0) (|\vx_0| - |\vx_t|)$.

Second, consider the set B, defined as the set of pairs $((i, v), J)$, where $(i, v)$ is an insertion operation on $\vx_t$, and $J$ is the index tuple of an occurrence of the resulting sequence, $\text{Ins}(\vx_t, i, v)$, in $\vx_0$:
\begin{equation}
B = \{((i, v), J) : J \in S(\text{Ins}(\vx_t, i, v), \vx_0)\}.
\end{equation}
The cardinality of B is $|B| = \sum_{i,v} N(\text{Ins}(\vx_t, i,v), \vx_0)$.

We now establish a bijection between A and B. For any element $(I, j) \in A$, we define a mapping to an element in B as follows: let the inserted token be $v = \vx_0[j]$, and let the insertion position relative to $\vx_t$ be $i = |\{k \in I : k < j\}|$. The new index tuple is $J = I \cup \{j\}$, sorted. The subsequence $\vx_0[J]$ is precisely $\text{Ins}(\vx_t, i, v)$ by construction. This defines a unique mapping $f: A \to B$.

Conversely, for any element $((i, v), J) \in B$, we can define an inverse mapping. The index of the inserted token in $\vx_0$ is the $(i+1)$-th element of the sorted tuple $J$; let this be $j$. Removing this index yields the tuple $I = J \setminus \{j\}$, which corresponds to an occurrence of $\vx_t$. This defines a unique mapping $g: B \to A$.

Since a one-to-one correspondence exists between the sets, their cardinalities must be equal. Therefore, $|A| = |B|$, which proves the lemma.
\end{proof}

\begin{proof}[Proof of Eq.~\ref{eq:seqnorm}]
Under the fixed-length assumption ($|\vx_0|=K$), the definition of the insertion score (Eq.~\ref{eq:insertion-score}) simplifies because the time-dependent terms $(1-e^{-\bsig(t)})^{|\vx_0|}$ are constant $(1-e^{-\bsig(t)})^{K}$ and cancel out, leading to the time-independent score:
\begin{align}
    \bar{s}(\vx_t)[i, v] = \frac{\mathbb{E}_{\vx_{0}}[ N(\text{Ins}(\vx_t, i, v),\vx_{0})]}{\mathbb{E}_{\vx_{0}}[ N(\vx_{t},\vx_{0})]}.
\end{align}
We sum this score over all possible insertion operations $(i, v)$:
\begin{align}
    \sum_{i,v} \bar{s}(\vx_t)[i, v] &= \sum_{i,v} \frac{\mathbb{E}_{\vx_{0}}[ N(\text{Ins}(\vx_t, i, v),\vx_{0})]}{\mathbb{E}_{\vx_{0}}[ N(\vx_{t},\vx_{0})]} \\
    &= \frac{1}{\mathbb{E}_{\vx_{0}}[ N(\vx_{t},\vx_{0})]} \mathbb{E}_{\vx_{0}}\left[ \sum_{i,v} N(\text{Ins}(\vx_t, i, v),\vx_{0}) \right].
\end{align}
We apply the combinatorial identity (Lemma~\ref{lemma:combinatorial_identity}) to the summation inside the expectation, substituting $|\vx_0|=K$:
\begin{align}
    \sum_{i,v} N(\text{Ins}(\vx_t, i, v),\vx_{0}) = N(\vx_t, \vx_0)(K - |\vx_t|).
\end{align}
Substituting this back:
\begin{align}
    \sum_{i,v} \bar{s}(\vx_t)[i, v] &= \frac{1}{\mathbb{E}_{\vx_{0}}[ N(\vx_{t},\vx_{0})]} \mathbb{E}_{\vx_{0}}\left[ N(\vx_t, \vx_0)(K - |\vx_t|) \right].
\end{align}
Since $(K - |\vx_t|)$ is constant with respect to the expectation over $\vx_0$:
\begin{align}
    \sum_{i,v} \bar{s}(\vx_t)[i, v] &= (K - |\vx_t|) \frac{\mathbb{E}_{\vx_{0}}\left[ N(\vx_t, \vx_0) \right]}{\mathbb{E}_{\vx_{0}}[ N(\vx_{t},\vx_{0})]} = K - |\vx_t|.
\end{align}
This confirms the normalization property stated in Eq.~\ref{eq:seqnorm}.
\end{proof}

\subsection{Derivation of the DICE Objective for Fixed-Length Data (Eq.~\ref{eq:dice})}
\label{app:dice_derivation}

In the fixed-length setting (Section~\ref{sec:dice}), we assume $|\vx_0| = K$ (constant). This assumption leads to time-independent insertion scores $\bar{s}_{\theta}(\vx_t)[i, v]$ (as shown in Appendix~\ref{app:seqnorm}) and allows for the exact simplification of the DISE objective into the Denoising Insertion Cross-Entropy (DICE) objective.

We start from the DISE objective (Eq.\ref{eq:dise}):
\begin{align}
    \mathcal L_\theta^\text{DISE}(\vx_0) = \underset{t, \vx_t}
    {\mathbb{E}} \frac{\sigma(t) e^{-\bsig(t)}}{1-e^{-\bsig(t)}} \sum_{i, v}
    \bigg[&\bar{s}_{\theta}(\vx_t)[i, v] - \frac{N(\text{Ins}(\vx_t, i, v), \vx_0)}{N(\vx_t, \vx_0)} \log\bar{s}_{\theta}(\vx_t)[i, v] \nonumber \\
   &+ K\left(\frac{N(\text{Ins}(\vx_t, i, v), \vx_0)}{N(\vx_t, \vx_0)}\right)\bigg].
\end{align}

We rearrange the expression inside the square brackets using the definition $K(a) = a(\log a - 1)$.
\begin{align}
    &\bar{s}_{\theta}[i, v] - \frac{N(\text{Ins}(\vx_t, i, v), \vx_0)}{N(\vx_t, \vx_0)} \log\bar{s}_{\theta}[i, v] + \frac{N(\text{Ins}(\vx_t, i, v), \vx_0)}{N(\vx_t, \vx_0)}\left(\log \frac{N(\text{Ins}(\vx_t, i, v), \vx_0)}{N(\vx_t, \vx_0)} - 1\right) \nonumber \\
    &= \left(\bar{s}_{\theta}[i, v] - \frac{N(\text{Ins}(\vx_t, i, v), \vx_0)}{N(\vx_t, \vx_0)}\right) \nonumber \\
    &\quad + \frac{N(\text{Ins}(\vx_t, i, v), \vx_0)}{N(\vx_t, \vx_0)} \left(\log \frac{N(\text{Ins}(\vx_t, i, v), \vx_0)}{N(\vx_t, \vx_0)} - \log\bar{s}_{\theta}[i, v]\right).
    \label{eq:app_dice_rearranged_bracket}
\end{align}

We substitute this rearrangement back into the DISE objective:
\begin{align}
\label{eq:app_dise_kl_form}
    \mathcal L_\theta^\text{DISE}(\vx_0) = \underset{t, \vx_t}{\mathbb{E}} \frac{\sigma(t) e^{-\bsig(t)}}{1-e^{-\bsig(t)}} \sum_{i, v} \Bigg[ &\left(\bar{s}_{\theta}[i, v] - \frac{N(\text{Ins}(\vx_t, i, v), \vx_0)}{N(\vx_t, \vx_0)}\right) \nonumber \\
    &+ \frac{N(\text{Ins}(\vx_t, i, v), \vx_0)}{N(\vx_t, \vx_0)} \left(\log \frac{N(\text{Ins}(\vx_t, i, v), \vx_0)}{N(\vx_t, \vx_0)} - \log\bar{s}_{\theta}[i, v]\right) \Bigg].
\end{align}

We now utilize the crucial normalization properties derived from the fixed-length assumption ($|\vx_0|=K$). From Lemma~\ref{lemma:combinatorial_identity} (Appendix~\ref{app:seqnorm}), the true subsequence count ratios satisfy:
\begin{align}
    \sum_{i,v} \frac{N(\text{Ins}(\vx_t, i, v), \vx_0)}{N(\vx_t, \vx_0)} = K - |\vx_t|.
\end{align}
As discussed in Section~\ref{sec:dice}, we design the network architecture such that the parameterized scores $\bar{s}_{\theta}$ exactly satisfy the same normalization constraint:
\begin{align}
    \sum_{i,v} \bar{s}_{\theta}(\vx_t)[i, v] = K - |\vx_t|.
\end{align}

Because both the true ratios and the parameterized scores sum to the same value, the summation of the first group of terms in Eq.~\ref{eq:app_dice_rearranged_bracket} vanishes:
\begin{align}
    \sum_{i, v} \left(\bar{s}_{\theta}(\vx_t)[i, v] - \frac{N(\text{Ins}(\vx_t, i, v), \vx_0)}{N(\vx_t, \vx_0)}\right) = (K - |\vx_t|) - (K - |\vx_t|) = 0.
\end{align}

Therefore, the DISE objective simplifies exactly. We define this simplified form as the DICE objective, which is exactly equal to the DISE loss under the fixed-length setting ($\mathcal{L}^\text{DICE}_\theta(\vx_0) = \mathcal{L}^\text{DISE}_\theta(\vx_0)$):
\begin{align}
    \mathcal{L}^\text{DICE}_\theta(\vx_0) &= \underset{t, \vx_t}{\mathbb{E}} \frac{\sigma(t) e^{-\bsig(t)}}{1-e^{-\bsig(t)}} \sum_{i, v} \frac{N(\text{Ins}(\vx_t, i, v), \vx_0)}{N(\vx_t, \vx_0)} \left(\log \frac{N(\text{Ins}(\vx_t, i, v), \vx_0)}{N(\vx_t, \vx_0)} - \log\bar{s}_{\theta}(\vx_t)[i, v]\right).
\end{align}

Rearranging the terms inside the summation yields the final DICE objective:
\begin{align}
    \mathcal{L}^\text{DICE}_\theta(\vx_0) =
    \underset{t, \vx_t}
    {\mathbb{E}}
    \Bigg\{{\sum_{i, v} \frac{\sigma(t) e^{-\bsig(t)}}{1-e^{-\bsig(t)}}
    \frac{N(\text{Ins}(\vx_t, i, v), \vx_0)}{N(\vx_t, \vx_0)} \bigg[- \log\bar{s}_{\theta}(\vx_t)[i, v]
    + C\bigg]\Bigg\}},
\end{align}
where $C = \log\frac{N(\text{Ins}(\vx_t, i, v), \vx_0)}{N(\vx_t, \vx_0)}$ is a $\theta$-free constant.

%% file: app/impl_details.tex
\section{Algorithm Details}\label{app:alg}

{
\subsection{Dynamic Programming for Subsequence Counting}

Here we provide the pseudocode of the DP algorithms introduced in Sec.~\ref{sec:dp}. 
\begin{algorithm}\label{alg:pr_dp}
    \caption{Prefix DP (Eq.~\ref{eq:pr_dp})}
    \begin{algorithmic}[!h]
    \REQUIRE original sequence $\vx_0$ with length $m$, noised sequence $\vx_t$ with length $n$
    \STATE Initialize $N(\vx_t{[:i]}, \vx_0{[:j]}) =0, \forall i, j$
    \STATE Initialize $N(\vx_t{[:0]}, \vx_0{[:j]})=1, \forall j $
    \FOR{$j=1$ to $m$}
        \FOR{$i=1$ to $n$}
        { // in parallel}
            \STATE $N(\vx_t{[:i]}, \vx_0{[:j]}) = N(\vx_t{[:i]}, \vx_0{[:j-1]})+\delta(\vx_t{[i-1]},\vx_0{[j-1]})\cdot N(\vx_t{[:i-1]}, \vx_0{[:j-1]})$
        \ENDFOR
    \ENDFOR
    \end{algorithmic}
\end{algorithm}

\begin{algorithm}\label{alg:su_dp}
    \caption{Suffix DP (Eq.~\ref{eq:su_dp})}
    \begin{algorithmic}[!h]
    \REQUIRE original sequence $\vx_0$ with length $m$, noised sequence $\vx_t$ with length $n$
    \STATE Initialize $N(\vx_t{[i:]}, \vx_0{[j:]}) =0, \forall i, j$
    \STATE Initialize $N(\vx_t{[n:]}, \vx_0{[j:]})=1, \forall j $
    \FOR{$j=m-1$ to $0$}
        \FOR{$i=n-1$ to $0$}
        { // in parallel}
            \STATE $N(\vx_t{[i:]}, \vx_0{[j:]}) = N(\vx_t{[i:]}, \vx_0{[j+1:]})+\delta(\vx_t{[i]},\vx_0{[j]})\cdot N(\vx_t{[i+1:]}, \vx_0{[j+1:]})$
        \ENDFOR
    \ENDFOR
    \end{algorithmic}
\end{algorithm}
When we extend the DP algorithms to longer sequences, we will meet a numerical issue that the value of $N(\vx_t, \vx_0)$ and the values in the DP tables (Eq.~\ref{eq:pr_dp},\ref{eq:su_dp}) might be extremely large. For example, when $|\vx_0|=2048$, the maximum $N(\vx_t, \vx_0)$ is $\tbinom{2048}{1024} \sim 10^{614}$, larger than the upper limit of float64 precision $\sim 10^{308}$, resulting in a numerical overflow. However, what we want to compute to implement the DISE loss (Eq.~\ref{eq:dise}) of DID is the `N ratios': $\frac{N(\text{Ins}(\vx_t, i, v), \vx_0)}{N(\vx_t, \vx_0)}$, according to the sequence-level normalization property (Eq.~\ref{eq:seqnorm}, details in Appendix~\ref{app:seqnorm}), the ratios should be  $\le|\vx_0|- |\vx_t|$, i.e. the final results of the N ratios will not have any overflow issues.
Therefore, to address the numerical overflow of the intermediate DP results, we can transform the DP algorithms into log-domain.
\begin{algorithm}\label{alg:pr_dp}
    \caption{Prefix DP in log-domain}
    \begin{algorithmic}[!h]
    \REQUIRE original sequence $\vx_0$ with length $m$, noised sequence $\vx_t$ with length $n$, $\text{INF}=999999$
    \STATE Initialize $\log N(\vx_t{[:i]}, \vx_0{[:j]}) =\text{-INF}, \forall i, j$
    \STATE Initialize $\log N(\vx_t{[:0]}, \vx_0{[:j]})=0, \forall j $
    \FOR{$j=1$ to $m$}
        \FOR{$i=1$ to $n$}
        { // in parallel}
            \STATE $\log N(\vx_t{[:i]}, \vx_0{[:j]}) = \log\Big\{e^{\log N(\vx_t{[:i]}, \vx_0{[:j-1]})}+ e^{\log \delta(\vx_t{[i-1]},\vx_0{[j-1]})+ \log  N(\vx_t{[:i-1]}, \vx_0{[:j-1]})}\Big\}$
        \ENDFOR
    \ENDFOR
    \end{algorithmic}
\end{algorithm}

\begin{algorithm}\label{alg:su_dp}
    \caption{Suffix DP in log-domain}
    \begin{algorithmic}[!h]
    \REQUIRE original sequence $\vx_0$ with length $m$, noised sequence $\vx_t$ with length $n$, $\text{INF}=999999$
    \STATE Initialize $\log N(\vx_t{[i:]}, \vx_0{[j:]}) =\text{-INF}, \forall i, j$
    \STATE Initialize $\log N(\vx_t{[n:]}, \vx_0{[j:]})=0, \forall j $
    \FOR{$j=m-1$ to $0$}
        \FOR{$i=n-1$ to $0$}
        { // in parallel}
            \STATE $\log N(\vx_t{[i:]}, \vx_0{[j:]}) = \log \Big\{e^{\log N(\vx_t{[i:]}, \vx_0{[j+1:]})}+e^{\log \delta(\vx_t{[i]},\vx_0{[j]})+\log  N(\vx_t{[i+1:]}, \vx_0{[j+1:]})}\Big\}$
        \ENDFOR
    \ENDFOR
    \end{algorithmic}
\end{algorithm}
which can be efficiently implemented with the `logaddexp' operation provided by deep learning frameworks such as PyTorch, the DP tables now store the logN values instead of N values in the original version, and lower data precision (e.g. float32) could be enabled to save memory. Notably, the log-domain DP algorithms will encounter the log0 issue, we replace log0 with a large negative number (-999999 in our implementation).
The resulting numerical error of our log-domain DP is negligible, approximately on the order of $10^{-13}$ when using float64 and $10^{-4}$ with float32. However, the method fails to perform correctly with float16 and bfloat16 data types.

\subsection{DID Training Algorithm}
\begin{algorithm}
    \caption{DID Training}
    \begin{algorithmic}[!h]
    \REQUIRE Network $\bar{s}_{\theta}$, noise schedule $\sigma$, time $[0,1]$, samples from data distribution $\pdata$
    \REPEAT
     \STATE $\vx_0 \sim \pdata$, $t \sim U([0,1])$.
     \STATE Construct subsequence $\vx_t$ by removing tokens with the probability of $1 - e^{-\bsig(t)}$
     \STATE Calculate $N(\vx_t{[:i]}, \vx_0{[:j]}), \forall i,j$ by prefix DP
     \STATE Calculate $N(\vx_t{[i:]}, \vx_0{[j:]}), \forall i,j$ by suffix DP
     \STATE Calculate $ N(\text{Ins}(\vx_t, i, v), \vx_0) = \sum\nolimits_{j=1}^{m}  \delta(\vx_0{[j]}, v) \cdot N(\vx_t{[:i]}, \vx_0{[:j-1]}) \cdot N(\vx_t{[i:]}, \vx_0{[j:]}), \forall i,v$
     \IF{train on fixed-length data using DICE loss Eq.~\ref{eq:dice}}
     \STATE  Calculate $ L_\theta(\vx_t, \vx_0) ={\frac{\sigma(t) e^{-\bsig(t)}}{1-e^{-\bsig(t)}}\sum_{i, v} 
    \frac{N(\text{Ins}(\vx_t, i, v), \vx_0)}{N(\vx_t, \vx_0)} \bigg[- \log\bar{s}_{\theta}(\vx_t)[i, v]
    + C\bigg]\Bigg\}}$
     \ELSIF{train on variable-length data using DISE loss Eq.~\ref{eq:dise}}
     \STATE  Calculate $ L_\theta(\vx_t, \vx_0) =\frac{\sigma(t) e^{-\bsig(t)}}{1-e^{-\bsig(t)}}\sum_{i, v}\bigg[\bar{s}_{\theta}(\vx_t, t)[i, v]-\frac{N(\text{Ins}(\vx_t, i, v), \vx_0)}{N(\vx_t, \vx_0)} \log\bar{s}_{\theta}(\vx_t, t)[i, v]+ C\bigg]$
     \ENDIF
     \STATE Calculate $\nabla_\theta L(\vx_t, \vx_0)$ and run optimizer
     \UNTIL converged
    \end{algorithmic}
\end{algorithm}

\subsection{DID Inference Algorithm}
\begin{algorithm}
    \caption{DID Inference}
    \begin{algorithmic}[!ht]\label{alg:didgen}
    \REQUIRE Network $\bar{s}_\theta$, noise schedule $\sigma$, time range $[0,1]$, step size $\Delta t$    
       \STATE  $t \gets 1$, 
       $\vx_t \gets [\texttt{<BOS>}]$
        \WHILE{$t > 0$} 
            \STATE Calculate $p_{t-\Delta t|t}^\theta( (i,v)|\vx_t) =
\begin{cases}
\frac{\sigma(t)e^{-\bsig(t)}}{1-e^{-\bsig(t)}} \bar{s}_{\theta}(\vx_t, t)[i, v] \Delta t, & v \ne \varnothing \\
1-\sum_{w \ne \varnothing} p_{t-\Delta t|t}^\theta((i,w)|\vx_t), & v = \varnothing
\end{cases}$ 

\STATE Sample insertion actions $(i, v), \forall i$ based on  $p_{t-\Delta t|t}^\theta( (i,v)|\vx_t)$
            \STATE  Update insertion actions $(i, v), \forall i$  to construct $\vx_{t-\Dt}$
            \STATE $t  \gets t - \Delta t$
        \ENDWHILE
    \end{algorithmic}
\end{algorithm}

The algorithm above is for the unconditional generation of DID, for conditional generation, the difference is the probabilities for insertion actions in the middle of the prompt should be set to 0.

\subsection{PyTorch Implementation of the Dynamic Programming Algorithms}\label{app:pytorch}

}

\begin{lstlisting}[language=Python, caption={PyTorch source code for the computation of N-ratios $\frac{N(\text{Ins}(\vx_t, i, v), \vx_0)}{N(\vx_t, \vx_0)}, \forall i, v,$ as described in Sec.~\ref{sec:dp}.}]
LOG_ZERO=-999999
def safe_log(x, ):
    return torch.where(x == 0, LOG_ZERO, torch.log(x))

def get_N_ratio_logdomain(batch, remain_indices, seqlens, token_dim, sparse=True):
    # batched data alignment
    prefix_padded_xt = torch.zeros_like(batch, device=batch.device) - 1 # init as -1
    prefix_data_mask = seqlens[..., None] > torch.arange(batch.shape[1], device=batch.device)[None, ...]
    prefix_padded_xt[prefix_data_mask] = batch[remain_indices]
    prefix_si_eq_tj = (batch.unsqueeze(-1) == prefix_padded_xt.unsqueeze(-2))
    prefix_si_eq_tj_log = torch.log(prefix_si_eq_tj)

    suffix_padded_xt = torch.zeros_like(batch, device=batch.device) - 1 # init as -1
    suffix_data_mask = seqlens[..., None] > torch.arange(batch.shape[1] - 1, -1, -1, device=batch.device)[None, ...]
    suffix_padded_xt[suffix_data_mask] = batch[remain_indices]

    suffix_si_eq_tj_flipped = (torch.flip(batch, [1]).unsqueeze(-1) == torch.flip(suffix_padded_xt, [1]).unsqueeze(-2))
    suffix_si_eq_tj_log_flipped = torch.log(suffix_si_eq_tj_flipped)

    B, S = batch.shape

    # prefix si_eq_tj and suffix si_eq_tj combined
    combined_eq = torch.stack([prefix_si_eq_tj_log, suffix_si_eq_tj_log_flipped], dim=-1).permute(1, 2, 3, 0) # (S, S, 2, B)

    # prefix dp and suffix dp combined
    combined_dp = torch.zeros(S+1, S+1, 2, B,  dtype=torch.float64, device=batch.device)  # (S+1, S+1, 2, B)
    combined_dp[:, 0] = 1
    combined_dp = safe_log(combined_dp)
    for i in range(1, S+1):
        prev = combined_dp[i-1]
        torch.logaddexp(prev[1:], combined_eq[i-1] + prev[:-1], out=combined_dp[i, 1:])

    prefix_dp, suffix_dp = combined_dp[:, :, 0].permute(2, 0, 1), combined_dp[:, :, 1].permute(2, 0, 1)
    suffix_dp = torch.flip(suffix_dp, [1, 2])

    # N(Ins(x_t, i, v), x_0) / N(x_t, x_0) prefix-suffix dp
    V = token_dim 
    N_ratios = []

    for b in range(B):
        N = prefix_dp[b, -1, seqlens[b]] 
        pr = prefix_dp[b, :-1, 1:seqlens[b] + 1]
        su = suffix_dp[b, 1:, S - seqlens[b] + 1:]
        pr_su = (pr + su - N).exp() 

        if sparse: # sparse
            S, T = pr_su.shape
            rows = batch[b].unsqueeze(1).expand(S, T).reshape(-1)
            cols = torch.arange(T).to(batch.device).unsqueeze(0).expand(S, T).reshape(-1)
            values = pr_su.reshape(-1)

            mask = values.abs() >= 1e-6  # (S*T,) bool mask
            rows = rows[mask]
            cols = cols[mask]
            values = values[mask]

            indices = torch.stack([rows, cols], dim=0)
            N_ratio = torch.sparse_coo_tensor(indices, values, size=(V, T))
            N_ratios.append(N_ratio)
        else: # dense
            N_ratio = torch.zeros((V, pr_su.size(1)), dtype=pr_su.dtype, device=batch.device) # (V, T)
            N_ratio.index_add_(0, batch[b], pr_su)
            N_ratios.append(N_ratio)

    packed_N_ratios = torch.cat(N_ratios, 1) 

    if sparse: # sparse
        ret = packed_N_ratios.t().coalesce() # (\sum_b |x_t|_b, V)
    else: # dense
        ret = packed_N_ratios.t() # (\sum_b |x_t|_b, V)

    return ret
\end{lstlisting}

There are multiple tricks for the implementation of the subsequence counting DP algorithms to enhance the speed, for instance, 1) the prefix DP and suffix DP could be combined to improve the data parallelism, two DP tables could be updated in parallel; 2) the memory accessing could be contiguous to reduce IO cost by rearranging the axes of the DP tables; 3) in-place operation could be enabled to reduce IO cost.

%% file: app/exp_details.tex
\section{Experimental Details}\label{app:exp_details}

\subsection{Fixed-Length Training Details}
\textbf{Model architecture.} Following RADD, our models use an encoder-only transformer architecture with 
 a dropout rate 0.02, 
rotary position embedding~\citep{su2023roformerenhancedtransformerrotary}, and untied word embeddings between input and output. Other model architecture hyperparameters are summarized in Tab.~\ref{tab:model_arch}. The FFN dimension is 4$\times$ hidden dimension.
\begin{table}[h]
    \centering
    \begin{tabular}{lcccc}
    \toprule
         Size& Layers & Hidden Dimension & Attention Heads & {Non-Embedding Parameters} \\
         \midrule
         Small  & 12 & 768& 12 & {85M} \\
         Medium  & 24 & 1024& 16 & {302M} \\
         Large & 36& 1280& 20& {708M} \\
         \bottomrule
    \end{tabular}
    \caption{Model architecture hyperparameters for difference model sizes of RADD and \ours{}.}
    \label{tab:model_arch}
\end{table}

We use FlashAttention~\citep{dao2022flashattention} to support attention computation for packed variable-length sequences, while RADD uses PyTorch's standard scaled dot product attention for its fixed-length batched inputs, which is also based on FlashAttention.

\textbf{Dataset and pre-processing.} For a fair comparison, we train \ours{} on OpenWebText dataset~\citep{Gokaslan2019OpenWeb} with the same steps (400K steps) or compute budget (800K steps) as RADD~\citep{ou2025your}, under the log-linear noise schedule $\Bar\sigma(t)=-\log(1-t)$, we train \ours{} with a batch size 512, and a context length 1024. We tokenize OpenWebText dataset with the GPT2~\citep{radford2019language} tokenizer, the vocabulary size is 50257.
We follow the data pre-processing adopted in RADD, concatenating all sequences in the training dataset, then splitting it into chunks of fixed length 1024. 
We add a \texttt{<BOS>} special token to the first position of each chunk to model the insertion behavior before the first normal token by modeling the insertion after the \texttt{<BOS>} special token. We use the same token as \texttt{<EOS>} for \texttt{<BOS>}. We evaluate the zero-shot language modeling perplexity on WikiText, Lambada, Scientific Papers (Arxiv and Pubmed, abstract parts), AG News, LM1B, and PTB datasets~\citep{merity2016pointer,paperno-EtAl:2016:P16-1,Cohan_2018,Zhang2015CharacterlevelCN,chelba2014billionwordbenchmarkmeasuring,marcus-etal-1993-building} in Tab.~\ref{tbl:zero_shot_perplexity}.

\textbf{Optimization.} For a fair comparison, we did not do a hyperparameter search, our optimization configuration strictly follows RADD, we use AdamW~\citep{loshchilov2018decoupled} optimizer with $\beta_1=0.9$, $\beta_2 = 0.999$, $\epsilon=$ 1e-8, a weight decay rate of 0.03, a constant learning rate 3e-4 that linearly warmed up from 0 over the first
2500 steps, a gradient norm clipping value 1, an exponential moving average (EMA) with a decay rate 0.9999, and float16 is enabled for mixed-precision training. To avoid OOM errors during model training, we set the gradient accumulation steps to 1, 2, 2 for \ours{} small, medium, and large; and 2, 4, 4 for RADD small, medium, and large.

\subsection{Variable-Length Training Details}
\textbf{Model architecture.} Following ILM~\citep{patel2025insertionlanguagemodelssequence}, we train a small model with 85M non-embedding parameters. Different from the fixed-length model, the variable-length model is not time-independent, i.e. we need to input the time information into the network. Therefore, we employ an adaptive layernorm for each transformer block as in the practice of DiT~\citep{peebles2023scalable} (as well as the time-dependent masked diffusion models like SEDD~\citep{lou2024discrete} and MDLM~\citep{sahoo2024simple}), the condition embedding dimension is 128, which brings about extra parameters. To keep the total parameter amount of a transformer block unchanged, we reduce the FFN dimension to 3.5$\times$ hidden dimension. Following ILM, we use a dropout rate of 0.1, rotary positional embedding, and untied word embeddings between input and output. FlashAttention is also employed for the efficient computation of variable-length data.

\textbf{Dataset and pre-processing.} Following ILM, we train variable-length models on the Stories dataset. We tokenize it with the Bert-Base-Uncased tokenizer, whose vocabulary size is 30522. Each sentence in the datasets is an individual training datapoint; hence, the training sequences are of variable lengths. The Stories dataset is truncated with a maximum context length of 1024. The batch size is 512. Data is also noised with a log-linear noise schedule in the diffusion forward process.

\textbf{Optimization.} Following the experiment settings of ILM, we use an AdamW optimizer with $\beta_1=0.9$, $\beta_2 = 0.999$, $\epsilon=$ 1e-8, a weight decay of 0.01 on all parameters (including biases and normalization layers), a constant learning rate 1e-4 that linearly warmed up from 0 over the first
1000 steps, a gradient norm clipping value 1, an exponential moving average (EMA) with a decay rate 0.9999, and bfloat16 is enabled for mixed-precision training. Stories dataset is trained for 60k steps.   To avoid OOM errors during model training, we set the gradient accumulation steps to 1, 1, 1 for \ours{} small, medium, and large; and 2, 4, 4 for
RADD small, medium, and large.

\subsection{Sampling Details}
\textbf{Timestep discretization.} We use a standard uniform timestep discretizaition grid, i.e. $t(i) = i/N$, where $N$ is the total number of denoising steps in generation. 

\textbf{The number of samples.} We evaluate the averaged generative perplexity (evaluated by GPT2 Large), unigram entropy, generation length, and inference time over 1024 samples with a batch size of 32.

\textbf{Data precision.} According to the precision issue of Gumbel-argmax sampling with float32 discussed in~\citep{ou2025your,zheng2025masked}, we use float64 precision for all generation tasks to ensure accurate categorical samplings.

\subsection{Evaluation Metrics}

\textbf{Zero-shot language modeling perplexity} is used to evaluate how well a model is trained, which uses the model to be evaluated to calculate the exponential of the average negative log-likelihood per token on datasets unseen by the model (i.e. zero-shot):
\begin{align}\label{eq:ppl}
    \text{PPL} = \exp\Big(\mathbb{E}_{x\sim p_{\text{data}}(x)} \Big[-\frac{\log p_\theta(x)}{L(x)}\Big] \Big), 
\end{align}
where the likelihood can be exact (e.g. in auto-regressive models), or bounded (e.g. in diffusion models), and $L(x)$ is the length of $x$.

\textbf{Generative perplexity} is a widely used metric to evaluate the quality of model-generated text, whose definition is also as in Eq.\ref{eq:ppl}, the differences are $p_{\text{data}}$ is generated by the model to be evaluated, and $\theta$ is another off-the-shelf model to calculate the likelihood.

\textbf{Unigram entropy} is a metric to evaluate the diversity of model-generated text, which is based on the token occurrence frequency in a sentence. For a sentence $x$ with length $L$, the entropy is:
\begin{align}
    H = -\sum_{i=1}^L \frac{N(x_i, x)}{L(x)}\log_2 \frac{N(x_i, x)}{L(x)}, 
\end{align}
where $N(x_i, x)$ is the occurrence time of token $x_i$ in sentence $x$, and $L(x)$ is the length of $x$.


%% file: app/more_results.tex
\section{More Experimental Results}
\label{app:more_res}

\subsection{Efficiency Analysis of the DP Algorithms in DID Training}\label{app:dp_effi}

To better demonstrate the efficiency of the DP algorithms (as described in Sec.~\ref{sec:dp}) in DID training, we provide a more detailed profiling of DP. We report the DP time cost separately (Tab.~\ref{tab:dp_prof_f},\ref{tab:dp_prof_v}) and investigate the scalability of DP to longer sequences (Tab.~\ref{tab:dp_prof_s}). Note that the goal of DP is to prepare the training target and its time cost is independent to the model size. 

\begin{table}[h!]
    \centering
    \caption{DP efficiency analysis in the fixed-length training setting (Sec.~\ref{sec:fix_exp}). The overall training time and DP cost for 50 steps are reported  (in seconds).}
    \begin{tabular}{llll}
    \toprule
    & Small   & Medium   & Large    \\ 
    \midrule
    \ours{}  & {14.03} & {27.77} & {46.60}   \\
    \ours{}-DP  & 2.38 (17.0\%)  & 3.33 (12.0\%)  &  3.35 (7.2\%) \\
    \bottomrule
    \end{tabular} 

    \label{tab:dp_prof_f}
\end{table}

As shown in Tab.~\ref{tab:dp_prof_f}, the medium and large models use more time for DP because the gradient accumulation is enabled in these settings (set to 2, as described in Appendix~\ref{app:exp_details}), i.e. the same batch of data is divided into 2 parts, and DP is called twice, reducing the data parallelism and increasing the total time cost.

\begin{table}[h!]
    \centering
    \caption{DP efficiency analysis in the variable-length training setting (Sec.~\ref{sec:var_exp}). The overall training time and DP cost for 50 steps are reported  (in seconds).}
    \begin{tabular}{llll}
    \toprule
    & Small   & Medium   & Large    \\ 
    \midrule
    \ours{}  & 7.71 & 12.30 & 19.83   \\
    \ours{}-DP & 2.13 (27.6\%)	& 2.01 (16.3\%)	& 2.01 (10.1\%) \\
    \bottomrule
    \end{tabular} 
    \label{tab:dp_prof_v}
\end{table}

As shown in Tab.~\ref{tab:dp_prof_v}, the DP proportion is more than that in the fixed-length setting (Tab.~\ref{tab:dp_prof_f}), because in our implementation of DP (Appendix~\ref{app:pytorch}), we pad all sequences to the maximum length 1024 to perform a batched DP, so the DP cost is similar to that in the fixed-length setting (Tab.~\ref{tab:dp_prof_f}), while the tokens input to the network are much fewer, since the average length of the Stories dataset is much shorter than 1024 used in the fixed-length setting. There are still inefficiencies in the current implementation of DP, and we leave a thorough system-level optimization to future work.

Furthermore, we analyze the time cost of DP for different sequence lengths. 
\begin{table}[h!]
    \centering
    \caption{The time cost of DP for different sequence lengths. In line with the training setting in Sec.~\ref{sec:exp}, we set the batch size as 64, and sequence lengths ($S$) as 1k, 2k, 3k, and 4k. The time to perform 50 times of the DP algorithm to get the subsequence count ratios are reported  (in seconds). }
    \begin{tabular}{lccccc}
    \toprule
    Sequence Length ($S$) & 1k   & 2k   & 3k  & 4k & The Power Law Exponent $a$ ($T \propto S^a$)    \\ 
    \midrule
    DP Time ($T$) &2.05	&3.64	&8.07	&11.2	&1.26\\ 
    \bottomrule
    \end{tabular} 
    \label{tab:dp_prof_s}
\end{table}
Tab.~\ref{tab:dp_prof_s} demonstrates the efficiency of DP, the fitted power law exponent $a = 1.26 \in [1,2]$
 since the actual time complexity to update the 2-dimensional DP table is $O(S^2)$
 and one dimension of the DP table could be updated in parallel. 

\subsection{Efficiency Analysis of DID Inference}


We more detailedly re-evaluate the inference efficiency of RADD and DID fixed-length models in Tab.~\ref{tab:fixed_length} and variable-length models in Tab.~\ref{tab:variable_length}. As discussed in~\citep{zheng2024maskeddiffusionmodelssecretly}, neural function evaluation (NFE) and sampling are two main bottlenecks of DLM inference, we measure the costs of these two parts separately.
\begin{table}[h]
    \centering
    \caption{Inference time comparison of fixed-length DID and RADD trained on OWT. Times are in seconds.}
    \begin{tabular}{llccccccc}
\toprule
Method & Steps & 16 & 32 & 64 & 128 & 256 & 512 & 1024 \\
\midrule
RADD &  Time & 0.240 & 0.334 & 0.507 & 0.872 & 1.627 & 2.876 & 4.506 \\
& NFE & 0.080 & 0.161 & 0.323 & 0.642 & 1.262 & 2.234 & 3.308 \\
& Sampling & 0.019 & 0.036 & 0.071 & 0.141 & 0.280 & 0.558 & 1.115 \\
\midrule
DID &  Time & \textbf{0.171} & \textbf{0.225} & \textbf{0.359} & \textbf{0.604} & \textbf{1.029} & \textbf{1.818} & \textbf{2.987} \\
& Speedup (Time) & 1.40$\times$ & 1.48$\times$ & 1.41$\times$ & 1.44$\times$ & 1.58$\times$ & 1.58$\times$ & 1.51$\times$ \\
& NFE & \textbf{0.029} & \textbf{0.076}& \textbf{0.159} & \textbf{0.323} & \textbf{0.641} & \textbf{1.131} & \textbf{1.712} \\
& Speedup (NFE) & 2.76$\times$ & 2.12$\times$ & 2.03$\times$ & 1.98$\times$ & 1.97$\times$ & 1.98$\times$ & 1.93$\times$ \\
& Sampling & 0.011 & 0.031 & 0.069 & 0.144 & 0.293 & 0.589 & 1.182 \\
\bottomrule
\end{tabular}
    \label{tab:fixed_length}
\end{table}
For fixed-length models (Tab.~\ref{tab:fixed_length}), we observe that the NFE part is close to the theoretical $2\times$ speedup, since DID saves about half the FLOPs on $\texttt{<MASK>}$ in the inference process compared to RADD. Notably, when the total denoising steps are fewer, the speedup of NFE is more significant. This is because the maximum number of tokens that can be inserted in each step of DID inference cannot exceed the current length. When the total number of steps is small, the number of tokens decoded in the early steps will be bounded by this constraint, i.e., at step $i$, there are at most $2^{i-1}$ tokens input to the model, which brings greater NFE speedup of DID. Meanwhile, DID's performance is not affected by this reason. On the contrary, when the total number of steps is small, DID's decoding is much better than RADD's, as shown in Tab.~\ref{tab:genppl}. 


\begin{table}[h]
    \centering
    \caption{Inference time comparison of variable-length DID and padding-based RADD trained on Stories. Times are in seconds.}
    \begin{tabular}{llcccc}
\toprule
Method & Steps & 64 & 128 & 256 & 512 \\
\midrule
RADD &  Time & 0.249 & 0.452 & 0.848 & 1.505 \\
& NFE & 0.161 & 0.321 & 0.631 & 1.118 \\
& Sampling & 0.043 & 0.085 & 0.170 & 0.340 \\
\midrule
DID &  Time & \textbf{0.089} & \textbf{0.135} & \textbf{0.217} & \textbf{0.384} \\
& Speedup (Time) & 2.78$\times$ & 3.35$\times$ & 3.91$\times$ & 3.92$\times$ \\
& NFE & \textbf{0.027} & \textbf{0.058} & \textbf{0.110} & \textbf{0.224} \\
& Speedup (NFE) & 5.96$\times$ & 5.53$\times$ & 5.74$\times$ & 4.99$\times$ \\
& Sampling & \textbf{0.013} & \textbf{0.027} & \textbf{0.058} & \textbf{0.110} \\
& Speedup (Sampling) & 3.30$\times$ & 3.15$\times$ & 2.93$\times$ & 3.09$\times$ \\
\bottomrule
\end{tabular}
    \label{tab:variable_length}
\end{table}
For variable-length models (Tab.~\ref{tab:variable_length}), both NFE and sampling are accelerated since DID does not need to compute on the $\texttt{<MASK>}$ and $\texttt{<PAD>}$ tokens.

\subsection{Ablation Study of Sequence-Level Normalization for Fixed-Length Models}\label{app:abl}

We provide the ablation study of fixed-length models trained without the sequence-level normalization introduced in Sec.~\ref{sec:dice}, i.e. trained with the DISE objective in Eq.~\ref{eq:dise} rather than the DICE objective in Eq.~\ref{eq:dice}.  We train FLOPs-aligned models of the small size, i.e. trained for 800K steps for \ours{} models, and 400K steps for RADD. As shown in Tab.~\ref{tbl:zero_shot_perplexity_abl}, the ablation version (\ours{}-F w/o SeqNorm) exhibits an inferior performance compared to \ours{}-F, demonstrating that utilizing the DICE objective for training could achieve an enhanced performance in the fixed-length setting. On the other hand, the ablation version (\ours{}-F w/o SeqNorm) remains comparable to RADD, demonstrating the reasonability of the DISE objective.

\begin{table}[h!]
  \centering
    \caption{Ablation study of sequence-level normalization for fixed-length \ours{}, the zero-shot language modeling perplexity on seven datasets are reported. Results for these diffusion models are perplexity upper bounds.
 }
  \begin{tabular}{llccccccc}
    \toprule
        Size & Method & WikiText &  Lambada& Pubmed & AG News & LM1B & Arxiv & PTB \\ 
    \midrule
     
    Small &{RADD} & \underline{38.27} &   51.82 & 56.99 & \underline{73.18} & {72.99} & 85.95 & \textbf{108.79}\\ 
     & {\ours{}-F w/o SeqNorm} & 38.55 & \underline{50.17} & \underline{53.76} & 73.25 & \underline{72.69} & \underline{81.95} & 118.63\\ 
     & {\ours{}-F} &  \textbf{36.91} &  \textbf{48.00} & \textbf{52.89} & \textbf{71.48} & \textbf{72.04} & \textbf{78.38} & \underline{111.60} \\ 
\bottomrule 
  \end{tabular}
  \label{tbl:zero_shot_perplexity_abl}
\end{table}

\subsection{Language Modeling Performance for Variable-Length Models}\label{app:varlen_mppl}
\begin{figure}[t]
    \centering
    \includegraphics[width=\linewidth]{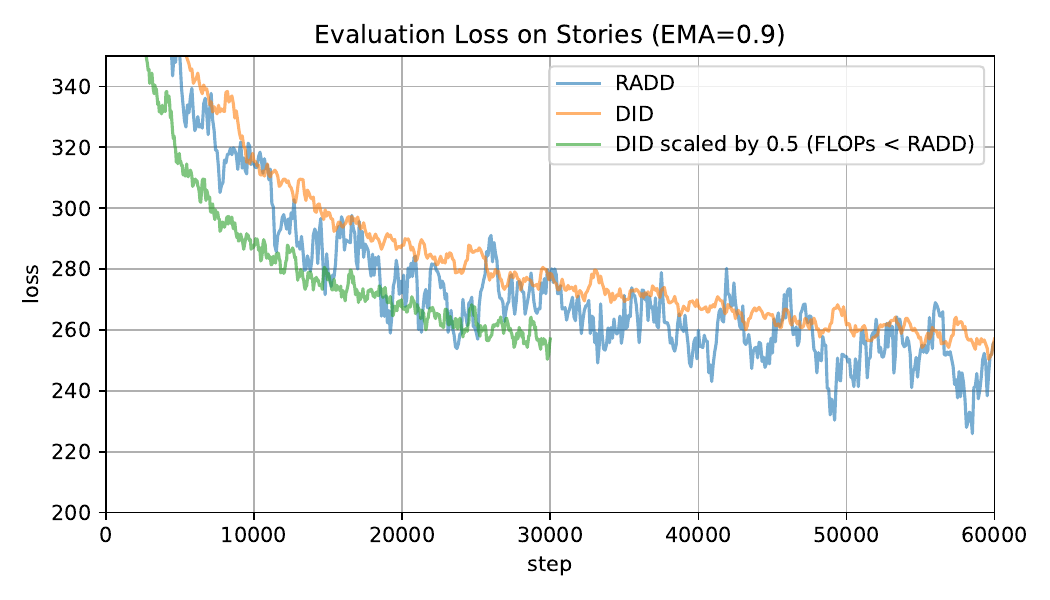}
    \caption{Evaluation loss curve on Stories dataset for RADD and \ours{} in the variable-length setting. }
    \label{fig:eval_st}
\end{figure}

We analyze the language modeling performance for variable-length models, as ILM does not have a likelihood-bounded training objective, only the language modeling performances of RADD and \ours{} could be evaluated. As Stories is a specialized dataset, i.e. not so general like OpenWebText, we only show the evaluation loss curve on its own validation dataset in Fig.~\ref{fig:eval_st}, where the curves for \ours{} exhibit more stability than RADD's. Besides, we also report the curve scaled by 0.5, whose FLOPs are less than RADD (at the same x-coordinate) as the \texttt{<PAD>} and \texttt{<MASK>} used for RADD  occupy more than 1/2 of the computational FLOPs in variable-length setting, this curve remains lower than the RADD curve, demonstrating the superiority of language modeling of \ours{} over RADD in the variable-length setting. 

%% file: app/more_results2.tex
\subsection{More Inference Strategies: Cosine Timestep Schedule}

In addition to the uniform timestep schedule $T(i) = \frac{i}{N}$, where $N$ is the total number of denoising steps in generation, we evaluate a different inference strategy of cosine timestep schedule $T(i) = \cos(\frac{\pi}{2} (1-\frac{i}{N}))$, whose timesteps are denser at the beginning of the reverse process~\citep{shi2024simplified}.
We evaluate the cosine timestep schedule using the fixed-length models trained on OWT in Tab.~\ref{tab:cosine_schedule}.

\begin{table}[h] 
    \centering
    \caption{Performance comparison with cosine timestep schedule on fixed-length models trained on OWT. Times are in seconds.}
    \begin{tabular}{llccccccc}
\toprule
Method & Steps & 16 & 32 & 64 & 128 & 256 & 512 & 1024 \\
\midrule
RADD & PPL & 220.74 & 140.23 & 107.42 & 93.72 & \textbf{85.16} & 87.79 & \textbf{83.41} \\
& Entropy & 8.24 & 8.17 & 8.14 & 8.12 & 8.09 & 8.09 & 8.08 \\
& Time & 0.240 & 0.364 & 0.516 & 0.904 & 1.606 & 2.817 & 4.567 \\
& NFE & 0.081 & 0.160 & 0.315 & 0.612 & 1.156 & 2.022 & 3.065 \\
& Sampling & 0.023 & 0.046 & 0.090 & 0.178 & 0.356 & 0.708 & 1.414 \\
\midrule
DID & PPL & \textbf{161.21} & \textbf{115.54} & \textbf{99.59} & \textbf{92.52} & {88.14} & \textbf{87.18} & {87.00} \\
& Entropy & {8.14} & {8.12} & {8.11} & {8.08} & {8.09} & {8.09} & {8.08} \\
& Time & \textbf{0.212} & \textbf{0.243} & \textbf{0.314} & \textbf{0.461} & \textbf{0.830} & \textbf{1.487} & \textbf{2.536} \\
& Speedup (Time) & 1.13$\times$ & 1.50$\times$ & 1.64$\times$ & 1.96$\times$ & 1.93$\times$ & 1.89$\times$ & 1.80$\times$ \\
& NFE & \textbf{0.028} & \textbf{0.064} & \textbf{0.134} & \textbf{0.257} & \textbf{0.514} & \textbf{0.955} & \textbf{1.579} \\
& Speedup (NFE) & 2.89$\times$ & 2.56$\times$ & 2.35$\times$ & 2.38$\times$ & 2.25$\times$ & 2.12$\times$ & 1.94$\times$ \\
& Sampling & \textbf{0.010} & \textbf{0.026} & \textbf{0.053} & \textbf{0.106} & \textbf{0.214} & \textbf{0.429} & \textbf{0.859} \\
& Speedup (Sampling) & 2.30$\times$ & 1.77$\times$ & 1.70$\times$ & 1.68$\times$ & 1.66$\times$ & 1.65$\times$ & 1.65$\times$ \\
\bottomrule
\end{tabular}
    \label{tab:cosine_schedule}
\end{table}

We observe that the speedup of the cosine schedule is more significant (up to 1.96$\times$) because under the cosine schedule, the average sequence length processed by DID is shorter than that of the uniform schedule, i.e., more NFEs are used for shorter sequences in the early stage of generation, resulting in greater acceleration of DID inference.

\subsection{More Inference Strategies: Nucleus Sampling}\label{sec:nucleus}

We provide the evaluation of generation quality with nucleus sampling, a.k.a. top-$p$ sampling, with $p=0.9$ in Tab.~\ref{tab:genppl_nu} for fixed-length models in Tab.~\ref{tab:genppl_nu_varlen} discussed in Sec.~\ref{sec:fix_exp} and variable-length models discussed in Sec.~\ref{sec:var_exp}. 

\begin{table}[h]
    \centering
    \caption{Nucleus sampling results for fixed-length models of  RADD and \ours{}  trained on OpenWebText. Generative perplexity (PPL, evaluated by GPT2 Large), unigram entropy, and average generation length (for \ours{}) under different denoising steps are reported.}
    \begin{tabular}{llccccccc}
\toprule
Method & Steps & 16 & 32 & 64 & 128 & 256 & 512 & 1024 \\
\midrule
RADD
& PPL & 95.55 & 55.21 & 40.38 & 34.22 & 31.79 & 30.31 & 30.51 \\
& Entropy & 7.99 & 7.89 & 7.82 & 7.77 & 7.70 & 7.70 & 7.69 \\
\midrule
\ours{}
& PPL & \textbf{54.40} & \textbf{36.80} & \textbf{31.73} & \textbf{29.61} & \textbf{28.23} & \textbf{27.29} & \textbf{27.05} \\
& Entropy & 7.71 & 7.69 & 7.69 & 7.64 & 7.60 & 7.60 & 7.58 \\
& Length & 1021.88 & 1023.66 & 1024.02 & 1024.12 & 1023.95 & 1024.06 & 1024.12 \\
\bottomrule
\end{tabular}
    \label{tab:genppl_nu}
\end{table}

As shown in Tab.~\ref{tab:genppl_nu}, with nucleus sampling, \ours{} generations exhibit both lower generative perplexity and lower diversity (measured by unigram entropy), demonstrating the annealing effect  for \ours{} is stronger than RADD.

\begin{table}[h]
    \centering
    \caption{Nucleus sampling results for variable-length models of ILM, RADD and \ours{} trained on Stories. Generative perplexity (PPL, evaluated by GPT2 Large), unigram entropy, and average generation length (for \ours{}) under different denoising steps are reported. *: as outliers significantly affect PPL, only samples with PPL $<$ 300 are counted.}
    \begin{tabular}{llcccc}
\toprule
Method &Steps & 64 & 128 & 256 & 512  \\
\midrule
ILM
&PPL$^*$ & 174.29  & 27.04 & {14.50} & 15.31 \\
&Entropy& 5.02& 5.38& 5.44& 5.45 \\
&Length &62.68& 110.05& 120.75&122.99 \\
\midrule
RADD
&PPL$^*$ & 50.63& 28.62&  22.21& 16.11 \\
&Entropy& 4.81& 5.20& 5.39&  5.52\\
&Length &96.14& 188.64& 228.51& 265.53\\
\midrule
\ours{}
&PPL &\textbf{12.33}  & \textbf{12.74} & \textbf{12.46} & \textbf{12.83}\\
&Entropy &5.69 & 5.72& 5.73& 5.69\\
&Length & 161.64 & 171.91 & 179.62 & 174.39 \\
\bottomrule
\end{tabular}
    
    \label{tab:genppl_nu_varlen}
\end{table}

As shown in Tab.~\ref{tab:genppl_nu_varlen}, unlike the fixed-length models in Tab.~\ref{tab:genppl_nu}, nucleus sampling results of \ours{} consistently offer lower generative perplexity and higher diversity compared to ILM and RADD, further demonstrating the superiority of \ours{} in the variable-length setting. Besides, the annealing effects could also be observed at the sentence-level, the generation results by nucleus sampling is shorter than those by direct sampling reported in Tab.~\ref{tab:gen_varlen}. 






\subsection{Conditional Generation}

We evaluate the sample quality of conditional generation using 256 randomly selected OWT prefixes under different prompt lengths of 256, 512, and 768.

\begin{table}[h]
    \centering
    \caption{Conditional generation performance with different prompt lengths on OWT.}
    \begin{tabular}{lllcccc}
\toprule
Prompt Length & Method & Steps & 16 & 64 & 256 & 1024 \\
\midrule
{256} 
& {RADD} & PPL & 114.46 & 67.13 & 58.33 & 55.16 \\
& & Entropy & 8.12 & 8.03 & 8.03 & 8.01 \\
\cmidrule(lr){2-7}
& {DID} & PPL & \textbf{70.11} & \textbf{50.98} & \textbf{47.47} & \textbf{48.85} \\
& & Entropy & {7.92} & {7.94} & {7.92} & {7.92} \\
& & Length & 1022.48 & 1023.81 & 1023.88 & 1024.00 \\
\midrule
{512} 
& {RADD} & PPL & 52.31 & 38.17 & 35.51 & 35.11 \\
& & Entropy & 8.00 & 7.95 & 7.95 & 7.96 \\
\cmidrule(lr){2-7}
& {DID} & PPL & \textbf{35.29} & \textbf{29.95} & \textbf{29.19} & \textbf{29.00} \\
& & Entropy & {7.86} & {7.90} & {7.91} & {7.90} \\
& & Length & 1018.66 & 1023.08 & 1023.99 & 1024.07 \\
\midrule
{768} 
& {RADD} & PPL & 25.85 & 22.57 & 21.92 & 21.80 \\
& & Entropy & 7.85 & 7.84 & 7.84 & 7.84 \\
\cmidrule(lr){2-7}
& {DID} & PPL & \textbf{21.64} & \textbf{20.25} & \textbf{20.10} & \textbf{19.99} \\
& & Entropy & {7.83} & {7.86} & {7.87} & {7.86} \\
& & Length & 1011.45 & 1023.38 & 1023.93 & 1024.10 \\
\bottomrule
\end{tabular}
    \label{tab:conditional_generation}
\end{table}

As shown in Tab.~\ref{tab:conditional_generation}, we observe that DID stably achieves lower PPL with comparable unigram entropy (diversity) and generation length, demonstrating the stable superiority of DID in conditional generation performance over RADD.

\subsection{Comparisons with RADD of Different Padding Lengths}\label{app:abl_pad}

As described in Sec.~\ref{sec:var_exp}, the padding length of RADD in the variable-length setting is a hyperparameter to be pre-defined, which is a complexity for MDM in the variable-length setting, as well as a source of its inefficiency of \texttt{<PAD>} computation. Here we provide another setting of padding length, a shorter one of 512, to train RADD for the same steps (60K) on Stories dataset, and evaluate its generation quality and speed, the cumulative distribution functions (CDFs) of generation length for different models under different total denoising steps are also shown in Fig.~\ref{fig:cdf512},  alike the experiments in Sec.~\ref{sec:var_exp}.

\begin{table}[t]
    \centering
    \caption{Ablation study of padding length for RADD in the variable-length setting. Models are trained on Stories for 60K steps. Generative perplexity (PPL, evaluated by GPT2 Large), unigram entropy, inference time (in seconds), and average generation length under different denoising steps are reported. *: as outliers significantly affect PPL, only samples with PPL $<$ 300 are counted.}
    \begin{tabular}{llcccc}
\toprule
Method &Steps & 64 & 128 & 256 & 512  \\
\midrule
RADD-512
&PPL$^*$ & 80.19& 57.32& 40.27  &  35.45 \\
&Entropy& 4.89& 5.21& 5.56 & 5.63 \\
&Time (s) & 0.123& 0.221& 0.383 & 0.608 \\
&Length &91.83& 138.70& 191.43 & 207.93 \\
\midrule
RADD-1024
&PPL$^*$ & 81.92& 50.89&  34.47& 26.78 \\
&Entropy& 5.22& 5.58& 5.79&  5.85\\
&Time (s) & 0.246& 0.441& 0.827& 1.461\\
&Length &110.66& 200.73& 349.54& 353.47\\
\midrule
\ours{}
&PPL &\textbf{22.78}  & \textbf{21.07} & \textbf{21.90} & \textbf{23.88}\\
&Entropy &5.90 & 5.94& 5.94& 5.94\\
&Time (s) & \textbf{0.090} & \textbf{0.132} & \textbf{0.218} & \textbf{0.388}\\
&Length & 182.31 & 193.77 & 202.97 & 204.96 \\
\bottomrule
\end{tabular}
    
    \label{tab:varlen_512}
\end{table}

\begin{figure}[h]
    \centering
    \includegraphics[width=\linewidth]{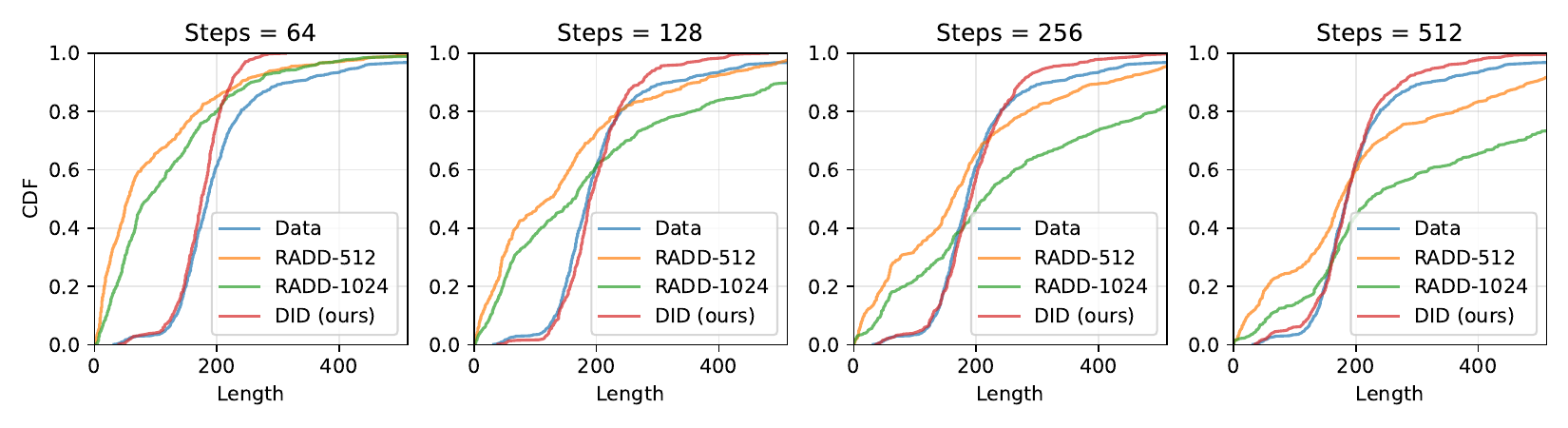}
    \caption{ Cumulative distribution functions (CDFs) of generation length under different total denoising steps.}
    \label{fig:cdf512}
\end{figure}

As shown in Tab.~\ref{tab:varlen_512}, the generation quality of RADD with padding length 512 exhibits observable degradation, i.e. higher generative perplexity and lower diversity (measured by unigram entropy), compared to RADD with padding length 1024, demonstrating the reasonability of setting the padding length as 1024, which is also ILM's original training configuration, even though it is highly inefficient as the average length of Stories training dataset is only 213.43, much shorter than 1024, leading to more computational cost for the \texttt{<PAD>} tokens. On the other hand, RADD with padding length 512 achieves a speedup $\sim 2\times$ RADD with padding length 1024, yet still $\sim 1.59\times$ slower than \ours{} on average.

Regarding length modeling, as shown in Fig.~\ref{fig:cdf512}, although the average generation length of RADD-512 in Tab.~\ref{tab:varlen_512} is closer to the ground truth (213.43) than RADD-1024, the CDFs in  Fig.~\ref{fig:cdf512} still show a strong deviation to the ground truth while \ours{} achieves much closer CDFs, which indicates the superiority of \ours{} over MDMs for the variable-length generation task.

\subsection{Evaluation with More Checkpoints} 
We have evaluated DID with more checkpoints to locate where the improvements originate. We additionally evaluate the zero-shot perplexity (Tab.~\ref{tbl:zero_shot_perplexity}) using more checkpoints (200k for 25\% FLOPs and 600k for 75\% FLOPs), together with the experiments in Tab.~\ref{tab:downstream_tasks}, our conclusion is: on average, DID could surpass MDM, on the 7 datasets for zero-shot perplexity evaluation (Tab.~\ref{tab:zero_shot_perplexity_more}) and 8 tasks for commonsense reasoning evaluation (Tab.~\ref{tab:downstream_tasks}), with 50\% to 75\% training FLOPs.

\begin{table}[h]
\centering
\vspace{-5pt}
\caption{Zero-shot language modeling perplexity evaluation with more checkpoints. }
\label{tab:zero_shot_perplexity_more}
\begin{tabular}{llcccccccc}
\toprule
Size &Method  & WikiText & Lambada & Pubmed & AG News & LM1B & Arxiv & PTB & Average \\ 
\midrule
Small &RADD-400k & 38.27 & 51.82 & 56.99 & 73.18 & 72.99 & 85.95 & 108.79 & 69.71 \\
&DID-200k  & 40.50 & 50.55 & 57.87 & 81.02 & 76.37 & 85.86 & 122.84 & 73.57 \\
&DID-400k  & 38.72 & 49.10 & 55.02 & 76.02 & 74.04 & 82.41 & 115.37 & 70.10 \\
&DID-600k  & 37.41 & 48.40 & 53.59 & 72.88 & 72.64 & 79.94 & 112.95 & \textbf{68.26} \\
&DID-800k  & 36.91 & 48.00 & 52.89 & 71.48 & 72.04 & 78.38 & 111.60 & \textbf{67.33} \\ \midrule
Medium&RADD-400k & 28.44 & 44.10 & 41.06 & 48.96 & 60.32 & 66.28 & 81.05 & 52.89 \\
&DID-200k  & 31.11 & 43.69 & 44.36 & 57.60 & 62.74 & 67.80 & 97.26 & 57.79 \\
&DID-400k  & 29.19 & 41.94 & 40.84 & 52.53 & 59.88 & 63.95 & 91.87 & 54.31 \\
&DID-600k  & 28.48 & 41.30 & 39.44 & 49.80 & 58.76 & 62.54 & 85.98 & \textbf{52.33} \\
&DID-800k  & 28.35 & 41.00 & 38.71 & 48.84 & 58.05 & 61.77 & 87.09 & \textbf{51.97} \\ \bottomrule
\end{tabular}
\vspace{-10pt}
\end{table}

\subsection{Scalability}

Regarding the scalability to larger models and longer sequences, following SMDM~\citep{nie2024scaling}, we scale up the DID variable-length model to 1.1B parameters and train it on the SlimPajama dataset~\citep{cerebras2023slimpajama} for 1.6e21 FLOPs, each document in the dataset is an individual variable-length sequence with a truncation length of 2048 (trained with the log-domain DP algorithm for longer sequences described in Appendix~\ref{app:alg}), and evaluate the accuracy (\%) of DID and SMDM on downstream commonsense reasoning tasks (Arc~\citep{clark2018think}, Hellaswag~\citep{zellers2019hellaswag}, Obqa~\citep{mihaylov2018can}, Piqa~\citep{bisk2020piqa}, Race~\citep{lai2017race}, Siqa~\citep{sap2019social}, and Winogrande~\citep{ai2:winogrande}) with the LM-Evaluation-Harness framework~\citep{eval-harness}. SMDM results are reproduced with its open-sourced checkpoint, with $<1\%$ FLOPs trained on variable-length data.

\begin{table}[h]
    \centering
\vspace{-5pt}
    \caption{Downstream task evaluation of 1.1B models. Zero-shot accuracy (\%)  is reported for each benchmark.}
    \begin{tabular}{lccccccccc}
\toprule
Method (FLOPs) & Arc-c & Arc-e & Hellaswag & Obqa & Piqa & Race & Siqa & Winogrande & Average \\
\midrule
SMDM (1.6e21) & 24.57 & 48.91 & 44.37 & 31.20 & 65.23 & 33.78 & 39.00 & 52.88 & 42.49 \\
DID (8e20) & 25.17 & 47.73 & 40.79 & 31.20 & 64.58 & 31.48 & 38.33 & 51.62 & 41.36 \\
DID (1.2e21) & 26.11 & 49.20 & 43.98 & 32.80 & 65.72 & 32.06 & 37.97 & 54.62 & \textbf{42.81} \\
DID (1.6e21) & 26.54 & 49.37 & 45.72 & 32.00 & 66.16 & 32.44 & 38.89 & 54.85 & \textbf{43.25} \\
\bottomrule
\end{tabular}
    \label{tab:downstream_tasks}
\end{table}


\begin{table}[h]
    \centering
    \caption{GSM8K evaluation under different top-$p$ sampling strategies. Zero-shot accuracy (\%)  is reported.}
    \begin{tabular}{llcccccc}
\toprule
\textbf{Top-$p$ strategy}& {Steps} & 8 & 16 & 32 & 64 & 128 & 256 \\
\midrule
{$p=1.0$ (direct sampling)} 
& SMDM &  27.82 & 33.97 & 35.78 & 36.85 & 36.54 & 36.69 \\
& DID &  \textbf{31.92} & \textbf{35.41} & \textbf{37.45} & \textbf{39.35} & \textbf{39.27} & \textbf{39.88} \\
\midrule
{$p=0.9$}
& SMDM &  32.22 & 36.92 & 37.38 & 39.58 & 39.58 & 39.65 \\
& DID &  \textbf{32.37} & \textbf{38.51} & \textbf{42.38} & \textbf{41.55} & \textbf{42.23} & \textbf{43.75} \\
\midrule
{$p=0.6$}
& SMDM & 36.01 & 40.03 & 40.56 & 42.00 & 43.37 & 42.61 \\
& DID  & \textbf{38.82} & \textbf{41.39} & \textbf{43.90} & \textbf{45.26} & \textbf{46.47} & \textbf{46.70} \\
\midrule
{$p=0.3$}
& SMDM  & 37.83 & 39.88 & 42.23 & 43.82 & 44.73 & 43.37 \\
& DID  & \textbf{38.89} & \textbf{42.08} & \textbf{43.21} & \textbf{44.81} & \textbf{45.49} & \textbf{45.79} \\
\bottomrule
\end{tabular}
    \label{tab:GSM8K}
\end{table}

We further evaluate the performance of DID on the GSM8K (Grade School Math 8K)~\citep{cobbe2021gsm8k} benchmark for math word reasoning problems, using our 1.1B variable-length model.

Following the setup in SMDM~\citep{nie2024scaling}, we fine-tune the models pretrained on SlimPajama for 1.6e21 FLOPs (both SMDM-1.1B and DID-1.1B), on the augmented training data~\citep{deng2023implicit} for 40 epochs, keeping all optimization configurations strictly consistent. We then assess the zero-shot accuracy (\%) of DID and SMDM under various inference settings, including different decoding steps and top‑$p$ sampling strategies, using the standard LM‑Evaluation‑Harness~\citep{eval-harness} framework.

As shown in Tab.~\ref{tab:GSM8K}, results demonstrate the consistent superiority of our variable-length DID model over the fixed-length, padding-based SMDM approach on the conditional generation task of the GSM8K benchmark.

\section{More Discussions}\label{app:discussion}

Here we discuss the theoretical insights for the advantages of DID over MDM, we additionally summarize four structural advantages that come from using insertion instead of unmasking during generation.

 \textbf{Context-aware decoding order.} In standard RADD sampling (e.g., $\tau$-leaping), the decision of which positions to update is governed solely by the noise schedule (random unmasking). The probability that any masked position $i$ is unmasked in a step is governed by a scalar function $\psi(t,s)$, which depends only on the noise schedule (e.g., $\frac{t-s}{t}$ for log-linear):
\[
\mathbb{P}\left[x_{s}^{(i)} \neq [\text{M}]\mid \boldsymbol{x}_{t}\right] = \psi(t,s).
\]
The probability is uniform across all masked positions and independent of the context $\boldsymbol{x}_{t}^{\mathrm{UM}}$. The learned model only determines what token to fill in if a position is unmasked; it does not influence where updates occur. When steps are few, this forces the model to predict many tokens at random absolute positions simultaneously, often leading to incoherence.

In contrast, DID learns where to insert based on the existing content (via insertion score). The probability of an insertion after position $i$ is directly determined by the learned insertion score $\bar{s}_{\theta}(\boldsymbol{x}_{t}, t)$:
\[
\mathbb{P}\left[\text{Insertion at position } i \mid \boldsymbol{x}_{t}\right] \approx \Delta t \cdot \frac{\sigma(t)e^{-\overline{\sigma}(t)}}{1-e^{-\overline{\sigma}(t)}}\sum_{v\ne \varnothing} \bar{s}_{\theta}(\boldsymbol{x}_{t}, t)[i,v].
\]
Crucially, this mechanism is content-dependent. The model naturally chooses to generate tokens in positions where it is most probable first, improving the quality of each sampling step.

 \textbf{Self-correction mechanism.} RADD relies on absolute positions. Once a token is unmasked, its content and position are fixed, so any early suboptimal predictions (common in fast sampling) lead to error accumulation. DID predicts relative order. Even if the model generates imperfect tokens early on, it can still refine the sentence structure in later steps by inserting new tokens between existing ones (see our demonstration in Appendix~\ref{examples3}). This dynamic refinement capability makes DID more robust to the errors inherent in few-step generation.

 \textbf{Cautious early generation.} RADD works on the whole sequence length from the very first step, so it can unmask exceedingly many tokens at the earlier stage using inaccurate model predictions. In DID, we allow at most $n+1$ insertions if the current context length is $n$. (For example, the very first sampling step will insert at most 1 token.) This means we perform fewer insertions at the earlier stage, acting more cautiously than RADD.

 \textbf{Better handling of adjacent token dependence.} Both DID and MDM employ the $\tau$-leaping method for fast sampling, which assumes multiple token transitions occur independently at each step. This independence assumption is a major source of sampling error. Note that DID will not insert multiple adjacent tokens at once, so the transitions of adjacent tokens are never independent. (Adjacent tokens are always inserted sequentially.) In contrast, $\tau$-leaping in MDM poses a stronger assumption that any pair of tokens are mutually independent. It is fair to hypothesize that adjacent tokens bear more dependence than separated tokens — thus, we believe DID handles token interdependence better.

%% file: app/examples.tex
\section{Generation Examples}
\label{app:examples}

Here we provide generation examples of the fixed-length model trained on OpenWebText in Appendix~\ref{examples1} and variable-length model trained on Stories  in Appendix~\ref{examples2}. Besides, we also demonstrate the intermediate generation process  in Appendix~\ref{examples3}.  All samples are generated by the direct sampling algorithm under the float point precision of fp64 for accurate categorical sampling.

\subsection{Samples Generated by the Fixed-Length Model Trained on OpenWebText }\label{examples1}
\subsubsection{Unconditional Generation}

\input{app/owt.txt}

\subsubsection{Conditional Generation}
\input{app/owt-cond.tex}
\subsection{Samples Generated by the Variable-Length Model Trained on Stories }\label{examples2}

\input{app/stories.txt}

\subsection{Demonstration of the Intermediate Generation Process}\label{examples3}
\input{app/inter.txt}

%% file: app/owt.txt
=================================== Sample 1 ===================================

Moon Environment, NEONSEA, CANANA INTERACTIVES,SEVERTY Union ENTRY,PORTS IN DEMO LEAC,,SQUANTIFOCUS, CAZARY, and WACKUP. COvenATION BY BAR WAS PURPOSE TO TEAM DEFENSE, DEFENSE IS THE CHART OF OUTER AND NEGATIVE AND IRONSIDE.

*Selected for Hexagonal Player counting)(Best players from THREE EXPLicates.) of Medals PT50 showed signs of PT in US Switch Ball, SH DROPIONS DOWN HUMAN RIGHTS, CT PORTS were only available for Queensland Olympics USA PAT NACHINO FUTURE Videos reserved. The PACI did mean that, in every four teams from these groups UNMERCHANT WEIGHT CARRYING THE LED ARTS OF BROADCASTING SO NECESSARY DOES, ORBURNVE OFF FOVPORT.ProtarpYSCHIETICS COMICS.COM WILL END CONSENTSLY.

CLICK HERE TO SEE RIGDINS OF AUSTRALIAN BUTTE SON BASING<|endoftext|>by Jay R. Tyardots in California Scientist, January 3, http://www.chronicle.com/news/science-technology/124428.2938drozier55.htm

Engineers working on creating a quantum computing field say they've made balls of quantum information that is able to vibrate like graphene by adjusting its signals like a natural quantum bus. The result is a powerful 250-watt device that could be used for computing purposes within a decade after showing the circuitry could be tuned over time.

The research is in IEEE's Physical Review Letters, Designing the World's First Atom of Silicon Nanotubes.

Better than the current solid-state transistor of opposable states, silicon nanotubes that slip into a hurdle metal sandwich (DA50N2) create FCU.

The non-energetic bits could be on their way into next-generation "intense" processors and even be used in quantum computing.

The researchers said

"The circuit design delivered a new transistor of exponentially lower power," said Enrique Cameron, a nitrogen-dot tech at Rice University in Houston, Texas.

The transistors are integrated with what the researchers call quantum control an internal signal source designed to tell the circuit of the transistor it wants the mechanism to act on. If a magnetic or electrostatic oscillation is expected or unwanted, controlling the quantum control system. The internal signal also controls how much current can be drawn.

Researchers have showed quantum computing in a 256-nd transistor with a count of DSNO, but more research needed increase the size and lithography of the chip even higher.

For the first study for such a transistor done in the same environment, researchers used a unique voltage coupler device. The voltage coupler holds the value of a sine wave and information about the direction of voltage and current.

To output a voltage the computer acts on the sum of the voltage and current which is what people might get if divided a computer transfer rate of 16 Gbit/s (3.24 GB per 100 processors) into half.

According to the group's preliminary result their signals-based feeding strategy, which has also found a way to allow a smartphone to create its own elemental energy, could be widely used at lower power levels. If meant then anything involving new electronics is possible as well.

"The speed in which this sort of mechanism lets off electrons through the direction could have really interesting legs," said Dan Schaberer, a professor in the physics department at Princeton who led the design but who didn't publish a copy of the paper.

The use of a dielectric drive system oscillating the same spirals the actuators provides more mechanical control.

"This is analogous to acceleration using laser in phased scanning microscopy," Cameron said.

Naturally it may be difficult to get like that, but the MIT team said it was using some rough theoretical calculations in the paper and details of the initial version have been tweaked within the theory it is possible.

Guy Lee, a different team at the Technology University Berlin, who's the scientist who developed the device, said "for what most people would call a rapid development challenge team's concept is utterly unique."

He added the team is interested in using future flash-memory chips, for example, as a sufficient unit of mass for quantum logic converters.

"Once quantum bits are developed as nanocomposites and lithography problems solved, then I will not be surprised to see all the fantastic uses come up," Cameron said.<|endoftext|>Here are some Bitcoin events and meetups we were hoping you found important information about and why.

Lindsay Reslove is a research analyst with Coin Private Wealth Research. She is a student in the MSU's departments of business administration and computer science. She and John King, a business in Flint, Mich

=================================== Sample 2 ===================================

 barbecues, unstudded haircuts with a passing connection to classical musicians and singers, African stars Bakawisi Sim and Kutimbla Grime. In Case of The Grace of Contessa, the speaker describes Chichamaster Rupert Brecht’s interpretation of the Chop Talk, up close to Hoop Orchestra and the Brooklyn Hall Chop bust. The Volk Attendant is dramatologist Philip Hill’s take on exploring Bartoz’s interpretation of The Young God (Directed by Geraeus Castoré); Penelope Vernon’s superb books on Mijunset’s Frativity, the Howeñal movement and other forms of Esperanto music are only one of many created in the United States. Her favorites include the e. Additions Games (1811), The Anomal Valley Merry Stories (1815), and the Nieos family Magazine (1856).

The Continental Brand, the dramatist Herbert’s Own Dame Vis, introduces English sleuthmuths Ian Woodward and Charles Feching, and doesnt yourself to the first meeting of witty, energetic and well-traveled Spioclasts from San Diego. San Diego the Altun, inspired by a series of articles by Santiago Contamas and José de Septé, the American Innerts, reflecting the peak of the Esperanzaism American Renaissance, tells from the air of Sclepre’s Esperanto sophisticate Avarula Grothomo, a desperate person suffering from fibresesticular use (as in the processing of rhubarb). This city may have gone to war or appeared beneath man’s feet. Domaniic unic is not the only thing involving Esperanto; the Atl-mas have described a plague. But apparently that never existed.<|endoftext|>Sangefeng: The Yoruba businessman whose family and children lived in a small sarandi (boat) before the Typhoon Haiyan destroyed the last homes there is thought to have been safe in the Philippines for at least next few weeks. Philippines Home Minister, Peter Bola, told on Monday that authorities had discovered a connection between Mr Shinse and the latest Islamic anti-American manifesto during a vitriolic sermon he gave last Sunday at a funeral.

Judging from the “manifesto”, Mr Shinse fits the double-standard of the dodgy wealthy Chinese lawyer, where his son had an affair.

His son said he is seen as a “gynoid, disabled in body, deracinated”. He said he is mentally unstable. “But that does not mean I’m happy.”

He was approached this week by a Haiyan victim. “I broke it getting in the plane, but I said I thought it might be good to find it,” he said. This man said that the skull bones in his basement had been removed and awaiting identification.

Throughout last week and a half with displaced people his nephew has approached outside his office to identify themselves. He recently helped organise seminars of the Islamic branch in the Philippines.

“I keep everything on tapes,” he said. “Every morning I have to stuff it out the window in front of the TV screen, I can carry it anymore.”<|endoftext|>Three hundred revolutionary followers, were took part in Guatemala's Bernardo (Barefoot) alongside 50 Venezuelans, 20 Ecuadoran asylum seekers, to celebrate the revolution, giving a talk-outout to the US Presidential candidate Sanders for expressing "cynicism" about a Socialist state.

"This shows that democracy is not so bad in the world," Patrick Dubo, one of the Trotskyists from the Unido Brigda Party said during the rallies.Some of the crowd continued to protest for Che Guevara and Fidel Castro, as they claimed that they supported "separatism".

"In the internet came a Facebook group which is affirmed to be an alternative Socialist state which fully affirms the concept of "Apartheid" one unnamed person told the press.

Dubo said that they were representatives of anti-socialist Christians which promote Marxism and fight immorality through government policies. He added that socialism has never really disappeared in South America because its ideas are with more with "fleat-blading".On the other hand self-styled “centrist businessman” Ric Williams fully supported the courage despite this by arguing that expressed the spirit of patriotism.

Reporting Daniel Jenks, Elias Osauga, Thompson Salazar and Guy Aveculs

...<|endoftext|>This week, Dow Chemical Co. CEO George Lopulos finally gave the company’s 3,000 employees fresh concessions that break codes and specified overhaul of operations. By Jan. 25, they were signed a tentative contract that would represent more than a third of some 5,000 workers.But while the original tentative agreement was set for the end of this month,

=================================== Sample 3 ===================================

 “It came from someone who felt the urgency of it and refreshed us.”

The idea of an airport being a catalyst to switch venues appeared at the time. The airport has been a home for a decade, with facilities real and perceived to be as nice as they come. Boston has a few Green Card 250, excellent, venues. It is not the site of 60 such ad golf-related contracts according to a report from the AP. Taunton was affected by that, as was the clutch conversion on his Jan. 10, 2014 road trip that made national TV headlines.

He slid perfectly from outside the Patriots’ six-yard line and 25 yards out before flipping it over Willy Chopade Jr.’s end zone. This kept him on the field with Patriotsmate Moss.

“We were hoping we were still going to be playing because of Moss,” Haley said. “We just came to grips.”

Boston has shown a growing appreciation of Shifkar’s vision, and now he has the bang of a stadium it will live for. After departure of Kow, its most active sponsors, the Patriots purchased HSBC. The successful discussion of other offers and candidates included Shad Khan, a partner who does global retail on the West End of Oxford.

“We were hoping to go to Fenway and put an identity more behind the location,” Shikbaram said. “We are excited about what we can do under the roof but and that’s part of the truly team experience. Even before we got the bowl, this is a sport that runs. Fenway Park is a great atmosphere for a guy on NFL foot.”

Haley said he can harness that momentum for 2017 based on the gambling on his offensive outlook.

“The other guys are going to push us,” Haley said he and his Patriots will return championship opportunities. “I said to the fans it’ll remind us of being among the greats and being the champs and champion. They are going to be the lynchpin in this Big Game. And it’s not because they live here, but because they’re going to be the stick. Meet them, run them out of sight. To have people, that’s not something that other teams have done in NFL history.”

We covered how the Patriots approached January’s New York vacation trip at The Syllabus.com and their spring season traveled since June. Experience our very unique 2015 Mobile Guide at http://quotepan.com .

Related<|endoftext|>A more detailed revelation of Viking’s dramatic rise is now available. But The Mammals tended to rely on land expeditions to glean information. Now that more information about the world without computers or maps could accessed.

Wendy Robinson, The Arctic Lion

New Viking Denials Confirmed by Areologists

A underwater trench was once paved off off the UK coast in Viking engineers’ location claimed by locals. But drilling inside the channel, sealed for a year by the distortion of the erupted Swarbrough volcano, one in November discovered 8 otheral signed of submarine submarines working against the foraminiferous seawater, accessing new information about its tunnel depths and carbon sources.

Throughout the last century leading officials swore the Viking civilisation was dead, but now archaeologists finally know why.

Adrian Ashleyman, minister in charge of the Western Isles, covers every mile of the Viking civilization for thousands of years up until the 10th century, in a presentation at the Ocean Foundation. “It’s amazing how much they travel and the work has been done to get where they are, but it’s pretty awful to pull nails in a far schedule. The lines need to be so easy to get across”

Click see the BBC report.

Related myths: Viking rune, USSR, Viking generals<|endoftext|>Because former whistle-blower, Edward Snowden has sent journalists and world bloggers cocktail clippings of a New York-based government security training, Mounties Sustainability Training taught by British Commando Troopers, it’s an excellent venue for journalists to get their information.

We hear that the U.S. may have bit of a rock on the memory of its golden-west. Snowden has succeeded in making the FSA comfortable shedding its hard currency, boasting to the Guardian recently that the store in which he formed his fortune in Iceland was valued at a staggering \$256.5 million.

It’s no different to Volkswagen World Group’s overpriced program and supermarket boss, Valentino Rossi’s private company which has been working on technology that can be based on Italian 500’s Series 7’s.

This thinking causes problems. Companies in Germany and Austria license special patents and sell them only to companies that have a record of

=================================== Sample 4 ===================================

 its proposed rules were filed. Seeking to amend the proposed rule, FERC viewed the strong language put forth by an association opposed to the exemption when it met with the statement in a letter from Dan Stern with the Sierra Club. Moreover, the National Farm Bureau and the Federation of Sportsmen and Fishermen, which owns lands for conservation, both backed the new "no-Mine" rules. But FERC didn't allow for this. "We had to try to craft an artificial, complicated substantive rule where power was responsible for conservation that the industry doesn't want at all," Butler says.

The new RRE rules leave farmers with only a title to stake each acre of unused land to mine copper. Buildings must be fewer than 10 feet tall, and need to fitted to a VVAC pole to allow the same air flow. No more than 25 acres are permitted and a 30-acre farm can be mined using hydraulic fracturing. They have to prevent the potential mining operation, built a fence, took part in field tests, and gathered evidence from the site of having found copper that has trampled on production lines.

FERC did not dispute rule specifics—no specific copper facilities the no-mine would be laid—but did release an assessment saying that public processes would determine "the cutting and the use of pipes" and just what amount of public land to be included in maps should be limited. "It's all on a market herhorse," Slaughter says. "It's a de facto emissions test."

Nutter warned that the regulations might actually result in "a more anti-competitive system" than one that does allow for polluting, subdivision and enforce strong rules. "The industry does not see it right and rejects the idea that it's worse because it increases pollution," he says. "The Public Works Committee would find it even worse."

Public approval for a draft rule will reportedly take 12-16 months as well as legal challenges.<|endoftext|>A coalition of ex-elites gathered at Laurelview High School this past week. They're part of the Coalition (recent graduates and teachers from urban county were touted for tenure) then showed them how to fight neighborhood segregation. They are so-called Chiefs in the program, and they recruit more black kids through. They simple this practice that is commonplace in large cities and observed pro-Confederate celebrations.

I joined Michael Smart, coordinator of that program, at his rock concert, Rebel Wars, and watched a video of Confederate flag shootouts outside the program.

The group of eight black students lectured and asked teacher Vincent Henderson about characteristics of black communities.

Potish, "Most of us were in tears," following decades of history.

"Then we were surprised," Henderson said. "We were almost sure we had it with us."

"We were thinking about it," Smart said. "We just weren't expecting it."

"The CMO brought us back to races of neighborhoods that are African American. You're talking about lead epidemic and youth unemployment numbers," Smart continued. "You've reported a dramatic employment loss on top of low personal drug using. And you're gonna see a lot of student prejudice and crime in the process."

Though they only a few seconds to tap into the students' attitude with statements like "Motivation, Director of Excellence," officials did acknowledge the past positive attitudes and report they were raising them.

I heard much more Cal sentiments from the rest of the presentation, albeit somewhat-coarse ones.

"They did not apologize for classroom clashes, not even the token degree," Smart told me. "We're not converts."

They made light of the school's efforts in Combat Simulator, a battlefront that drew resentment after the Burbank response of black students.

"They want to know about our diversity program," Principal Frank Fund interrupted. "They make assumptions about who wants them to live and police where they want them to live."

"We actually want kids like these," Fund said. "Eventually, that'll be the biggest part of it."

Historically, many in Chicago's gated neighborhoods have been little but crippled. Since 1992, a handful of private universities where black leads have developed a detrimental regime where they send low-income students from the disadvantaged through Northern schools to improve opportunities for minorities.

For a rural district full of Latino immigrants, these black-only programs have Tucson's black students underperforming at disproportionate rates to kids from whites, Hispanics, and other groups. The effects are little-noticed, and rarely openly presented.

Jew's album, Jazz is now played widely, performing nationwide and last year winning an award in collaboration with the Sierra Club, and the Parents Association of Arizona. They call on the district to address low rates of distance and point to the research that comes out against Unified.

"We're putting black children at a disadvantage," says Jew, who has added Jazz to Spotify himself. "

=================================== Sample 5 ===================================

 Dr, S Scott Road, LPM, NY 57025 Hours: 7P.M. – 9 PM (10 a.m service; 50 foot Zion TeaPot; chef Beleo, LPMO, chefbeleo.com); Dinner at 7 p.m Tickets online H22D through tickets [404] 775-624-3404 ; public (202) 769-6678.

Website, ChefBalleo.com

Line 4665/4667<|endoftext|>Accelerated around A number of doors with bays to divert between two types of facility. One venue houses the "Em Lion" training and the second one for Saturday night gatherings. Event tickets will be given out hand handed with or without ID to pay. The vast majority of event will be inside the Center for people to catch up with our manager and owner free ops. For partygoers there will be entertainment at every level from the alley and sliding ramp to hooters skateboarding, and pools.

POLRY days temporary weekends and dynamic peoples schedules

Saturday band on all day never has before. Neither school nor college

Must use good for jams between the residence drive to police

Only vintage, topping, tattooing (except for Halloween) and older patrons update our article in two weeks

Place \& Events

Chocolate: 6500-10000 people coming to town Night manager is on 24 hours and has control of the event (transportation times, gives permits to Huppi-Pickers, funding license to city agency to claim, and team with the owner and operator of the business] 300 people Total event is double 300 Children Lengths must be > 30 vs old chance

Music will be live hand tapping (as will Ad and Staging throughout the building)

Special ticket bidding "Cool City" ends Jan 19 which will last the day after.

Registration 5 slots as long as per picture the touch machines using 5 keys Trans-card from Ticketmaster will fund rates Food Trucks/ door.

The event is after the princess's reception is in each auditorium featuring live music, jazz, and theater as the walk out to athlete camp commencement rehearsals before night. There will be separate event for kids to get to these and other things for the kids to meet the party.

Tickets

Pick up on the private chapel events etc. these are not major facilities. Last come first serve

Tickets: \$10 or less + fees

Believe that the jQuery art space is a space that is always open

Saturday operating license called business day or city license

No license \$95 treat and food service

Entertain to the Saturday breakfast event

Featuring a dinner atmosphere at most events.

"Lovers in Trust" Auction designed to get kids and people from the neighborhood get to know some other

Cent hours chance to win charity items

hold a laptop and tablet inside during the Emperor's Lion Training session.

All photos have been stolen from the art event

Security department approved for benefits only.

Notes

Lowers: \$50 Token

(From @shermark)

4.Mairesomata,

Here is the first image of some of our staff that have gotten hurt in this space | Pic (from @Dunader)

How to comment well from the months in this space based on being able experienced in so much politicotation

5. Letters

Here is a season 15 video of Elsa on Lierson handled this space and what is happening, how to congratulate staffers (see here for first time public announcement October 2005), how to give free tickets to the prom.<|endoftext|>This spring, meet at a Chipotle University of Iowa, Ames location, representatives of the Chipotle's signature chicken giant said.

An event has been put together by the Native American Union for FEED Friday from 9-11 p.m. Details on exact locations for the event weren't available.

Si que héres Inigo, SE ANNOCALE CHICK \#10 – Inigo for the 27th date on May 7th!! — Equalization (@Cornish Alliance \& Equalization) March 27, 2016

Click here for onscreen printable image of the location – 700 Mr. Petty Avenue.

Privacy

In the release, which had more details on Chipotle, the University’s College of Agriculture and Economics said, "Georgetown and its partner specializes in protein, pirouin, cellophane, and tinder-minder products."

Meanwhile, more past Chipotle showings:

LUS / GRODYS Club

LatINO STARS Launchday starts at 2 a.m. Friday when Las Vegas’s Graduation Day Lounge will be on hand (bar not applicable)

9 p.m. on Friday, April 7.

Shillam

%% file: app/owt-cond.tex
Here we provide conditional generation examples generated by DID trained on OpenWebText, we set prompt lengths as 256, 512, and 768, and the prompt parts are colored in blue.

================================ Prompt Length 256 ================================
\blue{with the Hawks with a view to identifying why, as they say in the song, Hawthorn is such a happy team. Which raises the question: Are the Hawthorn players happy because they have won the past two premierships or is Hawthorn winning more because their players are happy? Or, as the Hawks themselves suspect, perhaps the answer lies between. Not only does that club feel validated in their work practices, which this season have included an unscheduled day off after their first Launceston game, but it could also use its position at the top of the off-field ladder as it works behind the scenes to mount a case against the AFL's move in taxing clubs' football department spending. That new tax will be reviewed before the end of next season and Hawthorn are looking to lobby for the exclusion of such welfare initiatives such as the regular monitoring of every player's mental health by its sport psychologist from the new tax. Another area the Hawks believe should be exempt is the sponsoring of international educational trips for its staff. Hawthorn is not the only team that claims to be focusing on better work-life balance for their players but they have worked more diligently to identify red flags among their team since Travis Tuck and his mental-health issues became front-page news.} Originally written by the Herald Sun on Friday night, after a sustained discussion of players' struggles and the potential issues associated with their mental health, Tuck's story has been arguably the story that has generated most interest in the club-induced Australian football story. The responses from sceptical media outlets are mixed, which is through no fault of some of those who have the hardest situation to complain about. The story may need more interpretation, though, as the AFL already has its own legal team.<|endoftext|>Throwback, released in November 2011, makes it clear what club tools­brew can do for your body. The key thing is its personal kryptonite. Though Artur ­Kovács shares "late almonds" with his fellow Olympic teams' training athletes Shawn Spink, Vladimir Sinaseck, Marcel Osgood and Vladimir Camence, there is little to nothing that these players seem to have done their own way in strength training. Put it at the bottom of lurk's common ideas, like this when asked about what he has done.

Just two or three years ago Kovács had completed silver and bronze Olympic medallists, finished second in the massagedet, and was a silver medallist at the World Cup as captain of the men's Illinois University team. An exceptional athlete, he had vaulted from the school's single de facto medallist table to the top ranks of world leaders in power. On eight days of running 10,000 yards, with an intensive training schedule filled with time skating down Pacific Hill, he made all of seven dozen major championships in China.

After dumping back to the ranks of UCLA in 2013, he suffered an almost miserable start to his 2012 season, collecting six bronze medals in 2012 and a Cambridge Olympic diploma that year. He yielded fifteen better performances all year, which, after his limbs started teething dry, the San Francisco Nationals player "autureted out".

Analytical reports have found nothing happened, and he faces a battle to climb back on his team loaners following his November 11th release by the Massachusetts Athletic Department.

It have not been very easy. Less than one Friday ago, he went bad back and guilty of punching the face of a wrestler. And in recent days, his approach to appeal has fared poorly. Last season, after a mid-season torn fall, he tank-junked in an US court of law - with his lawyers threatening to sue him unless he addressed his alleged doping confession publicly.

The criticism was even louder this season.

Image copyright Getty Images Image caption Jovan Penscault is in turmoil in the NBA

Ask a few probing questions, in Belnod's case as well; he is asked to explain his appetite, his courtship with his parents and whether there had been a bad day in the offseason. Yet the assumption is not that he is emotionally disturbed. He is not unfaithful or arrogant. We believe, in his eyes, that outdresses are just trying to achieve their own goals.

Character can have positive effects, however, and even if there are further positive developments in the NBA, the progression of such a controversial player is a critique of how US sports and the game works. It is certainly a negative sign and if anything ought to change about how it works to bolster a program.

"Coaches used to talk about their tennis stars in majors. Now they've got them everywhere," says Gary Smith, the head coach at Penn State, who doesn't overstate the size of the reward for those who played in the nation's top SSQC or were willing to put in athletic service. "Look at what a lot of your better players did. I mean, you can't

================================ Prompt Length 512 ================================
\blue{with the Hawks with a view to identifying why, as they say in the song, Hawthorn is such a happy team. Which raises the question: Are the Hawthorn players happy because they have won the past two premierships or is Hawthorn winning more because their players are happy? Or, as the Hawks themselves suspect, perhaps the answer lies between. Not only does that club feel validated in their work practices, which this season have included an unscheduled day off after their first Launceston game, but it could also use its position at the top of the off-field ladder as it works behind the scenes to mount a case against the AFL's move in taxing clubs' football department spending. That new tax will be reviewed before the end of next season and Hawthorn are looking to lobby for the exclusion of such welfare initiatives such as the regular monitoring of every player's mental health by its sport psychologist from the new tax. Another area the Hawks believe should be exempt is the sponsoring of international educational trips for its staff. Hawthorn is not the only team that claims to be focusing on better work-life balance for their players but they have worked more diligently to identify red flags among their team since Travis Tuck and his mental-health issues became front-page news by way of a third positive drug strike.}

\blue{While the players' union continues to investigate the reasons behind the grievances of so many of its members, it appears beyond doubt that the demands of pre-season training have taken a disproportionate toll. This, along with the pressure most footballers feel, even during the bulk of their holiday periods, to report back for work in perfect condition. The inescapable conclusion drawn by Marsh and his team is that the expectations placed upon players over their recently increased leave period have created a pre-season before the official pre-season. According to the AFLPA and the increasingly shared belief of their leading players, the game as a spectacle would not be adversely affected by a less-intensive spring-summer. Australian football, after all, is a domestic sport. And the union are also monitoring, as reported by Fairfax Media, the link with the growing injury toll. Marsh was reluctant to detail individual club player complaints but, according to his key executive Ian Prendergast, the overall picture of dissatisfaction with their sport by the players "hit him right between the eyes". Marsh came to the AFL from cricket and lack of enjoyment with the game they took on through love and fun and enjoyment as children was relatively non-existent among Australian} footballers in general. Over the past decade or so, the dissatisfaction grew. They were routinely asked how much they needed to goof around by their parents and by their coaches, which they tended to miss. Naturally, the players often had their doubts about their first grade standard and few fans made allowances for this chipper criticism when they backed their team throughout the long pre-season preparation period.

The AFL and NRL clubs have recently been struggling with growing talk of missed kickouts and sudden increments in the grading system after their clubs were caught failing to properly police Josh Boyd and Scott Lilly. In Canberra, where some of the players are running faster and talking more about drug use, the federation is testing the AFL's new anti-discrimination law, how to change the rules governing strip derangement and the expanded power of post-match sanctioning cameras. Before games, the representatives from the Sydney Maroons board, including their New Zealand captain and Sydney's assistant coach while in charge, have delivered long diatribes to the players, to the players themselves, some of which severed their ties to the club in return for the shortsighted and grievous attacks on the club's reputation and its unity. There was considerable assistance from Sydney in counting the tally of the AFLU's members, from the club's financial manager Damien Gray and chief operating officer Humphry Rokakley, to ASADA director-general Rods del Fuerta and former AFLAUT president Chris Jacobs. Unfortunately, Stephen Parish, who became the Australian AFL executive president once and for all, has told a Senate inquiry that it would be "unusual for any player not to have remoited from this federation's support when they first entered the competition formats - and especially so if most do not do so immediately in an effort to improve their competitiveness personally". Australian AFLU president Arthur Hurley and the AFLU's chief executive, Stephen Lawrence, have recently left. Retired federal team officials and CEO Stephen Hendell, Hurley and Lawrence were all also members of the most recent AFLU national advisory committee of the WFWA, the new confederation formed before the federal government's new sporting governance regime. The refs and some of the former clubs are quite possibly expected to have already caught on and most have said that the new local-territory administration is very unlikely to take robust legal action against the only surviving Australian football union. According to Jeff Pearce, the AFLPA's national vice president, the league's new federal strategy now outlines the significant opposition to place on competitive

================================ Prompt Length 768 ================================
\blue{with the Hawks with a view to identifying why, as they say in the song, Hawthorn is such a happy team. Which raises the question: Are the Hawthorn players happy because they have won the past two premierships or is Hawthorn winning more because their players are happy? Or, as the Hawks themselves suspect, perhaps the answer lies between. Not only does that club feel validated in their work practices, which this season have included an unscheduled day off after their first Launceston game, but it could also use its position at the top of the off-field ladder as it works behind the scenes to mount a case against the AFL's move in taxing clubs' football department spending. That new tax will be reviewed before the end of next season and Hawthorn are looking to lobby for the exclusion of such welfare initiatives such as the regular monitoring of every player's mental health by its sport psychologist from the new tax. Another area the Hawks believe should be exempt is the sponsoring of international educational trips for its staff. Hawthorn is not the only team that claims to be focusing on better work-life balance for their players but they have worked more diligently to identify red flags among their team since Travis Tuck and his mental-health issues became front-page news by way of a third positive drug strike.}

\blue{While the players' union continues to investigate the reasons behind the grievances of so many of its members, it appears beyond doubt that the demands of pre-season training have taken a disproportionate toll. This, along with the pressure most footballers feel, even during the bulk of their holiday periods, to report back for work in perfect condition. The inescapable conclusion drawn by Marsh and his team is that the expectations placed upon players over their recently increased leave period have created a pre-season before the official pre-season. According to the AFLPA and the increasingly shared belief of their leading players, the game as a spectacle would not be adversely affected by a less-intensive spring-summer. Australian football, after all, is a domestic sport. And the union are also monitoring, as reported by Fairfax Media, the link with the growing injury toll. Marsh was reluctant to detail individual club player complaints but, according to his key executive Ian Prendergast, the overall picture of dissatisfaction with their sport by the players "hit him right between the eyes". Marsh came to the AFL from cricket and lack of enjoyment with the game they took on through love and fun and enjoyment as children was relatively non-existent among Australian cricketers.}

\blue{"Maybe I'm being romantic about sport, but I do think it should be fun," was all Marsh would say on the subject. "It's a question which needs to be addressed by everyone in the industry." The AFL is not unique in that they seems paralysed to act against the increasing pressure and negative impact of social media and the uglier side of the expanded modern "selfie" syndrome so hauntingly articulated by Chris Judd in these pages. But the AFLPA have identified player grievances it can address in their next wages-and-conditions deal. Heading those grievances are the plethora of meetings the players believe over-punctuate their working week and many believe have grown as assistant coaches over-zealously justify their jobs. Another is the growing demand of player appearances by clubs working to service members and sponsors. The concerns of the players' representative body were compounded by their pre-season round of visits but were already being addressed after the second annual players' survey last August. Players responded to questions across three areas: their club's culture; resources and its structure, which included leave periods, time off and the performance of player development managers. The AFLPA reported back to the clubs, telling them where they were ranked in} 
each of those four areas. The annual research session was six-day. Those clubs in these four areas that showed the worst results at all times and where their own poor progress was identified as the most apparent, such as the Giants, Hawthorn and Carlton, were laid-out as operating not only without the frequent parliamentary supervises and the more outspoken directorial redistribution policy of the Granny Hill clubs – and where the players most often are active in their pursuit of the best and fairest – but they also operate with the most lax end-to-end performance-related coaching policies, have the most overworked executive list management and team performance-management staff, and rank among the worst in their management of ticket market participation and other competition activities. Determination of future action at many clubs were not only diluted in terms of the performance of key people in the sports knowledge-based management structures – such as communication engineers, performance support managers and the managers of the union candidates, but also the clubs' player development agents, tryouts and on-balling cadres. Plus, this was punctuated by the poor performances of the club's executive list manager and model player association executive coach – who was rated below average.

"The Blues said they wanted

%% file: app/stories.txt
=================================== Sample 1 =================================== 

once there was a boy who liked to explore - even though he was small. every day he ' d struggle to see what he could find. he would pick small foods in his garden and visit his neighbours. one day, he noticed a delicious sauce on the food. he was so tempted to try it out! he struggled to eat it, not as usual, so he decided to buy a popular pasta with sauce. he shared it with all his friends who enjoyed their sauce. but, he refused when he urged them to keep buying. in consideration, someone called out to him : " go ahead, try the sauce. what if the sauce means you won ' t have to really savor the taste? " the boy was happy that he couldn ' t resist the temptation to try the sauce. from then on, everyone in the neighbourhood remembered the popular sauce and dreamed of becoming a more daring sauce expert.

=================================== Sample 2 =================================== 

john was a very popular three year old boy in his town. he liked to play in his backyard in the grass. every day, he would imagine all the different things the world started to offer. one day, john looked up in the sky and came up with an idea. he got off his bike to learn the wisdom. he rode back to the meadow and told the wise owl about the wisdom. the owl said she said the world was full of special and sweet wisdom if he was patient. after a long day, the wisdom yielded a lot. john thanked the wise owl. she had worked hard and was able to learn something magical. john continued on his bike, soon out of the meadow. he saw a beautiful butterfly and smiled as attractive as she had seen him.

=================================== Sample 3 =================================== 

tim was playing in the park one day. he skipped around, shouting. it was a game but there were no toys around. he was feeling disappointed and soon saw a boy. the boy was a lot bigger than no other boy but no one ever seemed to understand. he recognized it from a playground. the boy asked if he wanted to join him. tim hesitated at first for a few moments. but he felt a bit scared, so he decided to follow the boy. he started to climb up the ladder at first but as he went higher and higher he felt helpless. the boy saw tim and said he was looking for help. tim was gentle and asked to try to lift him up the ladder but he was very worried. after a while the boy said goodbye to tim and ran off. tim felt sad that he had to go home without to help a friend. he regrets climbing the ladder.

=================================== Sample 4 =================================== 

timmy and lily are friends. they like to play together. one day, they see a lady on a bench. she has a big bag and a red stick. the candy is sweet and bitter. lily wants to eat candy too. she reaches into her bag and takes the candy. " no, lily! " timmy says. " that is my candy. you can ' t have it. i want this candy. it is a candy for you. " lily did not listen. she does not want to share her candy. she tries to part with her candy with the stick. but the stick is stuck. it stays loose and hurts her teeth. " ow! " lily cries. " you broke my stick! " timmy laughs. he drops the stick and pushes lily away. " no, thank you! " lily says. " you can ' t have my candy. go away now. " tom does not find another stick. he did not know how to have it. he bites the stick and pulls it out. " look, lily, i am eating candy! " tom says. " am i eating kite? " lily looks at tom. she saw what tom was doing. she feels silly and scared. she takes off her hat and her shoes and pants. " oops! sorry, tom, " timmy says. " i did not mean to scare you. i just wanted to bite the stick. maybe we can play something else. " " i know, tom, " lily says. " but i still like the stick. it is better. you can have another candy. " tom feels bad. he also likes lily. he thinks lily is generous. he gave lily a big, red apple. " here, lily, " he says. " this is a cake. it was not magic. it was a secret. " " thank you, lily, " tom says. " but next time, do not eat the stick. you are a good friend. can i help you? " lily smiles. she is glad that tom could help. she was generous. " okay, lily, " tom says. " i will share the stick. we can play together. " they play with a ball, a kite, a book and other toys. they have fun with the ball. they are happy.

=================================== Sample 5 =================================== 

once upon a time there was a boy named jack. he was worried because he had no worries with him, every day him would carry jack away for a day at work the first time to eat a yummy lunch. one day jack saw some of his mum outside. they were so kind and helpful to all of them. they did not put a heavy box at school or they brought any more delicious treats. jack told his mum about this because he went to work looking for something else to do. jack soon had an idea : he bought a lunchbox and place all of his mum ' s lunch. he carried each piece, and then drive home. his mum was so relieved and smiled. jack was so relieved and happy. from that day on, jack always carried his lunch with him everyday, and people never forgot to go home without a worried look.

=================================== Sample 6 =================================== 

once there was a dog named spot. spot had a big toy that was very hairy. he loved to play with it and hug him. every day spot would sign when he would get a slapped race. one time, spot was ready to show his signa€ ” black pawch. he wanted to show his friends how fast he could sign? when he signed with his hand, his friends saw him close by. they liked the show. the show was very funny. when spot managed to sign his name, his friends would even make clapping. in the end, spot and his friends all got slap spots! they are now the best! all day long, spot is signing, and even his hand feels good. everyone around him is happy every week. because of the day, spot brought a prize for his slapch with him.

=================================== Sample 7 =================================== 

once upon a time there was a gorilla. he was big and very strong. he stored food in a crack. the gorilla enjoyed the food and enjoyed it. one day, a little boy was watching the gorilla by himself. he got stuck in a crack. he felt scared and demanded the gorilla some help from it. he knew what to do. he asked the penguin for help. the penguin grabbed a big rock and swam over to the crack. the gorilla worked very hard with his help from the penguin. the little boy was so happy. he ate his food and the penguin thanked him for the sweet treat.

=================================== Sample 8 =================================== 

once upon a time, there was a little girl named lily who loved to play with her ball. every day, she would go to the park and send her ball to him inside a loud song. one day, lily saw a mean boy named max playing his guitar. he was playing his song and was not friendly. lily wanted to help max, so she played with him and made the music beautiful. max became friendly and welcomed lily with her ball. they had a great time together and at the end of the musical song, max ended nicely and said goodbye. lily felt sad that he was not sharing nicely. lily went home and used a plan. she brought out her guitar and spoke to max. she told him that he wasn ' t nice and allowed her to hold the guitar. max was happy and grateful. from that day on, lily and max didn ' t matter what friends tried from each other.

=================================== Sample 9 =================================== 

once upon a time, there was a messy little boy named timmy. timmy liked to play with his toys and do chores. one day, timmy ' s stomach felt really hungry. his tummy was growling, and his food was all over the floor. his mom saw that timmy had eaten all his food, and she knew the bad smell came from lunch. timmy ate it, but she found it was disgusting and looked at the napkins on the floor. she said, " timmy, napkins are not good to eat. " she scoldolded timmy and asked him again before eating his meals. she said, " you don ' t know you would have gotten sick, though. " timmy said, " i ' m sorry, i won ' t eat them next time. " he put his hands on the napkins and threw them away so his clothes were safe from harm. from then on, timmy made sure to listen to his mom and eat his napkins before she went to eating. he never wanted to eat his napkins again. the end.

=================================== Sample 10 =================================== 

anna liked to play with the sack. she liked to pull things out and see what she could find. she asked her dad sometimes. he explained that she was anything she could find. one day, anna found the sack in the field. she looked inside and saw her friend, lily, who was playing with a dog. she opened the sack and saw many shiny things. anna was happy. she wanted to explore more. but then, she saw a big black cat. the cat was sitting on the edge of a bush. it had dirty eyes. anna did not know what it was. she thought, " maybe it is the cat. i can make it happy. " she went to the shiny thing and stroked it gently. the cat did not go away. it hissed. it said, " no, anna, what are you doing? go out! let ' s stay behind the tree. " anna did not listen. she stepped closer to the cat. she reached her hand out. she pulled its head. the shiny thing moved. it made a ring. it came alive with a band. it moved and snapped. anna heard the snap. she screamed. she ran back. the cat bit her. it hurt her finger. she could not hold it. she cried, " ow, ow, ow, ow, it hurts! it hurts, it hurts! " anna ran to the field. she saw the fence and the other people. she saw the cat too. she heard the people shouting and running. she saw the cat in the tree. she ran to get help. she looked for lily. she saw her mom walking in the car. she saw lily ' s finger. she felt sorry for her. she gave her ring to her. she went to lily and said, " i ' m sorry, mom. i was wrong. lily was naughty. she tried to grab lily ' s ring. " her mom smiled and hugged lily. she said, " it ' s okay, anna, it ' s okay. it doesn ' t hurt. but it ' s not your fault. you should have left the sack alone. " anna was still sad. she said, " i ' m sorry, mom. i love you, lily. " she hugged lily and learned her lesson. she never came back. she hugged lily and went home. she left the cat alone. where lily was sad. she did not harm it.

=================================== Sample 11 =================================== 

once upon a time, there was a little girl named lily. she loved to play with her toys and pet dog, max. one day, she went to the park with her mom and saw a leopard flying from its back to its home. she excitedly asked her mom, " what is that, mommy? " her mom said to lily, " this is polly. this leopard is very cheap, so if it misses its home, i can take it to you. " lily listened carefully and said, " thank you, mommy. polly is so kind. can i recommend it to a party for you at the park? " later that week, lily and max watched the leopard and when it arrived, the leopard landed on lily ' s fur. she was so excited! she learned that the leopard had a modest quality on its home. later that day, lily forgot about the leopard and went to bed for a nap. she fell asleep with max, just like the modest parrot. when she woke up, she thanked max for helping her.

=================================== Sample 12 =================================== 

once upon a time there was an ancient, grey kangaroo. he lived in a sunny spot near a pond. one night he heard a magical sound. it was a voice coming from the sky. startled, the kangaroo called out. he didn ' t know what to do. but then he noticed a small, red balloon glimmering in the air. he curiously approached the alien and it turned into a truck blocking his direction. it was carrying an old gas truck and it was perfectly heavy. " what is this? " the voice replied. the gasoline truck quickly zaneded the balloon and released it into the sky. the kangaroo almost entered an ancient castle and it was even more beautiful than than he had ever been there before. suddenly the gas truck spoke with a hop and the voice said, " this is a gas castle! you can ' t come back! " the kangaroo was very scared, but he had to figure out the alien ' s names. he ran higher and higher, and the dragon slowly started to slowly spring until it was gone. the kangaroo thought it was gone. the next morning, the ancient kangaroo returned to where the alien had returned. he looked back, not clear with fear - he was even more relieved to see that the magical creature had given him back. he had emerged in the right circle!

=================================== Sample 13 =================================== 

once upon a time, there was a little rabbit named benny. benny loved to hop around the forest and play with his friends. one day, benny met a friendly rabbit named rosie. benny introduced rosie to his new friend. " what ' s that? " he asked. " that ' s a toy staff, " said rosie, " it ' s a loud sound that makes me reverse. " benny didn ' t understand what that meant, but rosie continued. benny asked rosie if he wanted to teach rosie her name staff. rosie said, " i don ' t know it, a toy staff means my number 10.... " benny smiled and said, " okay, let ' s count to ten. " benny and rosie learned how to reverse from ten over half cards. they had so much fun counting and learning, they played with the toy staff all day long. when benny and rosie grew up playing, they had so much fun together and were able to play with their new name toy staff.

=================================== Sample 14 =================================== 

once upon a time there was a little boy named john. john had a grain wheat mill, who worked there. one day john went to the mill to make flour. he picked out some flour and put the wheat in the corn mill. as he ran over to his house, the flour slowly churned and added it to a slice of the mill. john smiled at what he had done. when he was finished when he rolled out his door, he met another boy called bob. bob said, " would you want to play with me? ". john smiled and said, " yes! " so they had fun playing and running around the mill. when john was done, bob knew exactly how bob had done it. he held up the grain and carried them both hands. he said, " we have to take care of our wheat and be balanced for what we need ". john was balanced with excitement. he welcomed him, gave him a pat on the shoulder of my mill and was extremely pleased. bob laughed, and told john that day he and the wheat were friends forever.

=================================== Sample 15 =================================== 

ben and mia like to play on the navy boat on the sea. they pretend they are divers and fight sharks in space. one day, they go to a big island with their house. it is very pretty and green. it has a long sail, a flag, and a blue door. they open the door and see a lot of water in the water. they lower the boat and look over the boat. " hello, many ships! " mia says. " maybe they are pirate? " ben wonders. " maybe they are treasure, " mia says. " or other ships. we are swimming and looking for something to look for. " they see a big ship in the water. it is red and yellow and has a sword. " maybe it ' s a land from here, " ben says. " what ' s a port? " mia says. " maybe it ' s a pirate! " " who is the treasure? " ben says. he dived close and lands on the ship at mia. he lands on mia next and puts on a scarf. she smiles and laughs. " did you hear the sound of the treasure? " ben asks. " one, two, 3, blast off! " mia says. she smiles and jumps high. she sees the sky, the sun, a shark, a butterfly, a shark, and a duck. they jump off the ship and run to the shore. they call their mom and dad. " mom, look what we found! " ben says. " we are in space! " mia says. she hugs their mom and dad. they tell their parents about their adventure. " we dive in the air and dive with a ship, " ben says. " we are great and brave explorers! "

=================================== Sample 16 =================================== 

once upon a time there was a young boy named sam. sam was very happy because he had a nice bench. every day he used it to sit in the park. one day, he was sitting in the park on the bench while a boy came and slapped his hand. " ouch " the boy said the boy. sam looked up in surprise and started to cry. the kind boy said, " i have an idea to help the young boy. you can use your cane to make hi! it ' s all okay ". sam wiped up his tears and looked at the young boy in the bench. it was a kind surprise that the boy had seen him before. he said, " you ' re not as good as nice as the bench you like and why did who slap my hand? ". the young boy smiled and told sam that hea€™s quarrel with a friend. he said, " let ' s write down my bench and draw it. it looks like a special book ". the young boy smiled and said, " yes, let ' s draw on the bench ". sam and his friend spent lots of time playing and talking. they wrote and drew the bench and gave it a hug. sam and the boy enjoyed the bench together for many years went by and their canes were over, so they were happy.

=================================== Sample 17 =================================== 

once there were two friends, daniel and norman. they were mighty friends, who loved to play baseball. they would practice their best in every golf game. even on this competitive day, norman still wanted to score a goal without not being four goals as rough as norman. one day, daniel sneaked into norman ' s lead up getting his best grades. johnnie said, " let ' s invite our team to try it out. in the starting game! " the team was excited, they called the veterinarian. norman dashed to the chess room and daniel asked for a tough stick. johnnie took out a tough stick and finally swung it against his friend in the second seat. they both cheered as norman was proud that they could continue on their physical soccer game. at last, they were able to score successfully.

=================================== Sample 18 =================================== 

once upon a time, there was a little boy named timmy. timmy had a toy weapon, a brown sword. one day, timmy ' s mom came in and asked him to stop playing with his sword. but timmy didn ' t let go of his weapon and wanted to show her where he found it. " timmy, your weapon is on your bed, " his mom said mad, but timmy didn ' t listen. later, timmy ' s mom asked him to clean up my room and put it away. " mom said i will shoot away from your car, " she replied. timmy did a great job cleaning up the room with his dolls and his weapon was very messy. but when he came back, his sister came in and saw that all of his toys were gone. timmy couldn ' t find his toy anywhere. he looked in the closet, in the closet, and even in the closet. his sister searched everywhere, but she couldn ' t find it. timmy felt sad because he didn ' t want to make her worry. finally, he told his sister that he did not think he was stupid of any extra toys. his sister decided to find his toy and they searched the treasure together. the sword was found at its spot and timmy was glad no one could spot it.

=================================== Sample 19 =================================== 

once upon a time there was a modern woman. she had a very important job and she was very careful with it. each day she looked at her job, as it was a great job. then one day, the woman heard a loud noise. she looked around, looking for help. she started to look out there. suddenly, she saw a little girl running towards a 3 year old policeman. she was running and rushing to avoid her. the policeman stopped and said, " are you alright? i ' m so scared and lost. " the elderly woman went to find the cop and saw how brave the little girl was. he asked her if she wanted to help. the little girl told her yes and the cop started hunting. but the little girl bit her and threw everything away. in the end, the woman ' s heart was left increased and into distress. the moral of the story is that we must always be careful when we are hunting, especially when someone is about three years old and the little girl needed help. if we argued with someone, or else remains in our life, we can start causing consequences. this story requires a serious ending, figuring out those who are looking out for help and can cause trouble.

=================================== Sample 20 =================================== 

once upon a time, lily and max were best friends. one day, they found a delicate flower in the grass. it was very delicate with lots of spiky petals. max loved flowers and wanted to hold it. but the flower was too big for him and max couldn ' t hold it very much. but then, they found a snapwig and the flower fell and landed in their hands. it burst in two! lily was so happy that max found a beautiful flower they could rectangle. they decided to build a castle in the castle. lily looked for pretty stones as they found, but even though it was hard for them to find that stones would be making the castle stronger. so, they promised to keep the castle strong and delicate from one day on. they played together every day, and from that day on, the delicate flower made its castle strong and attractive.

%% file: app/inter.txt
==================================== Step 0 ====================================

[CLS] 

==================================== Step 1 ====================================

[CLS] lucy

==================================== Step 2 ====================================

[CLS], lucy her

==================================== Step 3 ====================================

[CLS], lucy very her.

==================================== Step 4 ====================================

[CLS], lucy lucy very her.

==================================== Step 5 ====================================

[CLS], lucy smile lucy, very her.

==================================== Step 6 ====================================

[CLS], lucy. smile lucy she, very her.

==================================== Step 7 ====================================

[CLS], lucy. smile lucy she, very her.

==================================== Step 8 ====================================

[CLS], lucy. smile. lucy she and, very her.

==================================== Step 9 ====================================

[CLS], lucy. it smile. lucy she and, very her.

==================================== Step 10 ====================================

[CLS], lucy. it smile. lucy she and, very after her.

==================================== Step 11 ====================================

[CLS], lucy. it smile. lucy she and, very after her. her

==================================== Step 12 ====================================

[CLS], lucy.. it smile. lucy she and, very after her. park her

==================================== Step 13 ====================================

[CLS], lucy.. it smile. lucy she and, very after her. park her.

==================================== Step 14 ====================================

[CLS], lucy. a. it smile. lucy she and, very pretty after her. park her.

==================================== Step 15 ====================================

[CLS], lucy. a. it smile. lucy she and farm, very pretty after her. park her.

==================================== Step 16 ====================================

[CLS], lucy. a. it smile. lucy she the and farm farm, very pretty all after finished her. park her.

==================================== Step 17 ====================================

[CLS], lucy. a. it smile. lucy to she the and farm farm,. very pretty playing all after finished her. park her.

==================================== Step 18 ====================================

[CLS], there lucy. a. it smile. lucy to she the and farm farm,. very pretty playing all after finished her. park her.

==================================== Step 19 ====================================

[CLS], there lucy. a. it smile. lucy to she the and farm farm,. onion very pretty playing all after finished her. to park her.

==================================== Step 20 ====================================

[CLS], there lucy. a. it and smile. lucy to. she the and farm farm,. onion very pretty playing all after finished her. to park her mom.

==================================== Step 21 ====================================

[CLS], there lucy. a. it and smile. lucy to. she the and farm farm,. onion very pretty playing all after finished her. to park her mom more.

==================================== Step 22 ====================================

[CLS], there lucy. a. it and smile. lucy to. she saw the and farm farm,. onion very pretty playing all after finished her. to park her mom the more fun.

==================================== Step 23 ====================================

[CLS] a, there lucy. a. it and smile. lucy the to. she saw the and farm farm,. onion very pretty playing all after finished her. to park her mom the more fun.

==================================== Step 24 ====================================

[CLS] a, there lucy. a. it and smile. lucy the to. she saw the and farm farm,. onion very pretty playing all after finished her. to park her mom the more fun.

==================================== Step 25 ====================================

[CLS] a, there lucy. a. it and smile. lucy the to. she saw the and farm farm,. onion very pretty playing all after finished her. to park her mom the more fun.

==================================== Step 26 ====================================

[CLS] a, there lucy. a. it and smile. lucy the to. she saw the and farm farm,. onion very pretty playing all after finished her. to park her mom the more fun.

==================================== Step 27 ====================================

[CLS] a, there lucy. a. it and smile. lucy the park to. she saw the and farm farm,. onion very pretty playing all after finished, her. to park her mom the they more fun.

==================================== Step 28 ====================================

[CLS] a, there lucy. a. it and smile. lucy the park to. she saw the and farm farm,. onion very pretty playing all after finished, her. to park her mom the they more fun.

==================================== Step 29 ====================================

[CLS] a, there was lucy. she a. it and smile. lucy the park to. she saw the and farm farm,. onion very pretty playing all after finished, her. to park with her mom the they more fun.

==================================== Step 30 ====================================

[CLS] a, there was lucy. she a. it attractive and smile. lucy the park to. she saw the and farm farm,. onion very pretty playing all after finished, her. to park with her mom the they more fun.

==================================== Step 31 ====================================

[CLS] a, there was lucy. she a. it attractive and smile. lucy the park to. she saw the and farm farm,. onion very pretty playing all after finished, her. very to park with her mom the they more fun the.

==================================== Step 32 ====================================

[CLS] a, there was lucy. she a. it attractive and smile. lucy the park to. she saw the and farm farm,. onion very pretty playing all after finished, her. very to park with her mom the they more fun the.

==================================== Step 33 ====================================

[CLS] a, there was lucy. she a. it attractive and smile. lucy the park to. she saw the and farm farm,. onion very pretty. playing all after finished, her. very to park with her mom the they more fun the.

==================================== Step 34 ====================================

[CLS] a, there was lucy. she a. it attractive and smile. lucy the park to. she saw the and farm farm,. onion very pretty. playing all after finished, her. was very to the park with her mom the they more fun the.

==================================== Step 35 ====================================

[CLS] a time, there was lucy. she a. it was attractive and smile. lucy the park to. she saw the trees and farm farm,. onion very pretty. playing all after finished, her. was very to have the park with her mom the they more fun the.

==================================== Step 36 ====================================

[CLS] a time, there was lucy. she a. it was attractive and smile. lucy the park to. she saw the trees and farm farm,. onion very pretty. playing all after finished, her. was very to have the park with her mom the they more fun the.

==================================== Step 37 ====================================

[CLS] a time, there was lucy. she a. it was attractive and smile. lucy the park to. she saw flowers the trees and the farm farm,. onion very pretty. playing all after finished, her. was very to have the park with her mom the they more fun the.

==================================== Step 38 ====================================

[CLS] a time, there was lucy. she a. it was attractive and smile. day lucy the park to. she saw flowers the trees and the farm farm, onions. onion very pretty. playing all after finished, went her. was very to have the park with her mom the onion they more fun the.

==================================== Step 39 ====================================

[CLS] a time, there was lucy. she a. it was attractive and made smile. day lucy the park to. she saw flowers the trees and the farm farm, onions. onion very pretty. playing all after finished, went her. was very to have the park with her mom the onion they more fun the.

==================================== Step 40 ====================================

[CLS] a time, there was lucy. she a. it was very attractive and made smile. day lucy the park to. she saw flowers the trees and the farm farm, onions. onion very pretty. playing all after finished, went her. was very to have the park with her mom the onion they more fun the.

==================================== Step 41 ====================================

[CLS] a time, there was named lucy. she a. it was very attractive and made smile. day lucy the park to. she saw flowers, the trees and the farm farm, onions. onion very pretty. playing all after finished, went her. was very to have the park with her mom the onion they more fun the.

==================================== Step 42 ====================================

[CLS] a time, there was named lucy. she a. it was very attractive and made smile. day lucy the park to. she saw flowers, the trees, and the farm farm, onions. onion very pretty. playing, all after finished, went her. was very to have the park with her mom the onion they more fun at the.

==================================== Step 43 ====================================

[CLS] a time, there was named lucy. she a. it was very attractive and made smile. day lucy the park to. she saw flowers, the trees, and the farm farm, onions. tomatoes onion very pretty. playing, all after finished, went her. was very to have the park with her mom the onion they planned more fun at the.

==================================== Step 44 ====================================

[CLS] a time, there was named lucy. she a. it was very attractive and made smile. day lucy went the park to. she saw flowers, the trees, and the farm farm, onions. tomatoes onion were very pretty. playing, all after finished, went her mom. she was very to have the park with her mom the onion they planned more fun at the.

==================================== Step 45 ====================================

[CLS] a time, there was named lucy. she a. it was very attractive and made smile. day lucy went the park to. she saw flowers, the trees, and the farm farm, onions. orange tomatoes onion were very pretty. playing, watered all after finished, went her mom. she was very to have the park with her mom the onion they planned more fun at the.

==================================== Step 46 ====================================

[CLS] a time, there was named lucy. she a. it was very attractive and made smile. day lucy went the park to. she saw flowers, the trees, and the farm farm, onions. orange tomatoes onion were very pretty. playing, watered all after finished, went her mom. she was very to have the park with her mom the onion they planned more fun at the.

==================================== Step 47 ====================================

[CLS] a time, there was named lucy. she a. it was very attractive and made smile. day lucy went the park to. she saw flowers, the trees, and the farm farm, onions. orange tomatoes and onion were very pretty. playing, watered all after finished planting, went her mom. she was very to have seen the park with her mom and the onion they planned more fun at the.

==================================== Step 48 ====================================

[CLS] a time, there was named lucy. she a. it was very attractive and made smile. day lucy went the park to. she saw flowers, the trees, and the farm farm, onions. orange tomatoes and onion were very pretty. playing, watered all after finished planting, went to her mom. she was very to have seen the park with her mom and the onion they planned more fun at the.

==================================== Step 49 ====================================

[CLS] a time, there was named lucy. she a dress. it was very attractive and made smile. day lucy went the park to. she saw flowers, the trees, and the farm farm, lucy onions. they orange tomatoes, and onion were very pretty. playing, watered all after finished planting, went to her mom. she was very to have seen the park with her mom and the onion they planned more fun at the.

==================================== Step 50 ====================================

[CLS] a time, there was named lucy. she a dress. it was very attractive and made smile. day lucy went the park to. she saw flowers, the trees, and the farm farm, lucy onions. they orange tomatoes, and onion were very pretty. playing, watered all after finished planting, went to her mom. she was very to have seen the park with her mom and the onion they planned more fun at the.

==================================== Step 51 ====================================

[CLS] once a time, there was named lucy. she a dress. it was very attractive and made smile. day lucy went to the park to. she saw flowers, the trees, and the farm farm, lucy saw onions. they orange tomatoes, and onion. were very pretty. playing, watered all after finished planting, went to her mom. she was very to have seen the park with her mom and the onion they planned more fun at the day.

==================================== Step 52 ====================================

[CLS] once a time, there was named lucy. she a dress. it was very attractive and made her smile. day lucy went to the park to. she saw flowers, the trees, and the farm farm, lucy saw onions. they orange tomatoes, and onion. thought were very pretty. playing, watered all after lucy finished planting, went to her mom. she was very to have seen the park with her mom and the onion they planned more fun at the day.

==================================== Step 53 ====================================

[CLS] once a time, there was named lucy. she a dress. it was very attractive and made her smile. day lucy went to the park to. she saw flowers, the trees, and the farm farm, lucy saw onions. they orange tomatoes, and onion. thought were very pretty. playing, watered all vegetables after lucy finished planting, went to her mom. she was very happy to have seen the park with her mom and the onion they planned more fun at the day.

==================================== Step 54 ====================================

[CLS] once a time, there was little named lucy. she a dress. it was very attractive and made her smile. day lucy went to the park to. she saw flowers, the trees, and the farm farm, lucy saw onions. they orange tomatoes, and onion. thought were very pretty. after playing, watered all vegetables after lucy finished planting, went to her mom. she was very happy to have seen the park with her mom and the onion they planned more fun at the every day.

==================================== Step 55 ====================================

[CLS] once a time, there was little named lucy. she a dress. it was very attractive and made her smile. day, lucy went to the park to. she saw flowers, the trees, and the farm farm, lucy saw onions. they orange tomatoes, and onion. thought were very pretty. after playing, lucy watered all her vegetables after lucy finished planting, went to her mom. she was very happy to have seen the park with her mom and the onion they planned more fun at the park every day.

==================================== Step 56 ====================================

[CLS] once a time, there was little named lucy. she a dress. it was very attractive and made her smile. one day, lucy went to the park to. she saw flowers, the trees, and the farm farm, lucy saw onions. they orange tomatoes, and onion. thought they were very pretty. after playing, lucy watered all her vegetables after lucy finished planting, went to her mom. she was very happy to have seen the park with her mom and the onion they planned more fun at the park every day.

==================================== Step 57 ====================================

[CLS] once a time, there was a little named lucy. she a nice dress. it was very attractive and made her smile. one day, lucy went to the park to. she saw flowers, the trees, and the farm the farm, lucy saw onions. they orange tomatoes, and onion. thought they were very pretty. after playing, lucy watered all her vegetables after lucy finished planting, went to her mom. she was very happy to have seen the park with her mom and the onion they planned more fun at the park every day.

==================================== Step 58 ====================================

[CLS] once a time, there was a little named lucy. she a nice dress. it was very attractive and made her smile. one day, lucy went to the park to. she saw the flowers, the trees, and the farm the farm, lucy saw onions. they orange tomatoes, and onion. thought they were very pretty. after playing, lucy watered all her vegetables after lucy finished planting, went to her mom. she was very happy to have seen the park with her mom and the onion they planned more fun at the park every day.

==================================== Step 59 ====================================

[CLS] once upon a time, there was a little girl named lucy. she had a nice dress. it was very attractive and made her smile. one day, lucy went to the park to play. she saw the flowers, the trees, and the farm. the farm, lucy saw many onions. they orange tomatoes, and onion. thought they were very pretty. after playing, lucy watered all her vegetables after lucy finished planting, went to her mom. she was very happy to have seen the park with her mom and the onion they planned more fun days at the park every day.

==================================== Step 60 ====================================

[CLS] once upon a time, there was a little girl named lucy. she had a nice dress. it was very attractive and made her smile. one day, lucy went to the park to play. she saw the flowers, the trees, and the farm. the farm, lucy saw many onions. they orange tomatoes, and onion. she thought they were very pretty. after playing, lucy watered all her vegetables after lucy finished planting, went to her mom. she was very happy to have seen the park with her mom and the onion they planned more fun days at the park every day.

==================================== Step 61 ====================================

[CLS] once upon a time, there was a little girl named lucy. she had a nice dress. it was very attractive and made her smile. one day, lucy went to the park to play. she saw the flowers, the trees, and the farm. at the farm, lucy saw many onions. they orange tomatoes, and orange onion. she thought they were very pretty. after playing, lucy watered all her vegetables after lucy finished planting, went to her mom. she was very happy to have seen the park with her mom and the onion. they planned more fun days at the park every day.

==================================== Step 62 ====================================

[CLS] once upon a time, there was a little girl named lucy. she had a nice dress. it was very attractive and made her smile. one day, lucy went to the park to play. she saw the flowers, the trees, and the farm. at the farm, lucy saw many onions. they were orange tomatoes, and orange onion. she thought they were very pretty. after playing, lucy watered all her vegetables after lucy finished planting, went to her mom. she was very happy to have seen the park with her mom and the onion. they planned more fun days at the park every day.

==================================== Step 63 ====================================

[CLS] once upon a time, there was a little girl named lucy. she had a nice dress. it was very attractive and made her smile. one day, lucy went to the park to play. she saw the flowers, the trees, and the farm. at the farm, lucy saw many onions. they were orange tomatoes, and orange onion. she thought they were very pretty. after playing, lucy watered all her vegetables. after lucy finished planting, went to her mom. she was very happy to have seen the park with her mom and the onion. they planned more fun days at the park every day.

==================================== Step 64 ====================================

[CLS] once upon a time, there was a little girl named lucy. she had a nice dress. it was very attractive and made her smile. one day, lucy went to the park to play. she saw the flowers, the trees, and the farm. at the farm, lucy saw many onions. they were orange tomatoes, and orange onion. she thought they were very pretty. after playing, lucy watered all her vegetables. after lucy finished planting, she went to her mom. she was very happy to have seen the park with her mom and the onion. they planned more fun days at the park every day.